\newtheorem{theorem}{Theorem}
\newtheorem{lemma}{Lemma}
\newtheorem{example}{Example}
\newtheorem{corollary}{Corollary}
\theoremstyle{definition}
\newtheorem{remark}{Remark}
\newtheorem{definition}{Definition}
\newcommand{\A}{\mathcal{A}}
\newcommand{\p}{\mathbb{P}}
\newcommand{\E}{\mathbb{E}}
\newcommand{\R}{\mathbb{R}}
\newcommand{\RF}{\mathsf{RF}_B}
\newcommand{\RFi}{\mathsf{RF}_B^{\backslash i}}
\newcommand{\tree}{\mathsf{TREE}}
\newcommand{\rfall}{\mathsf{rf}}
\newcommand{\rfi}{\mathsf{rf}^{\backslash i}}
\title{Stability of Random Forests and Coverage of Random-Forest Prediction Intervals}
\author{%
  Yan Wang \\
  Department of Mathematics\\
  Wayne State University\\
  Detroit, MI 48202 \\
  \texttt{wangyan@wayne.edu} \\
   \And
   Huaiqing Wu, Dan Nettleton \\
   Department of Statistics \\
   Iowa State University \\
   Ames, IA 50011\\
   \texttt{\{isuhwu,dnett\}@iastate.edu} \\
}
\begin{document}

\maketitle

\begin{abstract}
  We establish stability of random forests under the mild condition that the squared response ($Y^2$) does not have a heavy tail. In particular, our analysis holds for the practical version of random forests that is implemented in popular packages like \texttt{randomForest} in \texttt{R}. Empirical results show that stability may persist even beyond our assumption and hold for heavy-tailed $Y^2$. Using the stability property, we prove a non-asymptotic lower bound for the coverage probability of prediction intervals constructed from the out-of-bag error of random forests. With another mild condition that is typically satisfied when $Y$ is continuous, we also establish a complementary upper bound, which can be similarly established for the jackknife prediction interval constructed from an arbitrary stable algorithm. We also discuss the asymptotic coverage probability under assumptions weaker than those considered in previous literature. Our work implies that random forests, with its stability property, is an effective machine learning method that can provide not only satisfactory point prediction but also justified interval prediction at almost no extra computational cost.  
\end{abstract}

\section{Introduction}
Random forests (RFs) is a successful machine learning method that serves as a standard approach to tabular data analysis and has good predictive performance \cite{breiman2001random,biau2016random}. However, there is a big gap between the empirical effectiveness of RFs and the limited understanding of its properties. Most known theoretical results are established for variants of RFs not necessarily used in practice \cite{biau2012analysis,scornet2016random,lin2006random,denil2014narrowing,tang2018random}. For the RF version implemented in packages like \texttt{randomForest} in \texttt{R} \cite{liaw2002classification}, little is known without strong assumptions \cite{biau2008consistency,scornet2015consistency,zhang2019random}; RFs is notoriously difficult to analyze as a greedy algorithm. Here we show an important property for the RF used in practice (as well as for other variants) under realistic conditions.

\subsection{Stability of random forests}
The first main contribution of this work establishes the stability condition for the RF.
\begin{theorem}[Stability of random forests,  informal]\label{thm:RF_stability_informal}
    For independent and identically distributed (iid) training data points $(X_i,Y_i),i\in\{1,\ldots,n\}\equiv[n]$ and a test point $(X,Y)$, if the squared response $Y^2$ does not have a heavy tail, then the RF predictor $\RF$ and any out-of-bag (OOB) predictor $\RFi$ predict similar values, i.e.,
    \begin{align}\label{eq:informal}
        \p\left(\left| \RF(X) - \RFi(X) \right| > \varepsilon_{n,B}\right) \leq \nu_{n,B},
    \end{align}
    where $\RF$ results from the aggregation of all $B$ base tree predictors, while $\RFi$ only those with the point $(X_i,Y_i)$ excluded in training; $\varepsilon_{n,B}$ and $\nu_{n,B}$ are small numbers depending on $n$ and $B$.    
\end{theorem}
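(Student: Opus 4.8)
Fix the training data $\mathcal D_n=\{(X_i,Y_i)\}_{i=1}^n$ and the test point $(X,Y)$, and write $\tree(X;\mathcal D_n,W_b)$ for the $b$-th base tree, where $W_b$ encodes its bootstrap resample and feature/split randomization; the $W_b$ are i.i.d.\ given $\mathcal D_n$. Let $Z_b=\mathbb 1[(X_i,Y_i)\text{ is in the resample of tree }b]$, $\hat p_i=\sum_{b=1}^B Z_b$, and $\hat q_i=\hat p_i/B$. Decomposing the forest average over in-bag and out-of-bag trees gives the identity
\begin{equation}
  \RF(X)-\RFi(X)=\hat q_i\Big(\bar T_i^{\mathrm{in}}(X)-\bar T_i^{\mathrm{oob}}(X)\Big),\qquad
  \bar T_i^{\mathrm{in}}=\tfrac1{\hat p_i}\!\!\sum_{b:Z_b=1}\!\!\tree(X;\mathcal D_n,W_b),\ \ \bar T_i^{\mathrm{oob}}=\RFi(X),
\end{equation}
so that $|\RF(X)-\RFi(X)|\le|\bar T_i^{\mathrm{in}}(X)-\bar T_i^{\mathrm{oob}}(X)|$. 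The first step is to pass from $B$ trees to the infinite ensemble: conditionally on $(\mathcal D_n,X)$ the in-bag (resp.\ OOB) trees are i.i.d.\ averages, and since a tree prediction is a convex combination of the $Y_j$'s we have $|\tree(X;\mathcal D_n,W_b)|\le\max_j|Y_j|$. On the event $\{\max_j Y_j^2\le M_n\}$---which has probability $\ge 1-o(1)$ for a suitable slowly growing $M_n$ precisely because $Y^2$ is not heavy-tailed---a truncation-plus-Bernstein argument gives $\bar T_i^{\mathrm{in}}(X)\approx\E_{W}[\tree(X;\mathcal D_n,W)\mid Z=1]$ and $\bar T_i^{\mathrm{oob}}(X)\approx\rfi(X)$ up to $O(\sqrt{M_n\log(1/\delta)/B})$ with probability $\ge1-\delta$. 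Hence, up to this vanishing sampling error, it suffices to bound the infinite-ensemble gap $|\rfall(X)-\rfi(X)|$.

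The heart of the argument is therefore to show that $|\rfall(X)-\rfi(X)|$ is small with high probability over $(\mathcal D_n,X)$. Writing $q=\p(Z=1)=1-(1-1/n)^n$ and decomposing $\rfall$ over $\{Z=1\}$ and $\{Z=0\}$ yields
\begin{equation}
  \rfall(X)-\rfi(X)=\frac{\mathrm{Cov}_W\!\big(\mathbb 1[Z=1],\,\tree(X;\mathcal D_n,W)\big)}{1-q}
  \;=\;q\Big(\E_W[\tree(X;\mathcal D_n,W)\mid Z{=}1]-\E_W[\tree(X;\mathcal D_n,W)\mid Z{=}0]\Big),
\end{equation}
i.e.\ we must control the \emph{average} effect, over the tree randomization, of whether $(X_i,Y_i)$ appears in the resample on the prediction at $X$. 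The plan is: (i) couple a resample conditioned on $\{Z=1\}$ with one conditioned on $\{Z=0\}$ so that they differ only in the $O(1)$ (in expectation) draws that equal $i$, reducing the gap to the expected signed change in one tree's prediction at $X$ when a constant number of resampled points are replaced by copies of $(X_i,Y_i)$; (ii) exploit the symmetry of the $n$ i.i.d.\ data points---work with the averaged sensitivity $\frac1n\sum_{i=1}^n\E_{\mathcal D_n,X}[(\rfall(X)-\rfi(X))^2]$, which is the same for every $i$---and the fact that a fresh test point $X$ interacts with any fixed training point only locally, so that splitting the bound according to whether $X$ lands in the part of the tree actually influenced by $X_i$ yields, for a typical $i$, a contribution of order $1/n$ rather than $O(1)$; the uniform bound $\max_j Y_j^2\le M_n$ again keeps all the truncated quantities $O(\sqrt{M_n})$ so the pieces can be summed.

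\textbf{Main obstacle.} Step (ii) is the crux and the genuinely hard part: a single CART tree is \emph{not} a stable base learner---greedy recursive partitioning can restructure globally in response to one data point---so the pointwise difference $|\tree(X;\mathcal D_n,W)-\tree(X;\mathcal D_n,W')|$ for coupled resamples is $\Theta(1)$ and the naive bound on $|\rfall(X)-\rfi(X)|$ is the useless $O(q)=O(1)$. All of the gain must come from the fact that the \emph{aggregated, signed} influence of one training point on the prediction at a generic test point is small and that averaging over $i$ (and over $X$) damps it by a further factor $\asymp 1/n$; making this rigorous for the practical, data-dependent splitting rule---rather than for honest subsampling with subsample size $m\ll n$, where $O(m/n)$ stability is immediate---is the technical core behind the quantitative choice of $\varepsilon_{n,B}$ and $\nu_{n,B}$.

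\textbf{Assembly.} Combining the pieces, on $\{\max_j Y_j^2\le M_n\}$ the first step gives $|\RF(X)-\RFi(X)|\le|\rfall(X)-\rfi(X)|+O(\sqrt{M_n\log(1/\delta)/B})$ with probability $\ge1-\delta$, and the second step gives $|\rfall(X)-\rfi(X)|\le\varepsilon_n'$ off an event of probability $\nu_n'$, with $\varepsilon_n'\to0$; choosing $M_n,\delta$ to balance the terms produces $\varepsilon_{n,B}$ and $\nu_{n,B}$ of the stated small order. Only the ``convex combination of responses'' structure of $\tree$ and the exchangeability of the data are used, so the same scheme covers other RF variants (subsampling instead of bootstrap, random split thresholds, and so on).
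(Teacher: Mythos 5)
Your outer scaffolding matches the paper: pass from $B$ trees to the infinite ensemble by concentration on the event that $\max_j Y_j^2$ is controlled (the paper does this with a three-term triangle inequality, Hoeffding for $|\RF-\rfall|$ and $|\RFi-\rfi|$ with the Binomial OOB count handled via its MGF, and a tail bound $\p(Z_{(n)}^2>\lambda\E[Z_{(n)}^2])$ under a sub-gamma assumption on $Y^2$). The covariance identity you write for $\rfall(X)-\rfi(X)$ is also exactly the right starting point. But the step you yourself flag as ``the crux and the genuinely hard part'' --- bounding the infinite-ensemble gap --- is left unproved, and the route you sketch for it (coupling the $\{Z=1\}$ and $\{Z=0\}$ resamples and then arguing that a greedy tree's prediction at a generic test point is only \emph{locally} influenced by one training point, so that averaging over $i$ and $X$ gains a factor $1/n$) is not how the result is obtained, and would indeed be very hard to make rigorous for data-dependent CART splits. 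This is a genuine gap: the central inequality is missing.

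The paper closes that gap without any structural claim about trees. It invokes the bagging-stability theorem of Soloff et al.\ \cite{soloff2023bagging}: for \emph{any} base learner whose output lies in a (here data-dependent) interval of length $2Z_{(n)}$, one has
\begin{align*}
\frac{1}{n}\sum_{i=1}^n \mathbb{I}\left\{\left|\rfall(X)-\rfi(X)\right|>\varepsilon\right\}
\;\leq\; \frac{Z_{(n)}^2}{\varepsilon^2 n}\left(\frac{p}{1-p}+\frac{q}{(1-p)^2}\right),
\end{align*}
which follows from summing the squared covariances $\mathrm{Cov}_r(\mathbb{I}\{i\in r\},\tree)^2$ over $i$ and applying Cauchy--Schwarz against the covariance matrix of the inclusion indicators (diagonal $p(1-p)$, off-diagonal $-q$), together with $\mathrm{Var}(\tree)\leq Z_{(n)}^2$. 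The $1/n$ you are trying to extract from ``locality of the tree'' comes for free from this averaging-over-$i$ argument; no pointwise control of $|\tree(\cdot;W)-\tree(\cdot;W')|$ under coupled resamples is needed, and the fact that a single CART tree is unstable is irrelevant. (One consequence you should note: the resulting guarantee is of the on-average-over-$i$ type, converted to a statement about a fixed $i$ only after integrating over the i.i.d.\ data, which is exactly why the theorem is stated as out-of-sample stability rather than absolute stability.) Your proposal correctly identifies all the bounded-range ingredients but replaces the one step that actually works with an open problem.
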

This result is referred to as the stability of the RF because it indicates that no single training point is extremely important in determining $\RF$ in a probabilistic sense. Theorem \ref{thm:RF_stability_informal} relies on a recent important work that establishes the absolute stability (see below for a precise definition) of general bagged algorithms with bounded outputs \cite{soloff2023bagging}. We take advantage of the fact that the range of the RF output is conditionally dependent upon the maximal and minimal values of $Y$ in the training set, and then we show in theory that the stability property of the RF is possible even if $Y$ is marginally unbounded. To our knowledge, this is the first stability result established for the RF. 

The technique used in our analysis requires that $Y^2$ not have a heavy tail (to make $\varepsilon_{n,B}$ and $\nu_{n,B}$ small). Though arguably already mild, we conjecture that this condition might be further relaxed. As shown below, numerical evidence suggests that the light-tail assumption may not be necessary for RF stability, which could hold even when $Y$ follows a heavy-tail distribution like the Cauchy distribution. 

\subsection{Random-forest prediction intervals}
Stability is a crucial property of a learning algorithm. For example, stability has a deep connection with the generalization error of an algorithm \cite{bousquet2002stability,kontorovich2014concentration,maurer2021concentration}. Moreover, stability also turns out to be important in distribution-free predictive inference. In particular, an algorithm being stable justifies the jackknife prediction interval (PI), which otherwise has no coverage guarantee \cite{barber2021predictive}. 

In this work, we show that stability makes it possible to construct a PI with guaranteed coverage from the OOB error of the RF. The OOB error is defined as $R_i = |Y_i-\RFi(X_i)|$, $i\in[n]$. A main reason why such a PI is appealing is that $R_i$ can be obtained almost without extra effort. For example, a one-shot training using the \texttt{R} package \texttt{randomForest} gives us an RF predictor $\RF$ and all $n$ OOB predictions $\RFi(X_i)$. So, from the computational point of view, a convenient way to construct a PI for a test point $(X,Y)$ is of the form $``\RF(X)\pm \text{proper quantile of } \{R_i\}"$ \cite{johansson2014regression,zhang2019random}.

The second main contribution of this work constructs such PIs and theoretically proves, under mild conditions, the non-asymptotic lower and upper bounds for the coverage probability. \begin{theorem}[Coverage lower bound, informal]
Under the same assumptions as in Theorem \ref{thm:RF_stability_informal}, and for $\alpha\in(0,1)$ \footnote{When $\alpha\in(0,1/(n+1))$, we follow the convention that the $(n+1)$-th smallest $R_i$ is $\infty$.}, we have the following lower bound of coverage probability:
    \begin{align*}
        \p\left(
        |Y-\RF(X)|\leq \text{the $\lceil (n+1)(1-\alpha) \rceil$-th smallest $R_i$} + \varepsilon_{n,B}\right) \geq 1-\alpha-O(\sqrt{\nu_{n,B}}),
    \end{align*}
where $\lceil\cdot\rceil$ is the ceiling function. Big $O$ and other related notations are used in the usual way. 
\end{theorem}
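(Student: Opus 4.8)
The plan is to reduce the random-forest prediction interval to the jackknife(+) prediction interval, whose coverage is governed by the stability of the base algorithm via the argument of \cite{barber2021predictive}, and then to correct for the discrepancy between $\RF$ and $\RFi$ using the $\varepsilon_{n,B}$-slack in the interval together with the probability bound $\nu_{n,B}$ from Theorem \ref{thm:RF_stability_informal}. First I would recall the exchangeability structure: the augmented sample $(X_1,Y_1),\ldots,(X_n,Y_n),(X,Y)$ is i.i.d., hence exchangeable, so the residuals $R_i=|Y_i-\RFi(X_i)|$ for $i\in[n]$ together with the ``test residual'' $R_{n+1}=|Y-\RF^{\backslash(n+1)}(X)|$ (the out-of-bag-style predictor trained with the test point held out of each tree, symmetric in the $n+1$ points) are exchangeable. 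A standard rank/quantile argument then gives that $R_{n+1}$ exceeds the $\lceil(n+1)(1-\alpha)\rceil$-th smallest of $R_1,\ldots,R_n$ with probability at most $\alpha$; equivalently $|Y-\RF^{\backslash(n+1)}(X)|\le \widehat q_{n,\alpha}$ with probability at least $1-\alpha$, where $\widehat q_{n,\alpha}$ denotes that order statistic.

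The next step is to pass from the symmetric leave-one-out predictor $\RF^{\backslash(n+1)}(X)$ to the full predictor $\RF(X)$ that the practitioner actually uses. By Theorem \ref{thm:RF_stability_informal} applied with the roles of points arranged so that $(X,Y)$ plays the role of the excluded point, $\p(|\RF(X)-\RF^{\backslash(n+1)}(X)|>\varepsilon_{n,B})\le\nu_{n,B}$. On the complementary event we have $|Y-\RF(X)|\le|Y-\RF^{\backslash(n+1)}(X)|+\varepsilon_{n,B}\le \widehat q_{n,\alpha}+\varepsilon_{n,B}$, so a union bound already yields a lower bound of the form $1-\alpha-\nu_{n,B}$. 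To obtain the claimed $1-\alpha-O(\sqrt{\nu_{n,B}})$ with the \emph{same} quantile $\widehat q_{n,\alpha}$ computed from the genuinely usable residuals $R_i=|Y_i-\RFi(X_i)|$ (rather than a leave-two-out variant), I would additionally control the gap between each $\RFi(X_i)$ and its symmetric counterpart $\RF^{\backslash i,\backslash(n+1)}(X_i)$: stability gives $\p(|\RFi(X_i)-\RF^{\backslash i,\backslash(n+1)}(X_i)|>\varepsilon_{n,B})\le\nu_{n,B}$ for each $i$, but naively union-bounding over all $n$ indices costs $n\nu_{n,B}$, which need not be small. The standard fix (as in \cite{barber2021predictive}) is to bound the \emph{expected number} of ``bad'' residual comparisons by $n\nu_{n,B}$, note that a bad comparison only shifts a residual by at most $\varepsilon_{n,B}$, and then use a Markov-type argument: the fraction of indices whose residual moves by more than $\varepsilon_{n,B}$ exceeds $\sqrt{\nu_{n,B}}$ with probability at most $\sqrt{\nu_{n,B}}$, and on the good event this perturbs the relevant order statistic by at most $\varepsilon_{n,B}$ (after adjusting the rank by an $O(\sqrt{\nu_{n,B}})$-fraction of $n$, absorbed into the $\alpha$ by standard quantile comparisons). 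Combining the two $O(\sqrt{\nu_{n,B}})$ losses and the single $\nu_{n,B}$ loss gives the stated bound.

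The main obstacle is this last step: getting from the per-point stability statement (good for a \emph{fixed} comparison) to a uniform-over-$i$ statement strong enough to compare order statistics, without paying a factor of $n$. This is exactly where the $\sqrt{\nu_{n,B}}$ in the conclusion comes from, and it requires carefully chaining the Markov argument on the count of bad indices with a monotonicity/sandwiching lemma relating the empirical quantile of one residual vector to that of a coordinatewise-close vector — including handling the rank shift $\lceil(n+1)(1-\alpha)\rceil\mapsto\lceil(n+1)(1-\alpha)\rceil\pm O(n\sqrt{\nu_{n,B}})$ and absorbing it into the $\alpha$ term. The exchangeability and union-bound pieces are routine; the quantitative quantile-perturbation bookkeeping is the part that needs care, and it is also where the implicit constant in the $O(\sqrt{\nu_{n,B}})$ is determined.
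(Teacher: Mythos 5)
Your overall strategy matches the paper's: augment the sample with the test point, use exchangeability of idealized leave-one-out residuals to get the base $1-\alpha$ rank bound, transfer to computable residuals via the Markov-on-the-count-of-bad-indices argument (the source of the $\sqrt{\nu}$) combined with a quantile-comparison lemma, and finish with a plain union bound to swap the idealized test-point predictor for $\RF(X)$. The one substantive imprecision is in your exchangeability step: as written you assert that the practitioner's residuals $R_i=|Y_i-\RFi(X_i)|$ (whose predictors never see the test point) are exchangeable with a test residual built on the augmented sample, which is false; the exchangeable family must consist of leave-one-out predictors trained symmetrically on all $n+1$ points (so for $i\le n$ they \emph{include} the test point), and for finite $B$ even that requires the bagging randomness to be symmetric in the $n+1$ points. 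The paper resolves this by pivoting through the derandomized RF: the exchangeable residuals are $\widetilde r_i=|Y_i-\widetilde{\rfall}^{\backslash i}(X_i)|$, deterministic symmetric functions of the remaining $n$ points, and the total discrepancy is then split into three separately controlled pieces --- derandomized stability for $\widetilde r_i$ versus $r_i=|Y_i-\rfi(X_i)|$ (cost $2\sqrt{\nu_2}$, inflation $\varepsilon_2$), finite-$B$ concentration for $r_i$ versus $R_i$ (cost $2\sqrt{\nu_3}$, inflation $\varepsilon_3$), and finite-$B$ concentration for $\rfall(X)$ versus $\RF(X)$ (cost $\nu_1$, inflation $\varepsilon_1$) --- rather than your single aggregate $(\varepsilon_{n,B},\nu_{n,B})$ comparison. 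Relatedly, your ``symmetric counterpart'' $\RF^{\backslash i,\backslash(n+1)}$ points the wrong way: the comparison actually needed is between the augmented-sample leave-one-out predictor (which contains the test point) and $\RFi$ (which does not), i.e., an add-one-point stability comparison, not a leave-two-out one. None of this changes the architecture of your argument, but the derandomization device is precisely what makes the exchangeability exact and lets the quantile-perturbation bookkeeping you correctly flag as the delicate step go through.
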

\begin{theorem}[Coverage upper bound, informal]
If we further assume that $Y$ is continuous, resulting in distinct prediction errors, then we also have the following upper bound: 
    \begin{align*}
        \p\left(
        |Y-\RF(X)|\leq \text{the $\lceil (n+1)(1-\alpha) \rceil$-th smallest $R_i$} - \varepsilon_{n,B}\right) \leq 1-\alpha+\frac{1}{n+1}+O(\sqrt{\nu_{n,B}}).
    \end{align*}
\end{theorem}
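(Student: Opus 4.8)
The plan is to run the classical exchangeability argument behind jackknife coverage on an augmented sample and then pay, via Theorem~\ref{thm:RF_stability_informal}, for the gap between the out-of-bag residuals and their symmetrized versions. \textbf{Step 1 (symmetrization).} Append the test point as $(X_{n+1},Y_{n+1})=(X,Y)$, so the $n+1$ points are iid; for $j\in[n+1]$ let $\widehat\mu_{-j}$ be the forest fit to all of them except the $j$-th and put $\widetilde R_j=|Y_j-\widehat\mu_{-j}(X_j)|$. Since the forest is a symmetric function of its training sample (the internal randomness being handled by conditioning on or averaging over the seeds), $(\widetilde R_1,\dots,\widetilde R_{n+1})$ is exchangeable; using the continuity assumption to exclude ties, $\p(\widetilde R_{n+1}\le\widetilde R_{(j)})=j/(n+1)$ for $j\le n$, where $\widetilde R_{(j)}$ is the $j$-th smallest of $\widetilde R_1,\dots,\widetilde R_n$. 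Moreover $\widehat\mu_{-(n+1)}=\RF$, so $\widetilde R_{n+1}=|Y-\RF(X)|$ exactly.

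\textbf{Step 2 (from $\widetilde R_i$ to $R_i$ without a union bound).} For $i\in[n]$ the quantities $\widetilde R_i$ and $R_i=|Y_i-\RFi(X_i)|$ differ only through the predictors $\widehat\mu_{-i}$ and $\RFi$, which agree up to $\varepsilon_{n,B}$ at $X_i$ with probability at least $1-\nu_{n,B}$ by the stability property, so $|\widetilde R_i-R_i|\le\varepsilon_{n,B}$ off an event of probability $\le\nu_{n,B}$. Let $\mathcal B\subseteq[n]$ be the random set of indices on which this fails; by symmetry $\E|\mathcal B|\le n\nu_{n,B}$, so Markov's inequality gives $\p(|\mathcal B|\ge m)\le\sqrt{\nu_{n,B}}$ with $m=\lceil n\sqrt{\nu_{n,B}}\rceil$. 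Writing $k=\lceil(n+1)(1-\alpha)\rceil$ and $R_{(k)}$ for the $k$-th smallest $R_i$, a short counting argument (using $\widetilde R_i\ge R_i-\varepsilon_{n,B}$ for the $\ge n-m$ indices outside $\mathcal B$ and distinctness of the $R_i$) shows that on $\{|\mathcal B|<m\}$ one has $R_{(k)}-\varepsilon_{n,B}\le\widetilde R_{(k+m)}$.

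\textbf{Step 3 (assembly).} On $\{|\mathcal B|<m\}$ the target event $\{|Y-\RF(X)|\le R_{(k)}-\varepsilon_{n,B}\}$ is contained in $\{\widetilde R_{n+1}\le\widetilde R_{(k+m)}\}$, so
\begin{align*}
\p\!\left(|Y-\RF(X)|\le R_{(k)}-\varepsilon_{n,B}\right)\ \le\ \p(|\mathcal B|\ge m)+\p\!\left(\widetilde R_{n+1}\le\widetilde R_{(k+m)}\right)\ \le\ \sqrt{\nu_{n,B}}+\frac{k+m}{n+1},
\end{align*}
and since $k\le(n+1)(1-\alpha)+1$ and $m\le n\sqrt{\nu_{n,B}}+1$ this is $1-\alpha+\tfrac1{n+1}+O(\sqrt{\nu_{n,B}})$. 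The same scheme yields the stated upper bound for the jackknife PI of any stable algorithm $\A$: there $\widehat\mu_{-i}$ and $\A^{\backslash i}$ differ by exactly one training point, so Step 2 applies directly.

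\textbf{Where the difficulty lies.} The crux is Step 2: a plain union bound over the $n$ residual perturbations would cost $n\nu_{n,B}$, so one must instead tolerate a small fraction of corrupted residuals and convert that corruption into a shift of the empirical-quantile index, which is exactly what turns $\nu_{n,B}$ into $O(\sqrt{\nu_{n,B}})$ and places the loss in the coverage \emph{level} rather than only in the interval width. A second, random-forest-specific subtlety is that ``removing a training point'' and ``keeping only out-of-bag trees'' are not the same operation on a bagged predictor, so the stability bound must be invoked in the form comparing $\RFi$ with the forest refit on a neighbouring sample; and the continuity assumption is genuinely used, since without it ties among the residuals would destroy the exact $j/(n+1)$ identity responsible for the $\tfrac1{n+1}$ term.
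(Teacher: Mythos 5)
Your overall architecture is the right one and matches the paper's: augment the sample with the test point, use exchangeability of the leave-one-out residuals to get the rank bound $1-\alpha+\frac{1}{n+1}$ under the no-ties assumption, and convert the per-residual failure probability $\nu$ into an $O(\sqrt{\nu})$ loss in the coverage level by tolerating a Markov-controlled fraction of corrupted residuals and shifting the empirical-quantile index. This last mechanism is exactly the paper's quantile-comparison lemma combined with Markov's inequality, and your identification of it as the crux is correct.

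There is, however, a genuine gap in Step 1. You assert simultaneously that $(\widetilde R_1,\ldots,\widetilde R_{n+1})$ is exchangeable and that $\widehat\mu_{-(n+1)}=\RF$, so that $\widetilde R_{n+1}=|Y-\RF(X)|$ \emph{exactly}. For the finite-$B$ practical forest these two claims are in tension. If you derandomize (average over seeds and bootstrap draws), the leave-one-out fits are symmetric functions of the data and the residuals are genuinely exchangeable, but then $\widehat\mu_{-(n+1)}$ is the derandomized predictor $\rfall$, not $\RF$, and you must pay a separate concentration-of-measure cost to pass from $|Y-\rfall(X)|$ to $|Y-\RF(X)|$; this is the paper's third approximation (the $(\varepsilon_1,\nu_1)$ pair), which your proof omits entirely. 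If instead you keep $B$ finite and condition on the seeds, exchangeability of the residuals is no longer automatic---this is precisely why J+aB needs a Binomial number of bags---and $\widehat\mu_{-i}$ for $i\le n$ is then a fresh $B$-tree forest, a third object distinct from both the derandomized fit and the OOB predictor $\RFi$. Relatedly, your Step 2 compresses into a single stability invocation what actually requires two separate approximations: $\widetilde R_i$ (derandomized, trained on the augmented sample minus $i$) versus $r_i=|Y_i-\rfi(X_i)|$, controlled by stability of the derandomized forest, and then $r_i$ versus $R_i=|Y_i-\RFi(X_i)|$, controlled by concentration of the $B_i$-tree average around its derandomized limit. Moreover, the stability statement actually available bounds $|\RF(X)-\RFi(X)|$ at an independent test point, and a chain of identifications exploiting the iid structure (swapping the roles of index $i$ and $n+1$) is needed before it applies to $|\widetilde R_i-r_i|$; your sketch assumes this for free. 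The counting and quantile-shift machinery in Steps 2--3 is sound, but as written the argument establishes the bound only for the derandomized forest, with an unquantified error wherever the finite-$B$ objects are silently substituted.
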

As we detail below, the PIs we provide coverage guarantees for are neither the jackknife-with-stability interval discussed in \cite{barber2021predictive}, nor the jackknife+-after-bootstrap interval established in \cite{kim2020predictive}. In our context, constructing the former needs $n$ leave-one-out (LOO) predictors (rather than $n$ OOB predictors), i.e., $n$ additional RFs with each built on a training set of size $n-1$. Constructing the latter needs the explicit information of each $\RFi(\cdot)$ rather than the OOB prediction $\RFi(X_i)$ for each $X_i$ only. Both these methods require additional, sometimes extensive, computation given current popular packages. In contrast, our results are operationally more convenient. After one-shot training, we obtain not only a point predictor $\RF(\cdot)$, but also a valid interval predictor at almost no extra cost. Under reasonable conditions, our results indicate that by slightly inflating (or deflating) the PI constructed from the $\lceil (n+1)(1-\alpha) \rceil$-th smallest $R_i$, the coverage probability is guaranteed not to decrease (or increase) too much from the desired level of $1-\alpha$. In fact, many numerical results, such as those in \cite{johansson2014regression,zhang2019random}, suggest that
\[
\p\left(
        |Y-\RF(X)|\leq \text{the $\lceil (n+1)(1-\alpha) \rceil$-th smallest $R_i$}\right) \approx 1-\alpha.
\]
Motivated by this fact, we further establish an asymptotic result of coverage for such PIs.
\begin{theorem}(Asymptotic coverage, informal)
    In addition to the conditions in the above theorems, also suppose the prediction error $|Y-\RF(X)|$ is continuous, and its cumulative distribution function (CDF) does not change too drastically for all sufficiently large $n$. Then 
    \[
        \p\left(
        |Y-\RF(X)|\leq \text{the $\lceil (n+1)(1-\alpha) \rceil$-th smallest $R_i$}\right) \to 1-\alpha \text{ as $n\to\infty$.}
    \]
\end{theorem}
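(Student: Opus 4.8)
The plan is to sandwich the quantity of interest between the two non-asymptotic bounds and show the slack vanishes. Throughout, write $W := |Y - \RF(X)|$ for the test prediction error and let $Q_\alpha$ denote the $\lceil (n+1)(1-\alpha)\rceil$-th smallest $R_i$ (finite once $n$ is large enough that $\lceil(n+1)(1-\alpha)\rceil \le n$). Because $\varepsilon_{n,B}\ge 0$, a trivial decomposition gives, from the informal coverage lower bound,
\[
\p(W \le Q_\alpha) = \p(W \le Q_\alpha + \varepsilon_{n,B}) - \p\big(Q_\alpha < W \le Q_\alpha + \varepsilon_{n,B}\big) \ge 1-\alpha - O\!\big(\sqrt{\nu_{n,B}}\big) - \delta_{n,B},
\]
and, from the informal coverage upper bound (the place the continuity of $Y$, hence distinctness of the $R_i$, is used),
\[
\p(W \le Q_\alpha) = \p(W \le Q_\alpha - \varepsilon_{n,B}) + \p\big(Q_\alpha - \varepsilon_{n,B} < W \le Q_\alpha\big) \le 1-\alpha + \tfrac{1}{n+1} + O\!\big(\sqrt{\nu_{n,B}}\big) + \delta_{n,B},
\]
where $\delta_{n,B}$ is any common upper bound on the two ``thin-strip'' probabilities, i.e.\ on the chance that $W$ lands in an interval of width $\varepsilon_{n,B}$ whose endpoint $Q_\alpha$ is itself data-dependent. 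It then remains to argue that $\nu_{n,B}$, $\varepsilon_{n,B}$, and $\delta_{n,B}$ all tend to $0$ along the sequence of sample/ensemble sizes under consideration.

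The first two follow from the \emph{quantitative} version of Theorem~\ref{thm:RF_stability_informal}: under the light-tail assumption on $Y^2$, $\varepsilon_{n,B}$ and $\nu_{n,B}$ are explicit and $\to 0$ provided $B = B_n$ grows at a suitable rate as $n\to\infty$; and $\tfrac{1}{n+1}\to 0$ for free. For $\delta_{n,B}$ I would condition on the training sample together with all internal randomness $\xi$ of the forest. Conditionally, $\RF$, every $\RFi$, every $R_i$, and hence $Q_\alpha$, are all fixed, while the test pair $(X,Y)$ stays independent of the conditioning, so $W$ retains a conditional CDF $F_n(\cdot\mid\text{data},\xi)$. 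Therefore
\[
\p\big(Q_\alpha < W \le Q_\alpha + \varepsilon_{n,B}\big) = \E\big[F_n(Q_\alpha + \varepsilon_{n,B}\mid\text{data},\xi) - F_n(Q_\alpha\mid\text{data},\xi)\big] \le \E\big[\omega_n(\varepsilon_{n,B})\big],
\]
and similarly for the other strip, where $\omega_n$ is a modulus of continuity of the conditional CDF. The informal hypothesis that the CDF of $|Y-\RF(X)|$ ``does not change too drastically for all sufficiently large $n$'' is precisely what I would formalize so as to make these conditional CDFs share, for all large $n$ and uniformly over the conditioning, one modulus of continuity $\omega$ with $\omega(t)\to 0$ as $t\to 0^+$ (for instance a uniform Lipschitz bound, i.e.\ a uniformly bounded conditional density for $W$; the continuity assumption guarantees $\omega$ is genuinely a modulus, i.e.\ $\omega(0)=0$). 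Then $\delta_{n,B}\le \omega(\varepsilon_{n,B})\to 0$, and combining the two displays gives $1-\alpha - o(1) \le \p(W\le Q_\alpha)\le 1-\alpha + o(1)$, i.e.\ $\p(W\le Q_\alpha)\to 1-\alpha$.

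The main obstacle is exactly the control of the thin-strip probability $\p(Q_\alpha < W \le Q_\alpha + \varepsilon_{n,B})$: since $Q_\alpha$ is a quantile of the OOB residuals and $W$ is the error of the \emph{same} forest on a fresh point, $Q_\alpha$ and $W$ are dependent, so mere marginal regularity of $W$ is not enough. Conditioning on the trained forest decouples them, but then one needs regularity of the conditional law of $W$ given the forest, uniformly over forests --- and turning the informal ``does not change too drastically'' into a statement strong enough to yield this yet weak enough to remain credible is the delicate point. A secondary, purely technical issue is rate coordination: $\varepsilon_{n,B}$ must shrink fast enough that $\omega(\varepsilon_{n,B})\to 0$ while $B_n$ must grow fast enough that $\nu_{n,B}\to 0$, which ties the admissible joint scaling of $n$ and $B$ to the explicit bounds in the quantitative stability theorem.
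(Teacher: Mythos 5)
Your proposal is correct and follows the same overall route as the paper: sandwich $\p(W\le Q_\alpha)$ between the two non-asymptotic bounds applied at $Q_\alpha\pm\varepsilon_{n,B}$, kill the residual ``thin-strip'' probabilities with a continuity hypothesis on the CDF of the prediction error, and invoke Corollary~\ref{cor:epsilon_nu_to_0} for $\varepsilon_{n,B},\nu_{n,B}\to0$. The one genuine divergence is in how the strip is controlled, and it is to your credit. The paper works with the \emph{marginal} CDF $F_n$ of $|Y-\RF(X)|$, writes $\p(W\le Q_\alpha+\varepsilon_{n,B})=\E[F_n(Q_\alpha+\varepsilon_{n,B})]$, assumes $\{F_n\}_{n\ge n_0}$ uniformly equicontinuous so that $F_n(Q_\alpha+\varepsilon_{n,B})-F_n(Q_\alpha)\to0$ almost surely, and finishes with bounded convergence. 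That identity quietly requires conditioning: $Q_\alpha$ and $W$ both depend on the training data and the forest randomness, so plugging the random argument $Q_\alpha$ into the marginal CDF is exactly the step you flag as the ``main obstacle.'' Your fix --- condition on $(D,\bm{\xi},\bm{r})$ so that $Q_\alpha$ is fixed and $W$ retains a conditional CDF, then assume a modulus of continuity uniform over the conditioning --- is the technically cleaner formalization of the paper's informal ``does not change too drastically,'' at the price of a slightly stronger (conditional rather than marginal) regularity hypothesis; it also lets you bound the strip deterministically by $\omega(\varepsilon_{n,B})$ and dispense with the a.s.-convergence-plus-dominated-convergence step. Either set of hypotheses delivers the theorem; yours makes the expectation identity unimpeachable, the paper's is nominally weaker but leans on the marginal-CDF substitution. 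Your remark on rate coordination between $B_n$ and $\varepsilon_{n,B}$ matches the role of Corollary~\ref{cor:epsilon_nu_to_0} in the paper.
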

In \cite{zhang2019random}, this asymptotic coverage was proved based on stronger assumptions. In particular, the true model is assumed to be additive such that $``Y=f_0(X)+\text{noise}"$ with the zero-mean noise independent of $X$, and $\RF(X)$ is assumed to converge to $f_0(X)$ in probability. We do not require $\RF$ to converge to anything in any sense when $n\to\infty$. Technically, we need the family of prediction error CDFs be uniformly equicontinuous. 

Based on our results, the RF seems to be the only one, among existing popular machine learning algorithms, that can provide both point and interval predictors with justification in such a convenient way. This makes the RF appealing, especially for tasks where the computational cost is a concern.    

It is also worth noting that the upper-bound result is of interest in its own right. It can be generalized to jackknife PIs that are constructed from any stable algorithm; the result serves as a complement to the lower bounds established previously \cite{barber2021predictive,kim2020predictive}. 

Summarizing, we 
\begin{itemize}
    \item theoretically prove that the (greedy) RF algorithm is stable when  $Y^2$ does not have a heavy tail;
    \item numerically show that RF stability may hold beyond the above light-tail assumption; 
    \item construct PIs based on the OOB error with finite-sample coverage guarantees: the lower bound of coverage does not need any additional assumption beyond stability; the upper bound needs an additional assumption, which is usually satisfied when $Y$ is continuous;
    \item provide the upper bound of coverage for jackknife PIs constructed from general stable algorithms, assuming distinct LOO errors; and
    \item prove asymptotically exact coverage for RF-based PIs under weaker assumptions than those previously considered in published work.

\end{itemize}

\section{Concepts of algorithmic stability}

Stability stands at the core of this work. There are different types of stability, each of which is used to assess quantitatively how stable (in some certain sense) an algorithm is with respect to small variations in training data \cite{bousquet2002stability,soloff2023bagging,bertsimas2022stable}. In a recent work \cite{bertsimas2022stable}, robust optimization is used to enhance the stability of algorithms in classification tasks. In \cite{soloff2023bagging}, bagging is proved to be an efficient mechanism to stabilize algorithms in regression tasks. We focus on regression here. As will be made clear, the technique used in this work relies on the fact that the RF predictor in regression results from averaging tree predictors. However, the majority vote of tree predictors is used in classification, and new ideas are needed to analyze the RF stability in this setting. For our purposes, we introduce three levels of stability from strongest to weakest. The strongest version of stability, introduced in \cite{soloff2023bagging}, does \emph{not} depend on the data distribution, and may be referred to as ``absolute stability.''
\begin{definition}[Absolute stability of algorithms]
For any dataset consisting of $n\geq 2$ training points $D=\{(X_1,Y_1),\ldots,(X_n,Y_n)\}$ and any test point $(X,Y)$, an algorithm $\A$ is defined to be $(\varepsilon,\delta)$-absolutely-stable if 
\begin{align*}
    \frac{1}{n}\sum_{i=1}^n \p_{\xi}\left(\left|\hat f (X) - \hat f^{- i}(X)\right|>\varepsilon\right) \leq \delta
\end{align*}
for some $\varepsilon,\delta\geq 0$, where $\xi$ denotes the possible innate randomness in the algorithm (such as the node splitting procedure in the RF) and can be seen as a random variable uniformly distributed in $[0,1]$, $\hat f=\A(D;\xi)$ is the predictor trained on $D$, and $\hat f^{- i}=\A(D^{- i};\xi)$ is the $i$th LOO predictor trained on $D^{- i}$, i.e., $D$ without the $i$th point $(X_i,Y_i)$. We might refer to the RF as both an algorithm (the learning procedure) and a predictor (the learned function) for simplicity.  
\end{definition}
Many bagged algorithms, in particular those with bounded predicted values, can achieve absolute stability with both $\varepsilon$ and $\delta$ converging to 0, as long as $n$ and the number of bags $B$ go to infinity. However, the predicted value of the RF is in general unbounded (for regression tasks considered in this work), and we are more interested in another type of stability, investigated in \cite{bousquet2002stability}, and called out-of-sample stability \cite[][]{barber2021predictive}. For simplicity, we name it ``stability.'' This notion of stability turns out to be important in validating a jackknife prediction interval.
\begin{definition}[Stability of algorithms]
For iid training and test data, algorithm $\A$ is $(\varepsilon,\delta)$-stable if
\begin{align*}
     \p_{D,X,\xi}\left(\left|\hat f (X) - \hat f^{- i}(X)\right|>\varepsilon\right) \leq \delta
\end{align*}   
for some $\varepsilon,\delta\geq 0$, where $D,X,\hat f, \hat f^{-i}$ are as defined above.
\end{definition}
We will establish this type of stability for the derandomized RF defined below, where the data-generating distribution is involved. To this end, we will use the methods in \cite{soloff2023bagging}, which aim to provide absolute stability for bagged algorithms. Technically, we use such methods to first establish the ``conditional stability'' of an algorithm with respect to given data.
\begin{definition}[Conditional stability of algorithms]
Conditional on $D$ and $X$, an algorithm $\A$ is defined to be $(\varepsilon,\delta)$-conditionally-stable if 
\begin{align*}
    \frac{1}{n}\sum_{i=1}^n \p_{\left. 
    \xi\middle| D,X \right.}
    \left(
    \left.
    \left|\hat f (X) - \hat f^{- i}(X)\right|>\varepsilon
    \middle | D,X
    \right.
    \right) \leq \delta
\end{align*}
for some $\varepsilon,\delta\geq 0$, where $D,X,\hat f, \hat f^{- i}$ are as defined above.   
\end{definition}
Once conditional stability is established for the derandomized RF algorithm, its stability can be consequently established by invoking 
\[
\p_{D,X,\xi}(\cdot)=\E_{D,X}\left[\p_{\xi|D,X}(\cdot|D,X)\right]. 
\]
Stability of the derandomized RF provides the most essential ingredient for that of the practical RF, although the latter involves another type of stability, known as ensemble stability \cite{kim2020predictive}. Ensemble stability justifies replacing the LOO predictor with the OOB predictor in (\ref{eq:informal}). We may abuse the term ``stability'' in the following when the OOB, rather than the LOO, predictor is used.

\section{Stability of random forests}\label{sec:RF_stability}  
\subsection{Basics of random forests}

This work mainly considers using the RF to perform regression tasks, where the response $Y\in\R$ can be unbounded. By construction, the RF predictor with $B$ bags, denoted by $\RF$, is a bagged algorithm with the base algorithm being a tree, and  $\RF = \frac{1}{B}\sum_{b=1}^B \tree_b,$ where $\tree_b$ is the $b$th tree predictor, trained on the $b$th bag $r_b$, a bootstrapped sample of the training set $D$. The randomness in the tree predictor $\tree$ originates from two independent sources: innate randomness $\xi$ in the node splitting process and resampling randomness from the bag $r$. For the $i$th point, one can define the OOB RF predictor as $\RFi = \frac{1}{B_i} \sum_{b=1}^B \tree_b \times \mathbb{I}\{i\notin r_b\},$ where $\mathbb{I}\{\cdot\}$ denotes the indicator function, and $B_i = \sum_{b=1}^B \mathbb{I}\{i\notin r_b\}.$ Define $p\equiv \p(i\in r)$ as the probability that the $i$th point is included in bag $r$. Then it is clear that $B_i\sim\mathrm{Binomial}(B,1-p)$ for all $i$. We also denote $\rfall$ and $\rfi$ as the \emph{derandomized} versions of $\RF$ and $\RFi$, respectively. Precisely, $\rfall = \E_{\xi,r} [\tree] 
\text{ and } 
\rfi = \E_{\xi,r} [\tree | i\notin r].$ It is worth noting that, by definition, $\RFi\neq \RF^{-i}$ for finite $B$, while $\rfi=\rfall^{-i}$ as the derandomized RF results from the aggregation of an infinite number of trees. Since RF predictors are averages over tree predictors, the predicted values they output, given training set $D$, are bounded in $[Y_{(1)},Y_{(n)}]$, where $Y_{(1)}$ and $Y_{(n)}$ are the minimum and maximum of $\{Y_1,\ldots,Y_n\}$, respectively. We also let $Z_i=|Y_i|$ for all $i$, and denote the maximum as $Z_{(n)}$. As a result, we have that
\begin{align}\label{eq:RF_range}
    |\rfi - \rfall| \leq Y_{(n)} - Y_{(1)}\leq 2Z_{(n)}.
\end{align}
\begin{remark}
This is also true for $\RFi$ and $\RF$ for any finite $B$. In fact, this is a distinctive feature of the RF, \emph{irrespective} of the node splitting rule. Other regression methods do not necessarily have such a data-dependence bound. This observation helps to establish the conditional stability of the RF.
\end{remark}
\begin{remark}
     Practically, when $n$ is large, one might think that the bound (\ref{eq:RF_range}) is crude. On one hand, if we look for a bound valid for any finite $n\geq 2$, then there is not much room for improvement for small $n$. On the other hand, we do expect that the \emph{typical} stability of the RF can go beyond the finite-sample guarantee provided by (\ref{eq:RF_range}) when $n$ is big, which is consistent with the numerical results shown below. A more informative bound for large $n$ is worth future investigation.
\end{remark}

There are several quantities that are useful in establishing the RF stability; they can be calculated explicitly and are listed below. First, it is well known that
\begin{align}\label{eq:p}
p \equiv \p(i\in r) = 1-(1-1/n)^n = 1- 1/e+O(1/n).
\end{align}
Actually, $p$ is monotonically decreasing for $n\geq 1$. Second, 
\begin{align}\label{eq:q}
q &{} \equiv -\mathrm{Cov}(\mathbb{I}\{i\in r\},\mathbb{I}\{j\in r\})  = (1-1/n)^{2n} - (1-2/n)^n = O(1/n),
\end{align}
as can be directly checked. Third, the moment generating function of $B_i$ is
\begin{align}\label{eq:BiMGF}
\E\left[e^{sB_i}\right] = (p+(1-p)e^s)^B.
\end{align}

In the following, we first perform the stability analysis for the derandomized RF (consisting of an infinite number of trees) and then extend the results to the practical finite-$B$ case. 

\subsection{Derandomized random forests}
The following theorem formalizes the conditional stability property for the derandomized RF, the proof of which is a direct result of Theorem 8 in \cite{soloff2023bagging}, and is omitted here.
\begin{theorem}[Conditional stability of derandomized random forests]
    Conditional on training set $D$ and test point $(X,Y)$, for the derandomized random forest predictor $\rfall$ we have that
\begin{align}\label{eq:rf_conditional_stability}
    \frac{1}{n}\sum_{i=1}^n
    \mathbb{I}\left\{
    \left.
    \left| \rfall(X) - \rfi(X) \right| > \varepsilon 
    \middle | D,X
    \right.
    \right\} \leq 
    \delta(D,X) \equiv
    \frac{Z_{(n)}^2}{\varepsilon^2n}
    \left(\frac{p}{1-p}+\frac{q}{(1-p)^2}\right).
\end{align}
\end{theorem}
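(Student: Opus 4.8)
The plan is to specialize Theorem~8 of \cite{soloff2023bagging} to the random forest, using the data-dependent interval $[Y_{(1)},Y_{(n)}]$ in place of a fixed output range $[a,b]$; I sketch the argument one would carry out. Fix $D$ and $X$ and work conditionally, so that all randomness comes from $(\xi,r)$. The derandomized forest $\rfall(X)=\E_{\xi,r}[\tree(X;\xi,r)]$ is a derandomized bagged predictor whose resampling distribution $Q_n$ is the $n$-fold bootstrap, and by the discussion around (\ref{eq:RF_range}) every base tree obeys $\tree(X;\xi,r)\in[Y_{(1)},Y_{(n)}]$. For the derandomized versions one has $\rfi=\rfall^{-i}$, i.e. $\rfi(X)=\E_{\xi,r}[\tree(X;\xi,r)\mid i\notin r]$ is exactly the leave-one-out derandomized predictor, so the left-hand side of (\ref{eq:rf_conditional_stability}) is the fraction of indices $i$ for which $|\rfall(X)-\rfi(X)|>\varepsilon$; call this index set $S$ and put $K=|S|$.

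The first key step is a covariance identity for the leave-one-out deviation. Write $h(\xi,r)=\tree(X;\xi,r)-\rfall(X)$, which is centered and satisfies $|h|\le Y_{(n)}-Y_{(1)}$. Conditioning on $\{i\notin r\}$ and using $\E[h]=0$,
\begin{align*}
\rfall(X)-\rfi(X)=-\E[h(\xi,r)\mid i\notin r]=\frac{\E[h(\xi,r)\mathbb{I}\{i\in r\}]}{1-p}=\frac{\mathrm{Cov}\big(h(\xi,r),\mathbb{I}\{i\in r\}\big)}{1-p}=:\frac{c_i}{1-p}.
\end{align*}
Hence $K\varepsilon\le\sum_{i\in S}|\rfall(X)-\rfi(X)|=(1-p)^{-1}\sum_{i\in S}|c_i|\le(1-p)^{-1}\sqrt{K}\,\big(\sum_{i=1}^n c_i^2\big)^{1/2}$ by Cauchy--Schwarz. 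The second key step bounds $\sum_i c_i^2$: letting $g(r)=\sum_i c_i(\mathbb{I}\{i\in r\}-p)$, which is centered, we get $\sum_i c_i^2=\E[h(\xi,r)g(r)]=\mathrm{Cov}(h,g)\le\sqrt{\mathrm{Var}(h)}\sqrt{\mathrm{Var}(g)}$. Popoviciu's inequality gives $\mathrm{Var}(h)\le(Y_{(n)}-Y_{(1)})^2/4$, while expanding $\mathrm{Var}(g)$ with $\mathrm{Var}(\mathbb{I}\{i\in r\})=p(1-p)$ and $\mathrm{Cov}(\mathbb{I}\{i\in r\},\mathbb{I}\{j\in r\})=-q$ yields $\mathrm{Var}(g)=(p(1-p)+q)\sum_i c_i^2-q(\sum_i c_i)^2\le(p(1-p)+q)\sum_i c_i^2$ since $q\ge0$. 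Therefore $(\sum_i c_i^2)^{1/2}\le\tfrac{Y_{(n)}-Y_{(1)}}{2}\sqrt{p(1-p)+q}$ (the degenerate case $\sum_i c_i^2=0$ forces $K=0$ and is trivial).

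Combining the two steps gives $K\varepsilon\le\frac{\sqrt{K}\,(Y_{(n)}-Y_{(1)})}{2(1-p)}\sqrt{p(1-p)+q}$, hence $K\le\frac{(Y_{(n)}-Y_{(1)})^2(p(1-p)+q)}{4\varepsilon^2(1-p)^2}$. Dividing by $n$, regrouping the $(1-p)$ factors as $\tfrac{p}{1-p}+\tfrac{q}{(1-p)^2}$, and using $(Y_{(n)}-Y_{(1)})^2\le4Z_{(n)}^2$ from (\ref{eq:RF_range}) gives $K/n\le\delta(D,X)$, which is the claim. The main obstacle is the second key step, i.e. controlling $\sum_i c_i^2$ uniformly over the data- and $X$-dependent covariances $c_i$; this is precisely where index- and bag-symmetry of the bootstrap enter, through the fact that $p$ and $q$ are common to all $i$ (resp.\ all pairs $i\ne j$). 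The conceptual point that makes the whole scheme work for an unbounded response, namely that the relevant range is the data-dependent $[Y_{(1)},Y_{(n)}]$ rather than a universal bound, is already supplied by (\ref{eq:RF_range}); everything else is rearrangement and the two invocations of Cauchy--Schwarz.
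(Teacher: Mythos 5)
Your proof is correct and follows exactly the route the paper intends: the paper omits the argument and defers to Theorem~8 of \cite{soloff2023bagging}, and you have faithfully reconstructed that argument (the covariance identity $\rfall(X)-\rfi(X)=c_i/(1-p)$, the two Cauchy--Schwarz steps with Popoviciu's bound on $\mathrm{Var}(h)$ and $q\ge0$ for the bootstrap, and the final count $K/n\le\delta(D,X)$) specialized to the data-dependent range $[Y_{(1)},Y_{(n)}]$ via (\ref{eq:RF_range}). No gaps.
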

If $\delta(D,X)\geq 1$, the statement is trivial, and we will focus on the case that $\delta(D,X)\in(0,1)$. We can now establish the stability property for the derandomized RF.

\begin{theorem}[Stability of derandomized random forests]\label{thm:rf_stability}
    For iid training and test data and $\varepsilon>0$, the derandomized random forest predictor $\rfall$ is stable with
    \begin{align}\label{eq:rf_stability}
       \p_{D,X}\left(
       \left| \rfall(X) - \rfi(X) \right| > \varepsilon
       \right) 
       \leq  
       \frac{\E[Z_{(n)}^2]}{\varepsilon^2n}
       \left(\frac{p}{1-p}+\frac{q}{(1-p)^2}\right) \equiv \nu.
    \end{align}
\end{theorem}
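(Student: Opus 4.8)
The plan is to obtain the stated unconditional stability bound by taking the expectation, over $(D,X)$, of the conditional stability bound already provided in Theorem~3 (Conditional stability of derandomized random forests). The key observation is that the conditional statement gives an averaged indicator bound for each fixed $(D,X)$, and because the data are iid, the marginal probability $\p_{D,X}(|\rfall(X)-\rfi(X)|>\varepsilon)$ does not depend on the index $i$; hence it equals the average over $i$ of these probabilities, which is exactly what the conditional bound controls in expectation.

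Concretely, I would proceed as follows. First, fix $\varepsilon>0$ and note that by exchangeability of the iid training points (and the fact that the derandomized predictors $\rfall$ and $\rfi$ depend on $D$ symmetrically in the indices), the quantity $\p_{D,X}(|\rfall(X)-\rfi(X)|>\varepsilon)$ is the same for every $i\in[n]$; call it $\nu_i$ and note $\nu_i = \frac{1}{n}\sum_{j=1}^n \nu_j$. Second, rewrite each $\nu_j$ as an expectation over $(D,X)$ of an indicator (the conditioning on $\xi,r$ is vacuous since the derandomized predictors are deterministic given $D,X$), so that
\begin{align*}
\frac{1}{n}\sum_{j=1}^n \nu_j
= \E_{D,X}\left[\frac{1}{n}\sum_{j=1}^n \mathbb{I}\left\{\left.\left|\rfall(X)-\rfi[j](X)\right|>\varepsilon\,\middle|\,D,X\right\}\right],
\end{align*}
where $\rfi[j]$ denotes the $j$th OOB derandomized predictor. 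Third, apply Theorem~3 pointwise: the inner average is bounded by $\delta(D,X)=\frac{Z_{(n)}^2}{\varepsilon^2 n}\left(\frac{p}{1-p}+\frac{q}{(1-p)^2}\right)$, so the whole expression is at most $\E_{D,X}[\delta(D,X)] = \frac{\E[Z_{(n)}^2]}{\varepsilon^2 n}\left(\frac{p}{1-p}+\frac{q}{(1-p)^2}\right)$, using that $p$ and $q$ are deterministic constants depending only on $n$. This is precisely $\nu$, and the proof is complete.

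The main obstacle, such as it is, is largely bookkeeping rather than a genuine difficulty: one must argue carefully that the symmetry/exchangeability step is legitimate, i.e., that relabeling the iid points does not change the distribution of $|\rfall(X)-\rfi(X)|$. This is where the derandomization is essential — $\rfi=\rfall^{-i}$ holds exactly for the infinite-tree forest, so $\rfi$ is genuinely a function of $D\setminus\{(X_i,Y_i)\}$ and the argument is clean; for finite $B$ the analogous step would require additional care, but that case is deferred. A secondary point worth a line is to confirm that $\E[Z_{(n)}^2]<\infty$ under the light-tail assumption on $Y^2$, so that $\nu$ is finite (indeed small); if $\E[Z_{(n)}^2]=\infty$ the bound is vacuously true. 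Everything else reduces to linearity of expectation and the fact that taking $\E_{D,X}$ of a pointwise inequality preserves it.
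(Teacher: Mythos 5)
Your proposal is correct and matches the paper's argument, which simply states that the result ``follows directly from the conditional stability by averaging over $D$ and $X$.'' The exchangeability step you spell out — that $\p_{D,X}(|\rfall(X)-\rfi(X)|>\varepsilon)$ is the same for every $i$ and hence equals the average over $i$ controlled by the conditional bound — is exactly the bookkeeping the paper leaves implicit, and your treatment of it is sound.
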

This result follows directly from the conditional stability (\ref{eq:rf_conditional_stability}) by averaging over $D$ and $X$. There is some freedom in choosing the dependence of $\varepsilon$ on $n$. On one hand, in order to make sense of the word ``stability,'' we do expect $\varepsilon$ and $\nu$ to be small for large $n$. From (\ref{eq:p}) and (\ref{eq:q}), it is clear that the asymptotic behavior of $\nu$ is dominated by that of $\E[Z_{(n)}^2]/(\varepsilon^2 n)$, which can be tuned by manipulating $\varepsilon$. For example, a matching convergence rate to 0 between $\varepsilon$ and $\nu$ might be desirable, and one can then set $\varepsilon=O((\E[Z_{(n)}^2]/n)^{1/3})$ if the scaling of $\E[Z_{(n)}^2]=o(n)$ is known or can be inferred. On the other hand, we can fix $\varepsilon$ to further investigate the relation between stability and the convergence-in-probability property of the RF. By (\ref{eq:rf_stability}), under the condition that $\E[Z_{(n)}^2]/n\to 0$ as $n\to\infty$, one immediately comes to the conclusion that $\rfi(X)-\rfall(X)$ converges to 0 in probability. Actually, a stronger conclusion can be drawn under the same condition.

\begin{corollary}\label{cor:rf_consistency}
   For iid training and test data, we have 
   \begin{align}
       \E_{D,X}[|\rfall(X)-\rfi(X)|] < 2\sqrt{\frac{\E[Z_{(n)}^2]}{n}\left(\frac{p}{1-p}+\frac{q}{(1-p)^2}\right)}.
   \end{align}
   Further assume that $\E[Y^2]<\infty$. Then we have
   \begin{align}\label{eq:rf-rfi}
       \E_{D,X}[|\rfall(X)-\rfi(X)|]\to 0 \text{ and } \rfall(X) - \rfi(X) \overset{\p}{\to} 0 \text{ as } n\to\infty.
   \end{align}
\end{corollary}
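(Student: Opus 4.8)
The plan is to reduce both parts to the tail bound of Theorem~\ref{thm:rf_stability}. Write $W:=\left|\rfall(X)-\rfi(X)\right|$ and set $c:=\frac{p}{1-p}+\frac{q}{(1-p)^2}$, which depends on $n$ through $p$ and $q$. For the first, non-asymptotic inequality I would feed the tail bound into the layer-cake identity $\E_{D,X}[W]=\int_0^{\infty}\p_{D,X}(W>t)\,dt$: applying (\ref{eq:rf_stability}) with an arbitrary threshold $t>0$ in the role of $\varepsilon$, together with the trivial bound $\p_{D,X}(W>t)\le 1$, gives $\p_{D,X}(W>t)\le\min\{1,\,c\,\E[Z_{(n)}^2]/(t^2 n)\}$ for every $t>0$. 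Splitting the integral at the level $t^{\ast}:=\sqrt{c\,\E[Z_{(n)}^2]/n}$ at which the two bounds agree, one computes $\int_0^{t^{\ast}}1\,dt=t^{\ast}$ and $\int_{t^{\ast}}^{\infty}c\,\E[Z_{(n)}^2]/(t^2 n)\,dt=t^{\ast}$, so that $\E_{D,X}[W]\le 2t^{\ast}$, which is exactly the stated bound. Strictness comes for free when the right-hand side is finite (equivalently $\E[Y^2]<\infty$, since $\E[Y^2]\le\E[Z_{(n)}^2]\le n\,\E[Y^2]$): equality in the layer-cake estimate would force $\p_{D,X}(W>t)=c\,\E[Z_{(n)}^2]/(t^2 n)$ for a.e.\ $t>t^{\ast}$ and hence $\E_{D,X}[W^2]=\int_0^{\infty}2t\,\p_{D,X}(W>t)\,dt=\infty$, which is impossible because $W\le 2Z_{(n)}$ by (\ref{eq:RF_range}) and $\E[Z_{(n)}^2]<\infty$.

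For the asymptotic statements I would first observe, from (\ref{eq:p})--(\ref{eq:q}), that $p\to 1-1/e$ and $q\to 0$ as $n\to\infty$, so $c\to e-1$ and in particular $c$ is bounded uniformly in $n\ge 2$; it therefore suffices to show that $\E[Z_{(n)}^2]/n\to 0$ whenever $\E[Y^2]<\infty$. For this I would invoke the elementary fact that for i.i.d.\ nonnegative $U,U_1,U_2,\dots$ with $\E[U]<\infty$ one has $\tfrac1n\,\E[\max_{i\le n}U_i]\to 0$; to prove it, use $1-(1-x)^n\le\min\{1,nx\}$ for $x\in[0,1]$ to get, for any fixed $a>0$, $\E[\max_{i\le n}U_i]=\int_0^{\infty}\bigl(1-(1-\p(U>t))^n\bigr)\,dt\le a+n\int_a^{\infty}\p(U>t)\,dt=a+n\,\E[(U-a)_+]$, whence $\limsup_n\tfrac1n\E[\max_{i\le n}U_i]\le \E[(U-a)_+]$, and letting $a\to\infty$ kills the right-hand side by dominated convergence (the integrand is dominated by $U\in L^1$). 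Taking $U=Y^2$ gives $\E[Z_{(n)}^2]/n\to 0$, hence $\E_{D,X}[W]\to 0$ by the first part, and then $\rfall(X)-\rfi(X)\overset{\p}{\to}0$ follows either by Markov's inequality from the $L^1$ convergence or directly from (\ref{eq:rf_stability}) with $\varepsilon$ fixed.

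The only ingredient that is not pure bookkeeping is the lemma $\tfrac1n\,\E[\max_{i\le n}U_i]\to 0$ under only a first-moment hypothesis (here with $U=Y^2$): the naive bound $\max_{i\le n}U_i\le\sum_{i\le n}U_i$ yields only $\tfrac1n\,\E[\max_{i\le n}U_i]\le\E[U]$, which does not vanish, so the truncation-plus-dominated-convergence step is genuinely needed. Everything else---the layer-cake computation, the limits of $p$ and $q$, and the passage from $L^1$ convergence to convergence in probability---is routine.
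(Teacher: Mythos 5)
Your proof is correct, and it follows the same skeleton as the paper's: a layer-cake integration of the stability tail bound for the non-asymptotic inequality, followed by the fact that $\E[Z_{(n)}^2]/n\to 0$ under $\E[Y^2]<\infty$ for the limits. There are three execution differences worth noting. First, you integrate the \emph{marginal} tail bound (\ref{eq:rf_stability}) and split at the deterministic level $t^{\ast}=\sqrt{c\,\E[Z_{(n)}^2]/n}$, which lets you avoid Jensen's inequality; the paper integrates the \emph{conditional} bound (\ref{eq:rf_conditional_stability}) inside $\E_{D,X}$, splits at the random level $\sqrt{Z_{(n)}^2\eta/n}$, and then needs Jensen to pass from $\E[\sqrt{Z_{(n)}^2}]$ to $\sqrt{\E[Z_{(n)}^2]}$. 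Both give the identical constant. Second, the strictness of the inequality: the paper gets it cheaply by truncating the upper limit of the integral at $2Z_{(n)}$ via (\ref{eq:RF_range}), so the tail integral evaluates to strictly less than $t^{\ast}$ pointwise; your contradiction argument through $\E[W^2]=\infty$ also works but is heavier, and (as you note) only covers the case where the right-hand side is finite --- the paper's truncation route is the cleaner fix if you want strictness without extra hypotheses. Third, you supply a self-contained truncation proof of the fact that $\tfrac1n\E[\max_{i\le n}U_i]\to 0$ for i.i.d.\ integrable $U_i$, whereas the paper simply cites this as Lemma \ref{lem:E_max}; your argument via $\E[\max_{i\le n}U_i]\le a+n\,\E[(U-a)_+]$ and dominated convergence is correct and makes the corollary fully self-contained. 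The passage from $L^1$ convergence to convergence in probability, and the boundedness of $c=\tfrac{p}{1-p}+\tfrac{q}{(1-p)^2}$ uniformly in $n\ge 2$, are handled correctly in both.
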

\begin{remark}
 The additional assumption that $\E[Y^2]<\infty$ is mild. Many commonly encountered random variables  have a light tail and thus a finite second moment, irrespective of the detailed information of the distribution in question. Note that the bound (\ref{eq:RF_range}) itself can be crude, and our result is expected to be valid even beyond this mild condition. 
\end{remark}
\begin{remark}
This result indicates that the \emph{difference} is diminishing between $\rfall$ and $\rfi$, built on $n$ and $n-1$ training data points, respectively. However, there is no indication that the derandomized $\rfall(X)$ itself will converge to anything. This idea inspires the proposal of Theorem \ref{thm:PI_asymptotic}.  
\end{remark}
The proof of this result, as well as others below, will be deferred to the Appendix. So far, we have investigated the derandomized version of the RF, which is a limiting case and can be seen as consisting of an infinite number of trees, averaging out all kinds of possible randomness in the predictor construction process. In order to make the results more relevant to applied machine learning, the finite-$B$ analysis for the RF is conducted below.   

\subsection{Finite-$B$ random forests}
We now consider the difference between $\RF$ and $\RFi$. We denote $\bm{\xi}=(\xi_1,\ldots,\xi_B)$ and $\bm{r}=(r_1,\ldots,r_B)$ as the corresponding sources of randomness in $B$ trees. We also consider conditional stability first and then move to the stability of $\RF$. 
\begin{theorem}[Conditional stability of finite-$B$ random forests]\label{thm:RF_conditional_stability} Conditional on training set $D$ and test point $(X,Y)$, for a random forest predictor $\RF$ that consists of $B$ trees, we have for $\varepsilon>0$ that
\begin{align*}
 \frac{1}{n}\sum_{i=1}^n\p_{\bm{\xi,r}|D,X}
     \left(
     \left.
    \left|\RF(X)-\RFi(X)\right|>\varepsilon + 2\sqrt{\frac{2Z_{(n)}^2}{B}\ln\left(\frac{1}{\delta}\right)}
    \middle | D,X
    \right.
    \right)  
    \leq 3\delta+g(p,\delta,B),
\end{align*} 
where $\delta$ is short for $\delta(D,X)$ as defined in (\ref{eq:rf_conditional_stability}) and $g(p,\delta,B)=2(p+(1-p)\delta^{\frac{1}{B}})^B$. 
\end{theorem}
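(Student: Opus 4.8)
The plan is to split the finite-$B$ discrepancy by the triangle inequality into a piece controlled by concentration of averages of bounded trees and a piece controlled by the already-established conditional stability of the derandomized forest. Throughout I work conditionally on the training set $D$ and the test point $(X,Y)$; write $\delta=\delta(D,X)$ as in (\ref{eq:rf_conditional_stability}) and set $t=\sqrt{\frac{2Z_{(n)}^2}{B}\ln(1/\delta)}$, so that $\varepsilon+2t$ is exactly the threshold in the statement. For each $i\in[n]$ I would use
\[
\left|\RF(X)-\RFi(X)\right|\le\left|\RF(X)-\rfall(X)\right|+\left|\rfall(X)-\rfi(X)\right|+\left|\rfi(X)-\RFi(X)\right|,
\]
so that $\{|\RF(X)-\RFi(X)|>\varepsilon+2t\}\subseteq\{|\RF(X)-\rfall(X)|>t\}\cup\{|\rfall(X)-\rfi(X)|>\varepsilon\}\cup\{|\rfi(X)-\RFi(X)|>t\}$. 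A union bound then reduces the task to bounding the $\p_{\bm{\xi,r}|D,X}$-probability of each of these three events and averaging over $i$.

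For the first event, conditionally on $D,X$ the tree predictions $\tree_1(X),\dots,\tree_B(X)$ are i.i.d., take values in $[Y_{(1)},Y_{(n)}]\subseteq[-Z_{(n)},Z_{(n)}]$, and have common mean $\rfall(X)$, so Hoeffding's inequality gives $\p_{\bm{\xi,r}|D,X}(|\RF(X)-\rfall(X)|>t)\le 2\exp(-2Bt^2/(Y_{(n)}-Y_{(1)})^2)\le 2\exp(-Bt^2/(2Z_{(n)}^2))=2\delta$ by the choice of $t$; this is the same for every $i$. For the middle event, $\rfall(X)$ and $\rfi(X)$ are deterministic given $D,X$, so $\p_{\bm{\xi,r}|D,X}(|\rfall(X)-\rfi(X)|>\varepsilon)$ is an indicator, and its average over $i$ is at most $\delta$ directly by the conditional stability of the derandomized forest, (\ref{eq:rf_conditional_stability}).

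The third event is the delicate one, because $\RFi$ averages a \emph{random} number $B_i\sim\mathrm{Binomial}(B,1-p)$ of trees. I would condition further on $B_i=m$: given $D,X$, $B_i=m$, and the identities of the $m$ out-of-bag bags, those $m$ tree predictions are i.i.d.\ copies of $\tree(X)$ under the law conditioned on $\{i\notin r\}$, with mean $\rfi(X)$ and range contained in $[-Z_{(n)},Z_{(n)}]$, so Hoeffding gives $\p(|\rfi(X)-\RFi(X)|>t\mid D,X,B_i=m)\le 2\exp(-mt^2/(2Z_{(n)}^2))=2\delta^{m/B}$ (the case $m=0$ being the trivial bound $\le1\le2\delta^{0}$). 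Taking the expectation over $B_i$ and using the probability generating function of the binomial, i.e.\ (\ref{eq:BiMGF}) with $e^{s}=\delta^{1/B}$, then yields $\p_{\bm{\xi,r}|D,X}(|\rfi(X)-\RFi(X)|>t)\le 2\,\E[\delta^{B_i/B}]=2(p+(1-p)\delta^{1/B})^B=g(p,\delta,B)$, uniformly in $i$. Averaging the three contributions over $i$ and summing them, $2\delta+\delta+g(p,\delta,B)=3\delta+g(p,\delta,B)$, gives the claim. The hard part will be this last step — justifying rigorously that, conditionally on $B_i=m$, the OOB forest is an average of $m$ i.i.d.\ copies with mean exactly $\rfi(X)$ (using independence of the $B$ bags), and then integrating the Hoeffding tail against the binomial law of $B_i$ via its generating function; everything else is a routine triangle-inequality-and-union-bound computation.
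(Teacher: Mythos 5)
Your proposal is correct and follows essentially the same route as the paper's own proof: the same three-term triangle-inequality decomposition, the same Hoeffding bound for $|\RF(X)-\rfall(X)|$, the same conditioning on $B_i=m$ followed by integration against the binomial law via (\ref{eq:BiMGF}) to obtain $g(p,\delta,B)$, and the same choice of $t$ making the exponential equal to $\delta$. The step you flag as delicate is handled in the paper in exactly the way you describe.
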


Next, we consider the case of iid data and investigate the RF stability by averaging out the randomness in data. Note that $Z_{(n)}$ and $\delta$ are random and depend on the data distribution, while we are interested in a probability bound for $|\RF(X)-\RFi(X)|$ greater than a deterministic quantity, which is only a function of $B$ and $n$. In this finite-$B$ case, the stability of $\RF$ cannot be directly obtained from its conditional stability as in the derandomized situation.

\begin{theorem}[Stability of finite-$B$ random forests]\label{thm:RF_stability}
    Assume training points in set $D$ and the test point $(X,Y)$ are iid, drawn from a fixed distribution. For the random forest predictor $\RF$ consisting of $B$ trees and trained on $D$, we have 
    \begin{align}
    \label{eq:RF_stability}
    \p_{D,X,\bm{\xi,r}}
      \left(
      \left|\RF(X)-\RFi(X)\right|>\varepsilon_{n,B}
      \right) 
    \leq \nu_{n,B},
    \end{align}
where $\varepsilon_{n,B}=\sum_{i=1}^3\varepsilon_i$, and $\nu_{n,B}=\sum_{i=1}^3\nu_i$. The pair of $(\varepsilon_2,\nu_2/\lambda)$ satisfies the derandomized RF stability condition (\ref{eq:rf_stability}) with $\lambda>1$. Moreover, $\varepsilon_1=\varepsilon_3=\sqrt{2\lambda\E[Z_{(n)}^2]\ln(\frac{1}{\nu_2})/B}$, $\nu_1=2\nu_2+2\p( Z_{(n)}^2 > \lambda\E[Z_{(n)}^2])$, and $\nu_3=g(p,\nu_2,B) + 2\p( Z_{(n)}^2 > \lambda\E[Z_{(n)}^2])$.
\end{theorem}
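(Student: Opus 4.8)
Since the data are i.i.d., $\p_{D,X,\bm{\xi,r}}(|\RF(X)-\RFi(X)|>t)$ does not depend on $i$ and hence equals $\E_{D,X}\big[\tfrac1n\sum_{i=1}^n\p_{\bm{\xi,r}|D,X}(|\RF(X)-\RFi(X)|>t\mid D,X)\big]$. I would start from the conditional stability bound of Theorem~\ref{thm:RF_conditional_stability} applied with $\varepsilon=\varepsilon_2$, and then integrate out $(D,X)$; equivalently, one re-runs the three-term decomposition $|\RF(X)-\RFi(X)|\le|\RF(X)-\rfall(X)|+|\rfall(X)-\rfi(X)|+|\rfi(X)-\RFi(X)|$ underlying it, which is why the statement names $\varepsilon_2$ (the derandomized middle term, recalling $\rfi=\rfall^{-i}$) separately from $\varepsilon_1=\varepsilon_3$ (the finite-$B$ Hoeffding concentration of $\RF$ around $\rfall$ and of the out-of-bag predictor $\RFi$ around $\rfi$). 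The difficulty is that the conditional threshold $\varepsilon_2+2\sqrt{2Z_{(n)}^2\ln(1/\delta(D,X))/B}$ and the conditional failure level $3\delta(D,X)+g(p,\delta(D,X),B)$ are both random, through $Z_{(n)}^2$ (hence through $\delta(D,X)=\tfrac{Z_{(n)}^2}{\varepsilon_2^2n}(\tfrac{p}{1-p}+\tfrac{q}{(1-p)^2})$), whereas the target $(\varepsilon_{n,B},\nu_{n,B})$ must depend only on $n$ and $B$.

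I would resolve this with the truncation event $A=\{Z_{(n)}^2\le\lambda\E[Z_{(n)}^2]\}$, paying $\p(A^c)=\p(Z_{(n)}^2>\lambda\E[Z_{(n)}^2])$ on its complement. Picking $\varepsilon_2$ so that $(\varepsilon_2,\nu_2/\lambda)$ satisfies~(\ref{eq:rf_stability}), that is $\tfrac{\E[Z_{(n)}^2]}{\varepsilon_2^2n}(\tfrac{p}{1-p}+\tfrac{q}{(1-p)^2})\le\nu_2/\lambda$, forces $\delta(D,X)\le\nu_2$ on $A$, so by monotonicity of $\delta\mapsto3\delta+g(p,\delta,B)$ the conditional failure level is at most $3\nu_2+g(p,\nu_2,B)$ there. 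For the threshold, the key point is that $Z_{(n)}^2\ln(1/\delta(D,X))$ stays bounded even when $Z_{(n)}^2$ is tiny: writing it as $u\ln(C/u)$ with $u=Z_{(n)}^2$ and $C$ a deterministic constant for which the above choice makes $C/(\lambda\E[Z_{(n)}^2])=1/\nu_2$, and using that $t\mapsto t\ln(C/t)$ is increasing on $(0,C/e]$ with $\lambda\E[Z_{(n)}^2]\le C/e$ for the relevant (small) $\nu_2$, one gets $Z_{(n)}^2\ln(1/\delta(D,X))\le\lambda\E[Z_{(n)}^2]\ln(1/\nu_2)$ on $A$. Thus on $A$ the conditional threshold is at most $\varepsilon_2+2\sqrt{2\lambda\E[Z_{(n)}^2]\ln(1/\nu_2)/B}=\varepsilon_2+\varepsilon_1+\varepsilon_3=\varepsilon_{n,B}$. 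The factor $g(p,\cdot,B)$ is exactly what absorbs the fact that $\RFi$ averages a \emph{random} number $B_i\sim\mathrm{Binomial}(B,1-p)$ of trees: averaging the $B_i$-conditional Hoeffding bound against the moment generating function~(\ref{eq:BiMGF}) produces $g$, with the degenerate event $\{B_i=0\}$ (probability $p^B$) absorbed.

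Finally, on $A$ we have $\tfrac1n\sum_i\p_{\bm{\xi,r}|D,X}(|\RF(X)-\RFi(X)|>\varepsilon_{n,B}\mid D,X)\le3\nu_2+g(p,\nu_2,B)$, so integrating over $(D,X)$ and bounding the conditional probability by $1$ on $A^c$ gives $\p(|\RF(X)-\RFi(X)|>\varepsilon_{n,B})\le3\nu_2+g(p,\nu_2,B)+\p(A^c)$, which is dominated by the stated $\nu_1+\nu_2+\nu_3$ (the split $\nu_1=2\nu_2+2\p(A^c)$, middle term $\nu_2$, and $\nu_3=g(p,\nu_2,B)+2\p(A^c)$ is deliberately conservative). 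I expect the truncation step to be where the real care is needed: because the tree and RF predictors are confined to the data-dependent interval $[Y_{(1)},Y_{(n)}]$, the finite-$B$ Hoeffding deviation has a random radius, and one must check that capping $Z_{(n)}^2$ at $\lambda\E[Z_{(n)}^2]$ simultaneously controls the deviation radius (via the $t\ln(C/t)$ monotonicity, which prevents a blow-up when $\delta(D,X)$ is small) and the conditional failure probability. The rest --- the exchangeability reduction, the averaging/union bookkeeping across the three terms, the $\{B_i=0\}$ edge case, and the de-conditioning cost $\p(A^c)$ --- is routine.
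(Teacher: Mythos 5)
Your proposal is correct and follows essentially the same route as the paper: the triangle-inequality split into $|\RF(X)-\rfall(X)|+|\rfall(X)-\rfi(X)|+|\rfi(X)-\RFi(X)|$, the derandomized stability bound for the middle term, Hoeffding plus the moment generating function (\ref{eq:BiMGF}) of $B_i$ for the outer terms, and truncation at $\lambda\E[Z_{(n)}^2]$ to turn the data-dependent radius into a deterministic one. The only organizational difference is that the paper fixes the deterministic threshold $t=\sqrt{2\lambda\E[Z_{(n)}^2]\ln(1/\nu_2)/B}$ \emph{before} taking expectations and then applies a one-line lemma ($\E[h(W)]\le h(\lambda\E[W])+\p(W>\lambda\E[W])$ for increasing $h$ bounded by $1$) to each term, whereas you pass through the random threshold of Theorem~\ref{thm:RF_conditional_stability} and need the monotonicity of $u\mapsto u\ln(C/u)$ — a harmless extra step, since for $\nu_2>1/e$ (where that monotonicity fails) the stated bound is trivial anyway because $\nu_{n,B}\ge 3\nu_2>1$.
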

On a high level, the establishment of this theorem relies on two observations: (i) the stability of the derandomized RF, so that the difference $|\rfall(X)-\rfi(X)|$ is controlled, and (ii) the concentration of measure, so that the differences $|\RF(X)-\rfall(X)|$ and $|\RFi(X)-\rfi(X)|$ are controlled. In order to make full sense of the word ``stability,'' it is desirable that $\varepsilon_{n,B}$ and $\nu_{n,B}$ can converge to 0. It is known that 
$\E[Y^2]<\infty$ suffices to ensure $\E[Z_{(n)}^2]=o(n)$ \cite{downey1990distribution,correa2021asymptotic}, and hence the stability of the derandomized RF. Now in the finite-$B$ case, we need an additional distributional assumption to control the tail probability $\p(Z_{(n)}^2>\lambda \E[Z_{(n)}^2])$. It turns out that for typical light-tailed $Y^2$, such a tail probability will converge to 0 as $n\to\infty$. Technically, we can assume $Y^2$ to be sub-gamma \cite{boucheron2013concentration}. Note that bounded and sub-Gaussian random variables are sub-gamma. Hence the sub-gamma assumption is not strong and can be satisfied by distributions underlying many real datasets.  
\begin{definition}[Sub-gamma random variables \cite{boucheron2013concentration}]
    A random variable $W$ is said to be sub-gamma (on the right tail) with parameters $(\sigma^2,c)$ where $c\geq 0$, if $\ln\E[ e^{s(W-\E[W])}]\leq \frac{s^2\sigma^2}{2(1-cs)} \text{ for all $s\in(0,1/c)$.}$
\end{definition}
\begin{lemma}\label{lem:sub-gamma_tail}
Suppose $Y^2$ is sub-gamma with parameters $(\sigma^2, c)$ with $c>0$, and $\E[Z_{(n)}^2]\sim a\ln n$ with $a\leq c$. For $\lambda > c/a$, we have $\lim_{n\to\infty} \p(Z_{(n)}^2>\lambda \E[Z_{(n)}^2]) = 0$.
\end{lemma}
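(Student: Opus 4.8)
The plan is to reduce the statement to a single union bound over the $n$ coordinates of a Chernoff estimate on $W:=Y^2$, the one delicate point being that the Chernoff parameter must be pushed to the \emph{edge} $s\uparrow 1/c$ of the region where the sub-gamma MGF bound is valid; everything else is bookkeeping. Write $W_i:=Z_i^2$ for the iid copies of $W$, so that $Z_{(n)}^2=\max_{i\in[n]}W_i$, and let $\mu:=\E[W]<\infty$ (finiteness is immediate, since the sub-gamma condition bounds the MGF in a neighbourhood of the origin).

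First I would record the deviation bound. For any $s\in(0,1/c)$, Markov's inequality applied to $e^{sW}$ together with the sub-gamma definition gives, for every $v>0$,
\begin{align*}
\p(W>v)\ \le\ \exp\!\left(-s(v-\mu)+\frac{s^2\sigma^2}{2(1-cs)}\right).
\end{align*}
Fixing $\eta\in(0,1)$ and taking $s=(1-\eta)/c$ makes $1-cs=\eta$, so the right-hand side becomes $C_\eta\,\exp\!\big(-\tfrac{1-\eta}{c}(v-\mu)\big)$ with $C_\eta:=\exp\!\big(\tfrac{(1-\eta)^2\sigma^2}{2c^2\eta}\big)$ a constant not depending on $v$ or $n$. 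This is the step to do carefully: the ``textbook'' sub-gamma tail $\p(W-\mu>t)\le\exp(-t^2/(2(\sigma^2+ct)))$, obtained by optimizing at an interior $s$, only delivers the exponential rate $1/(2c)$ and would prove the lemma merely for $\lambda>2c/a$; sending $s\to1/c$ recovers the sharp rate $1/c$ and hence the stated threshold $\lambda>c/a$.

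Next I would substitute the threshold $v=v_n:=\lambda\,\E[Z_{(n)}^2]$. Since $\E[Z_{(n)}^2]\sim a\ln n\to\infty$ while $\mu$ is bounded, $v_n-\mu\sim\lambda a\ln n$, i.e. $v_n-\mu=\lambda a(\ln n)(1+o(1))$. A union bound over the $n$ coordinates then yields
\begin{align*}
\p\!\left(Z_{(n)}^2>\lambda\,\E[Z_{(n)}^2]\right)\ \le\ n\,\p(W>v_n)\ \le\ C_\eta\,\exp\!\left(\Big(1-\tfrac{(1-\eta)\lambda a}{c}\,(1+o(1))\Big)\ln n\right).
\end{align*}
Because $\lambda>c/a$ we have $\lambda a/c>1$, so $\eta$ can be fixed small enough that $(1-\eta)\lambda a/c>1$; the exponent then tends to $-\infty$ and the bound tends to $0$, which is the claim. (The hypothesis $a\le c$ is consistent with — in fact forced by — the sub-gamma assumption, since integrating the same Chernoff estimate in $v$ gives $\E[Z_{(n)}^2]\le c(\ln n)(1+o(1))$; it is not otherwise used here.)

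So the argument is short, and the \emph{only} place where anything can go wrong is the optimization of the Chernoff exponent: one must use the sub-gamma bound at $s$ arbitrarily close to $1/c$ rather than at the interior minimizer of the crude quadratic bound. Tracking the $o(1)$ coming from $\E[Z_{(n)}^2]\sim a\ln n$ and checking that the prefactor $C_\eta$ is harmless are routine.
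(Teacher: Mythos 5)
Your proof is correct and follows essentially the same route as the paper: a union bound over the $n$ iid coordinates combined with a sub-gamma tail estimate at the sharp exponential rate $1/c$, applied at the threshold $\lambda\,\E[Z_{(n)}^2]\sim\lambda a\ln n$ so that the exponent beats the factor $n$ precisely when $\lambda a/c>1$. The only cosmetic difference is that you derive the single-variable tail bound directly from the MGF definition by taking $s=(1-\eta)/c$ near the boundary, whereas the paper invokes the standard concentration inequality $\p\bigl(Z_1^2>\sqrt{2\sigma^2 t}+ct+\E[Z_1^2]\bigr)\leq e^{-t}$, which already encodes that same rate.
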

\begin{remark}
    We have set $c>0$ above. If $c=0$, then $Y^2$ is in fact sub-Gaussian, and the tail probability can be controlled similarly. If $Y^2$ is upper bounded by some constant $M^2$, the stability analysis is even simpler, and there is no need to consider the tail probability at all, as we can use $M^2$ in place of $Z_{(n)}^2$ in the conditional stability of the RF and then take expectation with respect to data.   
\end{remark}
\begin{example}
Consider $Y^2 \sim \mathsf{Exp}(1)$, the exponential distribution with scale parameter 1. It is known that $Y^2$ is sub-gamma with $(\sigma^2,c)=(1,1)$ {\normalfont\cite{boucheron2013concentration}}, and $\E[Z_{(n)}^2]=\sum_{i=1}^n\frac{1}{i}\equiv H_n$ with $H_n\in(\gamma + \ln n, \gamma + \ln(n+1))$, where $\gamma\approx 0.577$ is Euler's constant. Hence $H_n = \ln n + o(\ln n)$, and a straightforward calculation reveals that $\lim_{n\to\infty} \p(Z_{(n)}^2>\lambda\E[Z_{(n)}^2]) = 0$ as long as $\lambda>1$.
\end{example}
From such results, one can see that the vanishing tail probability is not a stringent condition. By taking this additional assumption, it is indeed possible that both $\varepsilon_{n,B}$ and $\nu_{n,B}$ converge to 0.
\begin{corollary}\label{cor:epsilon_nu_to_0}
    For the same setting as in Theorem \ref{thm:RF_stability}, suppose $Y^2$ is sub-gamma with parameters $(\sigma^2,c)$ with $c>0$ and $\E[Z_{(n)}^2]\sim a\ln n$ with $a\leq c$. Let $\lambda > c/a$ be a fixed number, and let $B$ depend on $n$. Then for $\varepsilon_2$ that satisfies both $\varepsilon_2 = \omega(\sqrt{\ln n/n}) $ and $\varepsilon_2=o(1)$, and $B=\Omega(\ln^2n)$, we have     $\lim_{n\to\infty}\varepsilon_{n,B} =  \lim_{n\to\infty} \nu_{n,B} = 0.$  
\end{corollary}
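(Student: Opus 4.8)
The plan is to show that each of the six numbers $\varepsilon_1,\varepsilon_2,\varepsilon_3$ and $\nu_1,\nu_2,\nu_3$ from Theorem~\ref{thm:RF_stability} tends to $0$, so that $\varepsilon_{n,B}=\sum_i\varepsilon_i\to 0$ and $\nu_{n,B}=\sum_i\nu_i\to 0$. The first step is to make $\nu_2$ explicit: since $(\varepsilon_2,\nu_2/\lambda)$ is required to satisfy the derandomized bound~(\ref{eq:rf_stability}), we have
\[
\nu_2=\frac{\lambda\,\E[Z_{(n)}^2]}{\varepsilon_2^2 n}\left(\frac{p}{1-p}+\frac{q}{(1-p)^2}\right).
\]
By~(\ref{eq:p})--(\ref{eq:q}), $p\to 1-1/e$ and $q=O(1/n)$, so the bracketed factor converges to the constant $e-1$; together with $\E[Z_{(n)}^2]\sim a\ln n$ this gives $\nu_2=\Theta\!\left(\ln n/(\varepsilon_2^2 n)\right)$. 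The hypothesis $\varepsilon_2=\omega(\sqrt{\ln n/n})$ then forces $\nu_2\to 0$, and in particular $\nu_2<1$ for all large $n$ (needed below to handle $\nu_2^{1/B}$).

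Next I would dispatch the $\varepsilon$'s. The estimate above gives $\ln(1/\nu_2)=\ln n+2\ln\varepsilon_2-\ln\ln n+O(1)$; since $\varepsilon_2\le 1$ eventually, this is $O(\ln n)$ (and it also $\to\infty$, consistent with $\nu_2\to0$). Substituting into $\varepsilon_1=\varepsilon_3=\sqrt{2\lambda\,\E[Z_{(n)}^2]\ln(1/\nu_2)/B}$ and using $\E[Z_{(n)}^2]=O(\ln n)$ yields $\varepsilon_1^2=\varepsilon_3^2=O(\ln^2 n/B)$, which vanishes under the growth condition $B=\Omega(\ln^2 n)$ (if one wants $B$ of exact order $\ln^2 n$ it suffices to let $\varepsilon_2$ decay only slightly faster than $\sqrt{\ln n/n}$, so that $\nu_2$ is merely polylogarithmically small and $\ln(1/\nu_2)=o(\ln n)$). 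Combined with $\varepsilon_2=o(1)$ from the hypothesis, $\varepsilon_{n,B}=2\varepsilon_1+\varepsilon_2\to0$.

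It remains to handle $\nu_1=2\nu_2+2\p(Z_{(n)}^2>\lambda\E[Z_{(n)}^2])$ and $\nu_3=g(p,\nu_2,B)+2\p(Z_{(n)}^2>\lambda\E[Z_{(n)}^2])$. The tail term $\p(Z_{(n)}^2>\lambda\E[Z_{(n)}^2])\to0$ is exactly Lemma~\ref{lem:sub-gamma_tail}, whose hypotheses ($Y^2$ sub-gamma with $c>0$, $\E[Z_{(n)}^2]\sim a\ln n$ with $a\le c$, and $\lambda>c/a$) coincide with those assumed here; together with $\nu_2\to0$ this gives $\nu_1\to0$. The remaining piece, $g(p,\nu_2,B)=2(p+(1-p)\nu_2^{1/B})^B$, is the step I expect to be the main obstacle, because $\nu_2$ and $B$ both vary with $n$. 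The idea is to set $t:=\ln(1/\nu_2)/B$, so that $\nu_2^{1/B}=e^{-t}$ and $t\to0$ by the bounds $\ln(1/\nu_2)=O(\ln n)$ and $B=\Omega(\ln^2 n)$. Then $e^{-t}\le 1-t+t^2/2$ gives $p+(1-p)e^{-t}\le 1-(1-p)t(1-t/2)$, and $1-x\le e^{-x}$ together with $Bt=\ln(1/\nu_2)$ yields
\[
g(p,\nu_2,B)\le 2\exp\!\big(-(1-p)(1-t/2)\ln(1/\nu_2)\big)=2\,\nu_2^{(1-p)(1-t/2)}.
\]
Since $1-p$ stays bounded away from $0$ (it converges to $1/e$) and $t\to0$, the exponent $(1-p)(1-t/2)$ is eventually at least a positive constant, so $\nu_2\to0$ forces $g(p,\nu_2,B)\to0$, hence $\nu_3\to0$. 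Adding up, $\nu_{n,B}=\nu_1+\nu_2+\nu_3\to0$, which together with the $\varepsilon$ bound completes the argument. (One should keep $B$ integer-valued and diverging so the $e^{-x}$/Taylor manipulations are legitimate, but this is routine.)
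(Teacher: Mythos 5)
Your proposal is correct and follows essentially the same route as the paper's proof: the same explicit formula $\nu_2=\lambda\E[Z_{(n)}^2]\eta/(n\varepsilon_2^2)$, the same use of $\varepsilon_2=\omega(\sqrt{\ln n/n})$ and $\varepsilon_2=o(1)$, and the same appeal to Lemma~\ref{lem:sub-gamma_tail} for the tail probabilities inside $\nu_1$ and $\nu_3$. The one step where you genuinely diverge is the term $g(p,\nu_2,B)=2(p+(1-p)\nu_2^{1/B})^B$: the paper evaluates $\lim_n B\ln(p+(1-p)\nu_2^{1/B})$ by a Taylor expansion (as in (\ref{eq:g_limit})) and concludes $g\to 2\lim_n\nu_2^{1/e}=0$, whereas you derive the non-asymptotic bound $g\le 2\,\nu_2^{(1-p)(1-t/2)}$ with $t=\ln(1/\nu_2)/B\to 0$, via $e^{-t}\le 1-t+t^2/2$ and $1-x\le e^{-x}$. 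Your inequality is correct and arguably cleaner, since it yields an explicit finite-$n$ bound rather than a limit of a sequence in which $\nu_2$, $p$, and $B$ all vary with $n$. One further remark: your parenthetical about $\varepsilon_1$ puts its finger on a real imprecision. With only $\varepsilon_2=o(1)$ one generally has $\ln(1/\nu_2)=\Theta(\ln n)$ (e.g.\ $\varepsilon_2=n^{-1/4}$), so $\varepsilon_1=\Theta(\ln n/\sqrt{B})$, and $B=\Omega(\ln^2 n)$ yields only $\varepsilon_1=O(1)$, not $o(1)$; one needs either $B=\omega(\ln^2 n)$ or, as you suggest, $\varepsilon_2$ chosen close enough to the threshold $\sqrt{\ln n/n}$ that $\ln(1/\nu_2)=o(\ln n)$. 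The paper's own proof asserts that $B=\Omega(\ln^2 n)$ ``suffices'' without addressing this point, so your treatment is, if anything, more careful; the corollary should be read as asserting the existence of admissible choices of $(\varepsilon_2,B)$ rather than a claim holding for every such pair.
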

It is worth noting that there are multiple ways to let $\varepsilon_{n,B}$ and $\nu_{n,B}$ approach 0, as the dependence of $\varepsilon_2$, $B$, and even $\lambda$ on $n$ can all be manipulated. The point is that, theoretically, even the greedy RF can be stable with vanishing parameters. In practice, however, the stability of $\RF$ seems to hold in broader situations where both the moment and tail assumptions on $Y^2$ can be relaxed.

\begin{figure}
  \centering
  \begin{subfigure}{0.5\textwidth}
  \includegraphics[width=\linewidth]{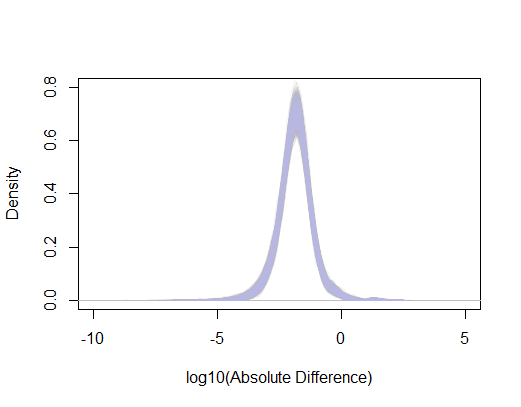}
  \end{subfigure}%
  \begin{subfigure}{0.5\textwidth}
  \includegraphics[width=\linewidth]{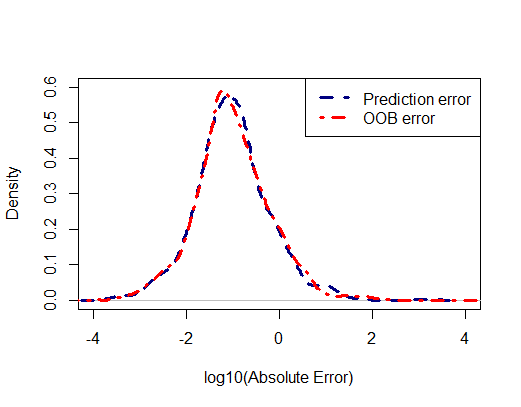}    
  \end{subfigure}
  \caption{Left: Density plots of the $\log_{10}$ absolute difference $|\RF(X)-\RFi(X)|$ for 3000 OOB predictors $\RFi$ on 1000 test points. We let $B=1000$. The RF stability (\ref{eq:RF_stability}) seems to persist, even though $Y$ follows the (heavy-tailed) standard Cauchy distribution. Numerically, we set $\hat \nu_{n,B}=0.05$ and calculated the maximum of the 0.95 quantile of the 3000 empirical distributions to have $\hat \varepsilon_{n,B}=0.237$. Right: Density plots of 1000 $\log_{10}$ absolute prediction errors $|Y-\RF(X)|$ and of 3000 $\log_{10}$ absolute OOB errors $|Y_i-\RFi(X_i)|$. The similarity between the plots supports the idea that the OOB errors can be used to construct PIs.}\label{fig:compare}
\end{figure}

\subsection{Stability in practice and limitations of theory}
We created a virtual dataset consisting of $n=4000$ points. We let $Y$ be a standard Cauchy random variable, which is even without a well-defined mean. The feature vector $X\in\R^3$ is determined as $X=[0.5Y+\sin(Y), Y^2-0.2Y^3, \mathbb{I}\{Y>0\}+\zeta]^T$ where $\zeta$ is a standard normal random variable. We used $3000$ of the points for training and $1000$ of them as test points. Using the \texttt{randomForest} package with default setting (except letting $B=1000$), we had an output RF predictor $\RF$. We also aggregated corresponding tree predictors to have $3000$ OOB predictors $\RFi$. For each $i\in[3000]$, we calculated the absolute difference $|\RF(X)-\RFi(X)|$ on $1000$ test points to come up with a density plot for such a difference, shown in Fig. \ref{fig:compare}. We also calculated $1000$ absolute prediction errors $|Y-\RF(X)|$ that are incurred by $\RF$ on test points, and $3000$ OOB errors $|Y_i-\RFi(X_i)|$, each incurred by an OOB predictor $\RFi$ on its OOB point $(X_i,Y_i)$. The computation can be done within a few minutes on a laptop. The density plots of these two kinds of errors are also shown in Fig. \ref{fig:compare}. This example shows that  the RF stability can be present beyond the realm guaranteed by the light-tail assumption. As mentioned above, this is because the bound (\ref{eq:RF_range}) can be conservative when $n$ is large. We hope our results can inspire future study towards a more informative bound. Also, the similarity between the prediction error and the OOB error in this heavy-tail case indicates that the RF-based PIs analyzed below can find more applications in practice than justified by the current theory.

\section{Random-forest prediction intervals}
\subsection{Comparison with related methods}
With the stability property of the RF, it is possible to construct PIs with finite-sample guaranteed coverage. Recent years have witnessed the development of distribution-free predictive inference \cite{angelopoulos2023conformal} with the full \cite{vovk2005algorithmic,shafer2008tutorial}, split \cite{papadopoulos2002inductive,vovk2012conditional,lei2018distribution}, and jackknife+ \cite{barber2021predictive,vovk2015cross} conformal prediction methods being three milestones. The full conformal method is computationally prohibitive when used in practice. The split method greatly reduces the computational cost but fails to thoroughly extract the available information of training data. The jackknife+ (\textbf{J+}) method maximizes the usage of data at a computational cost in between those of full and split methods. In \cite{kim2020predictive}, jackknife+-after-bootstrap (\textbf{J+aB}) was proposed for bagged algorithms to achieve the same goal as in J+, while the training cost can be further reduced. However, the number of bags $B$ is required to be a Binomial random variable, which might seem unnatural. It turns out that by further imposing the assumption of \emph{ensemble} stability (which is essentially the concentration of resampling measure), J+aB can still have guaranteed coverage with a fixed $B$. Ensemble stability is defined for bagged algorithms. It measures how typical a bootstrap sample is, and is different from the algorithmic stability that quantifies the influence of removing one training point. If algorithmic stability is also imposed, then not only J+aB, but also jackknife can provide guaranteed coverage, which is otherwise impossible \cite{steinberger2023conditional,barber2021predictive}. 

Conceptually, the J+ approach and its variants under stability conditions are particularly relevant to this work. As the stability we establish for the RF contains both ensemble and algorithmic components, we will generally refer to the J+aB method with both ensemble and algorithmic stability as \textbf{J+aBS} and the jackknife method with algorithmic stability as \textbf{JS}. Our method might be best described as ``jackknife-after-bootstrap-with-stability (\textbf{JaBS})'' tailored for the RF, which is different from both JS and J+aBS. Our method requires the least effort of computing as only one output predictor is needed, while all others require at least $n$ output predictors.

There also exist RF-based PIs \cite{johansson2014regression,zhang2019random} that are essentially of the jackknife-after-bootstrap (\textbf{JaB}) type and almost identical to ours practically when $\varepsilon$ is small and $n$ equals the size of a typical dataset. However, without stability, there is, in general, no guarantee for the coverage of such PIs, although the asymptotic coverage $1-\alpha$ can be established based on strong assumptions \cite{zhang2019random}. We take advantage of the stability of the RF algorithm to establish the lower bound of coverage in Theorem \ref{thm:PI_lower_bound} below. An upper bound is established in Theorem \ref{thm:PI_upper_bound} with an additional mild assumption. We also propose a weaker assumption for asymptotic coverage in Theorem \ref{thm:PI_asymptotic}.

\begin{table}
  \caption{Methods to construct prediction intervals using random forests: computational cost}
  \label{table:compare}
  \centering
  \begin{tabular}{lll}
    \toprule
    \cmidrule(r){1-3}
    Method     & Output predictors     & Prediction interval for future $Y$\\
    \midrule
    J+ \cite{barber2021predictive} & $\RF^{-i}, i\in[n]$  & $[q_{n,\alpha}^{-}\{\RF^{-i}(X)-R_i^{\mathrm{LOO}}\}, q_{n,\alpha}^{+}\{\RF^{-i}(X)+R_i^{\mathrm{LOO}}\}]$   \\
    
    J+aB \cite{kim2020predictive}     & $\RFi, i\in[n]$ & $[q_{n,\alpha}^{-}\{\RFi(X)-R_i\}, q_{n,\alpha}^{+}\{\RFi(X)+R_i\}]$    \\
    
    JS \cite{barber2021predictive}    & $\RF$ and $\RF^{-i}, i\in[n]$       & $\RF(X)\pm q_{n,\alpha}\{R_i^{\mathrm{LOO}}+\varepsilon\}$ \\
    
    J+aBS \cite{kim2020predictive}     & $\RFi, i\in[n]$       & $[q_{n,\alpha}^{-}\{\RFi(X)-R_i\}-\varepsilon, q_{n,\alpha}^{+}\{\RFi(X)+R_i\}+\varepsilon]$ \\
    
    JaB  & $\RF$ & $\RF(X)\pm q_{n,\alpha}\{R_i\}$ \cite{johansson2014regression} \\
    
    &  & $\RF(X)\pm q'_{n,\alpha}\{R_i\}$ \cite{zhang2019random}\\
    
    Ours (JaBS) & $\RF$ & $\RF(X)\pm q_{n,\alpha}\{R_i+\varepsilon\}$ (Theorem \ref{thm:PI_lower_bound})\\

    & & $\RF(X)\pm q_{n,\alpha}\{R_i-\varepsilon\}$ (Theorem \ref{thm:PI_upper_bound})\\

    & & $\RF(X)\pm q_{n,\alpha}\{R_i\}$ (Theorem \ref{thm:PI_asymptotic})\\
    \bottomrule
  \end{tabular}
\end{table}

\begin{table}
  \caption{Methods to construct prediction intervals using random forests: theoretical coverage}
  \label{table:compare2}
  \centering
  \begin{tabular}{lll}
    \toprule
    \cmidrule(r){1-3}
    Method     & Theoretical coverage     & Additional conditions \\
    \midrule
    J+ \cite{barber2021predictive} & $\geq 1-2\alpha$ & None   \\
    J+aB \cite{kim2020predictive}  & $\geq 1-2\alpha$ & Binomial $B$   \\
    JS \cite{barber2021predictive} & $\geq 1-\alpha-O(\sqrt{\nu})$  & Stability (algorithmic) \\
    J+aBS \cite{kim2020predictive} & $\geq 1-\alpha-O(\sqrt{\nu})$  & Stability (ensemble + algorithmic)\\
    JaB  & No guarantee \cite{johansson2014regression}  & - \\
      & $\to 1-\alpha$ \cite{zhang2019random}  &  Strong (additive model, consistency of RF predictor)\\
    Ours (JaBS) & $\geq 1-\alpha-O(\sqrt{\nu})$ & Stability (Theorem \ref{thm:PI_lower_bound})\\
    & $\leq 1-\alpha+\frac{1}{n+1}+O(\sqrt{\nu})$ & $+$ Distinct residuals (Theorem \ref{thm:PI_upper_bound})\\
    & $\to 1-\alpha$ & $+$ Uniformly equicontinuous CDF of $|Y-\RF(X)|$ \\
    & & and vanishing $\varepsilon,\nu$ (Theorem \ref{thm:PI_asymptotic})\\
    \bottomrule
  \end{tabular}
\end{table}

We compare these relevant methods to ours in Table \ref{table:compare} and Table \ref{table:compare2}, where the RF is set as the working algorithm for all methods and $(\varepsilon,\nu)$ is a general pair of stability parameters. We define $q_{n,\alpha}\{R_i\}$, $q^{+}_{n,\alpha}\{R_i\}$, $q^{-}_{n,\alpha}\{R_i\}$, and $q'_{n,\alpha}\{R_i\}$ as follows. Given $\{a_1,\ldots,a_n\}$, 
\begin{align*}
    q_{n,\alpha}\{a_i\}=q^{+}_{n,\alpha}\{a_i\}
    & {} \equiv
    \text{the $\lceil (1-\alpha)(n+1)\rceil$-th smallest value of $\{a_1,\ldots,a_n\}$,
    }\\
    q'_{n,\alpha}\{a_i\}    & {} \equiv
    \text{the $\lceil (1-\alpha)n\rceil$-th smallest value of $\{a_1,\ldots,a_n\}$,
    }\\
    q^{-}_{n,\alpha}\{a_i\}
    & {} \equiv
    \text{the $\lfloor \alpha(n+1)\rfloor$-th smallest value of $\{a_1,\ldots,a_n\}$,
    }
\end{align*}
where $\lfloor\cdot\rfloor$ is the floor function. Let $R_i^\mathrm{LOO} = |Y_i-\RF^{-i}(X_i)|$ be the LOO error, where $\RF^{-i}$ is trained without the $i$th training point, and by definition $\RF^{-i}\neq \RFi$. 

In Table \ref{table:compare}, we list the corresponding PI constructed from each method and the output predictors of each method. The number of output predictors directly reflects the computational cost. It is worth noting that acquiring the LOO predictor $\RF^{-i}$ needs substantial computation. In packages like \texttt{randomForest}, aggregating tree predictors to obtain the OOB predictor $\RFi$ also needs extra computation. However, the predicted value $\RFi(X_i)$ can be obtained immediately by calling the $\mathsf{predict()}$ function. The fact that the value of $\RFi(X)$ on a test point is \emph{not} needed further reduces the computational cost of JaB and our method, which only need one output RF predictor, and are more favorable computationally.  

In Table \ref{table:compare2}, we list the coverage of the PI constructed from each method, as well as the additional conditions (beyond iid data) needed to achieve the coverage. Note that J+ does not require any additional conditions to achieve the coverage lower bound $1-2\alpha$, but J+aB requires that the number of trees $B$ be a Binomial random variable. For JS, J+aBS, and our method, stability is needed to achieve the coverage lower bound $1-\alpha-O(\sqrt{\nu})$. With additional mild assumptions, the coverage upper bound and asymptotic coverage of our method can be established. However, there is no guarantee of coverage for JaB without strong assumptions.   

In summary, our theoretical work provides a series of coverage guarantees to a computationally feasible method for constructing PIs based on the RF algorithm. In the following, we will establish the lower and upper bound of coverage, as well as the asymptotic coverage.

\subsection{Non-asymptotic coverage guarantees}
\begin{theorem}[Coverage lower bound]\label{thm:PI_lower_bound}
    Suppose the RF predictor $\RF$ satisfies the stability condition as in Theorem \ref{thm:RF_stability}. Then we have for a test point $(X,Y)$ that
    \begin{align}\label{eq:PI_lower_bound}
        \p(Y\in\RF(X)\pm q_{n,\alpha}\{R_i+\varepsilon_{n,B}\}) \geq 1-\alpha - \nu_1-2\sqrt{\nu_2}-2\sqrt{\nu_3}.
    \end{align}
\end{theorem}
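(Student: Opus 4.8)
The plan is to reduce the coverage statement for the out-of-bag (OOB) prediction interval to a statement about the leave-one-out (LOO) jackknife interval of a \emph{stable} algorithm, for which the argument of \cite{barber2021predictive} (their ``JS'' bound) applies, and then to pay the price of the OOB-vs-LOO discrepancy and the finite-$B$ randomization using the stability quantities $\nu_1,\nu_2,\nu_3$ already isolated in Theorem \ref{thm:RF_stability}. Concretely, write the event of interest in terms of the quantity $\widehat R_i \equiv |Y_i - \RFi(X_i)|$ and note that, on the test point, $\RF(X) \pm q_{n,\alpha}\{R_i + \varepsilon_{n,B}\}$ contains $Y$ iff $|Y - \RF(X)| \le q_{n,\alpha}\{R_i + \varepsilon_{n,B}\}$. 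First I would replace $\RF(X)$ by $\rfall(X)$ (derandomize the test-point prediction) and replace each $\RFi(X_i)$ by $\rfi(X_i)$ inside the residuals, each replacement being valid up to the $\varepsilon_i$'s on an event of probability at least $1 - (\nu_1 + \nu_3)$ by the two concentration-of-measure legs (i) and (ii) invoked in the proof sketch of Theorem \ref{thm:RF_stability}; the inflation by $\varepsilon_{n,B}$ in the quantile is exactly what absorbs these shifts.

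Next, with the derandomized forest in hand, I would run the standard exchangeability/quantile-lemma argument. The core observation is that after derandomization $\rfi = \rfall^{-i}$, so the OOB residuals become genuine LOO residuals of the derandomized forest, which \emph{is} a symmetric algorithm in the training data. Then the augmented set $(X_1,Y_1),\dots,(X_n,Y_n),(X,Y)$ is exchangeable, and by the comparison step in \cite{barber2021predictive} one gets that the ``strange'' (non-covered) event, in which $|Y - \rfall^{-(n+1)}(X)|$ exceeds the appropriate order statistic of the full LOO residual vector, has probability at most $\alpha$; replacing the LOO predictor $\rfall^{-(n+1)}$ on the test point by $\rfall$ itself costs an extra stability term of order $\sqrt{\nu_2}$ via Markov's inequality applied to $\E|\rfall(X) - \rfi(X)|$ (Theorem \ref{thm:rf_stability} / Corollary \ref{cor:rf_consistency}). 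I would track constants so that the two stability leakages combine into the stated $2\sqrt{\nu_2}$; the $2\sqrt{\nu_3}$ term similarly arises from the ensemble-stability leg controlling $|\RFi(X_i) - \rfi(X_i)|$ on the $n$ training residuals (the quantile is over all $n$ of them, so a union-type bound gives the $g(p,\nu_2,B)$ piece inside $\nu_3$), and $\nu_1$ covers the test-point finite-$B$ deviation.

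The main obstacle I expect is the bookkeeping of which stability event is being used where, and in particular making the OOB-to-LOO passage rigorous: $\RFi$ is \emph{not} $\RF^{-i}$ for finite $B$ (as the paper stresses), so one cannot directly invoke a jackknife argument on $\RF$; the derandomization step is essential, and one must then carefully quantify the finite-$B$ gap on \emph{all} $n$ residuals simultaneously (hence the sum $\nu_1+\nu_3$ rather than a single $\nu$) while keeping the quantile inflation at a single deterministic $\varepsilon_{n,B}$. A secondary subtlety is that the order statistic $q_{n,\alpha}\{R_i + \varepsilon_{n,B}\} = q_{n,\alpha}\{R_i\} + \varepsilon_{n,B}$ since adding a constant commutes with taking order statistics, which is what lets the $\varepsilon$-shifts on individual residuals be absorbed cleanly; I would state this explicitly. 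Everything else — Markov's inequality on the derandomized difference, the quantile lemma, and collecting the three $\nu_i$'s — is routine once the reduction is set up, so the write-up should foreground the reduction and relegate the estimates to the appendix, consistent with how the paper handles the other proofs.
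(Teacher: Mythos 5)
Your overall route is the paper's: augment the training set with the test point, run the exchangeability/quantile argument on the derandomized forest (using $\rfi=\rfall^{-i}$), and pay for the three approximations $|\RF(X)-\rfall(X)|$, $|\rfall(X)-\rfi(X)|$, and $|\RFi(X_i)-\rfi(X_i)|$ with $\nu_1,\nu_2,\nu_3$. But there is a genuine gap in how you propose to pay for the last two. You claim the replacements $\RFi(X_i)\to\rfi(X_i)$ across all $n$ residuals hold ``on an event of probability at least $1-(\nu_1+\nu_3)$.'' The available bounds are marginal per index, $\p(|\RFi(X_i)-\rfi(X_i)|>\varepsilon_3)\le\nu_3$, so a simultaneous good event costs $n\nu_3$ by the union bound, which is useless. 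The mechanism that actually produces the stated $2\sqrt{\nu_3}$ is different: by the quantile-comparison lemma, the event that the $\alpha_2$-level quantile of $\{r_i\}$ exceeds the $(\alpha_2-\sqrt{\nu_3})$-level quantile of $\{R_i\}$ by more than $\varepsilon_3$ forces at least $(n+1)\sqrt{\nu_3}$ indices to have $r_i>R_i+\varepsilon_3$, and Markov's inequality on that count gives probability at most $n\nu_3/((n+1)\sqrt{\nu_3})\le\sqrt{\nu_3}$. One $\sqrt{\nu_3}$ is the probability of this bad event and the other is the deliberate shift of the quantile level; the same accounting yields $2\sqrt{\nu_2}$. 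Your write-up never mentions the level shift, so the square roots and the factors of $2$ are asserted rather than derived. (Your attribution of the $g(p,\nu_2,B)$ piece of $\nu_3$ to a ``union-type bound over the $n$ residuals'' is also off: it comes from the Binomial moment generating function of $B_i$ in the single-predictor Hoeffding bound.)

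A second, smaller misstep: you charge $\sqrt{\nu_2}$ to replacing $\widetilde{\rfall}^{\backslash(n+1)}$ by $\rfall$ at the test point. That replacement is an exact identity ($\widetilde{\rfall}^{\backslash(n+1)}$ is trained on precisely the original $n$ points, so it \emph{is} $\rfall$) and costs nothing. The $\sqrt{\nu_2}$ is incurred on the $n$ training residuals, where $\widetilde r_i$ uses a forest trained on the other $n$ points of the augmented set (including the test point) while $r_i$ uses one trained on the original $n-1$ points; passing between them is one application of derandomized stability, again routed through Markov plus a quantile-level shift. Only the test-point finite-$B$ replacement $|\RF(X)-\rfall(X)|$ is a single event and legitimately costs a bare $\nu_1$. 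Your observation that $q_{n,\alpha}\{R_i+\varepsilon_{n,B}\}=q_{n,\alpha}\{R_i\}+\varepsilon_{n,B}$ is correct and is indeed what lets the per-residual inflations be absorbed into the quantile.
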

This result is established by starting from the analysis of an imaginary extended dataset $\overline D = D\cup\{(X,Y)\}$, where the test point is \emph{assumed} to be known. We denote $(X,Y)$ as $(X_{n+1},Y_{n+1})$ for convenience. For all points in $\overline D$, consider the derandomized RF predictor $\widetilde \rfall^{\backslash i}$ that is built on $n$ data points without the $i$th point in $\overline D$, $i\in[n+1]$. One can then define the OOB error $\widetilde r_i\equiv |Y_i-\widetilde \rfall^{\backslash i}|$. Since all data are iid, we have that $\p(\widetilde r_{n+1}\leq q_{n,\alpha}\{\widetilde r_i\})\geq 1-\alpha$, where $q_{n,\alpha}\{\widetilde r_i\}$ is 
the $\lceil (1-\alpha)(n+1)\rceil$-th smallest value of $\{\widetilde r_1,\ldots,\widetilde r_n\}$. Next, notice $\widetilde r_{n+1}=|Y_{n+1}-\widetilde \rfall^{\backslash (n+1)}(X_{n+1})|=|Y_{n+1}-\rfall(X_{n+1})|$ by the definitions of $\widetilde \rfall^{\backslash (n+1)}$ and $\rfall$. By concentration of measure, $\rfall(X_{n+1})$ can be approximated by $\RF(X_{n+1})$, and thus $\widetilde r_{n+1}$ can be roughly replaced with $|Y_{n+1}-\RF(X_{n+1})|$, which is desired. Then by stability of $\rfall$, $\{\widetilde r_i\}$ can be approximated by $\{r_i\equiv|Y_i-\rfi(X_i)|\}$. Although $\{r_i\}$ is still unavailable in practice, by applying the idea of concentration of measure again, $\{r_i\}$ can be further approximated by $\{R_i\}$, which is accessible given $D$. Eventually, we can bound $|Y_{n+1}-\RF(X_{n+1})|$ in terms of $\{R_i\}$. The approximations are accounted for by the stability parameters in Theorem \ref{thm:RF_stability}. 

If we further assume that there are no ties among $\{\widetilde r_i\}, i\in[n+1]$, a typical case when $Y$ is continuous, then we can also establish the upper bound of coverage.
\begin{theorem}[Coverage upper bound]\label{thm:PI_upper_bound}
    Suppose there are no ties in $\{\widetilde r_i\}, i\in[n+1]$, and the RF predictor $\RF$ satisfies the stability condition as in Theorem \ref{thm:RF_stability}. Then
    \begin{align}\label{eq:PI_upper_bound}
        \p(Y\in\RF(X)\pm q_{n,\alpha}\{R_i-\varepsilon_{n,B}\}) \leq 1-\alpha + \frac{1}{n+1} + \nu_1 + 2\sqrt{\nu_2} + 2\sqrt{\nu_3}.
    \end{align}
\end{theorem}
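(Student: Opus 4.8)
The plan is to run the proof of Theorem~\ref{thm:PI_lower_bound} in reverse, replacing its one-sided exchangeability estimate with a two-sided one that exploits the absence of ties. As in that proof, work on the imaginary extended dataset $\overline D=D\cup\{(X,Y)\}$ with $(X,Y)=(X_{n+1},Y_{n+1})$; for each $i\in[n+1]$ let $\widetilde\rfall^{\backslash i}$ be the derandomized leave-one-out predictor trained on $\overline D\setminus\{(X_i,Y_i)\}$ and set $\widetilde r_i=|Y_i-\widetilde\rfall^{\backslash i}(X_i)|$. Because the $n+1$ data points are iid and the derandomized RF is a symmetric function of its training sample, $(\widetilde r_1,\ldots,\widetilde r_{n+1})$ is exchangeable; combined with the no-ties hypothesis, the rank of $\widetilde r_{n+1}$ among $\widetilde r_1,\ldots,\widetilde r_{n+1}$ is then uniform on $[n+1]$. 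A one-line count of how many of $\widetilde r_1,\ldots,\widetilde r_n$ fall strictly below $\widetilde r_{n+1}$ shows that $\widetilde r_{n+1}\leq q_{n,\alpha}\{\widetilde r_i\}$ is equivalent to this rank being at most $\lceil(1-\alpha)(n+1)\rceil$, whence
\begin{align*}
\p\left(\widetilde r_{n+1}\leq q_{n,\alpha}\{\widetilde r_i\}\right)=\frac{\lceil(1-\alpha)(n+1)\rceil}{n+1}\leq 1-\alpha+\frac{1}{n+1}.
\end{align*}

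It remains to show that, outside an event of probability at most $\nu_1+2\sqrt{\nu_2}+2\sqrt{\nu_3}$, one has the inclusion $\{\,|Y-\RF(X)|\leq q_{n,\alpha}\{R_i-\varepsilon_{n,B}\}\,\}\subseteq\{\,\widetilde r_{n+1}\leq q_{n,\alpha}\{\widetilde r_i\}\,\}$; chaining this with the display yields the claimed upper bound. The inclusion is obtained by deflating, rather than inflating as in Theorem~\ref{thm:PI_lower_bound}, along the same three approximations: (i) finite-$B$ concentration of $\RF(X)$ around its derandomized version $\rfall(X)$ gives $\widetilde r_{n+1}=|Y-\rfall(X)|\leq|Y-\RF(X)|+\varepsilon_1$ on a high-probability event; (ii) stability of the derandomized RF (Theorem~\ref{thm:rf_stability}), applied after relabeling so that $(X,Y)$ plays the role of the removed point and $X_i$ the role of a fresh test point, links $\{\widetilde r_i\}$ to the derandomized OOB errors $\{r_i=|Y_i-\rfi(X_i)|\}$ up to $\varepsilon_2$ for all but a small fraction of indices; and (iii) finite-$B$ concentration, with the random $B_i\sim\mathrm{Binomial}(B,1-p)$ controlled through the moment generating function (\ref{eq:BiMGF}), links $\{r_i\}$ to the observable $\{R_i\}$. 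The parameter $\varepsilon_{n,B}=\varepsilon_1+\varepsilon_2+\varepsilon_3$ is designed so that, combining (i)--(iii) and using monotonicity of order statistics together with the fact that only a vanishing fraction of indices are exceptional, one gets $q_{n,\alpha}\{R_i-\varepsilon_{n,B}\}+\varepsilon_1\leq q_{n,\alpha}\{\widetilde r_i\}$ on the good event. The failure probabilities of the concentration and stability events, aggregated exactly as in the proof of Theorem~\ref{thm:RF_stability}, total $\nu_1+2\sqrt{\nu_2}+2\sqrt{\nu_3}$, so the only genuinely new term is the $\tfrac{1}{n+1}$ produced by the ceiling above.

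The step I expect to be the main obstacle is the quantile bookkeeping in (ii)--(iii) carried out in this reverse direction. The stability and concentration statements are only in-probability, so one must convert ``a controlled number of indices $i$ have $|R_i-\widetilde r_i|$ large'' into ``the $\lceil(1-\alpha)(n+1)\rceil$-th order statistic of $\{R_i-\varepsilon_{n,B}\}$ lies below that of $\{\widetilde r_i\}$'', which requires the exceptional fraction of indices to be stochastically dominated by the slack already spent; this is precisely where the $\sqrt{\nu_2}$ and $\sqrt{\nu_3}$ factors (and the order-statistic index shift they must cover) originate, via Markov's inequality applied to the expected number of bad indices. A secondary subtlety is that $\RFi$ is neither literally $\RF^{-i}$ nor built from a deterministic number of trees, so step (iii) must simultaneously absorb the ensemble-stability discrepancy and the fluctuation of $B_i$; but those estimates are already available from the proof of Theorem~\ref{thm:RF_stability}, so the work here is reuse rather than anything new. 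Modulo this bookkeeping, the entire argument is the sign-flip of Theorem~\ref{thm:PI_lower_bound}, with the extra $1/(n+1)$ accounting for the discreteness of the rank.
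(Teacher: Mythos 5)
Your proposal is correct and follows essentially the same route as the paper: the extended-dataset exchangeability argument with the no-ties hypothesis yielding $\frac{\lceil(1-\alpha)(n+1)\rceil}{n+1}\leq 1-\alpha+\frac{1}{n+1}$, followed by the same three approximations (derandomized stability for $\widetilde r_i$ vs.\ $r_i$, concentration for $r_i$ vs.\ $R_i$, and concentration for $\rfall(X)$ vs.\ $\RF(X)$) run with deflated rather than inflated quantiles, with each $\sqrt{\nu_2}$ and $\sqrt{\nu_3}$ appearing once as a Markov-bounded bad-event probability and once as the quantile-level shift needed for the order-statistic comparison lemma. The only difference is presentational (you phrase the chaining as an event inclusion outside a bad event rather than as successive probability inequalities), which does not change the argument.
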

The upper bound can be established because if there are no ties among $\widetilde r_1,\ldots,\widetilde r_{n+1}$, then $\p(\widetilde r_{n+1}\leq q_{n,\alpha}\{\widetilde r_i\})\leq 1-\alpha+\frac{1}{n+1}$. The apparent symmetry between the lower and upper bound originates from the fact that they both are established by using the RF stability once and the concentration of measure twice. Note that this idea can be applied to JS intervals for an arbitrary stable algorithm in exactly the same way, providing a complement to the lower bound for JS intervals established in \cite{barber2021predictive}.
\begin{corollary}[Coverage upper bound for  jackknife-with-stability intervals]
\label{cor:general_stable_upper_bound}
    Let $\hat f$ be a predictor trained on $n$ iid data points and $\hat f^{- i}$ be the LOO predictor without the $i$th point. Suppose $\hat f$ is stable with $\p(|\hat f(X)-\hat f^{- i}(X)|>\varepsilon)\leq \nu$, and the LOO errors are distinct on the extended training set that includes an iid test point $(X,Y)$. Then we have $\p\left(|Y-\hat f(X)|\leq q_{n,\alpha}\{ r_i-\varepsilon\}\right)\leq 1-\alpha +\frac{1}{n+1}+2\sqrt{\nu}$, where $r_i$ are the LOO errors on the original training set.
\end{corollary}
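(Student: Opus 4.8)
The plan is to reuse, almost verbatim, the mechanism behind Theorem~\ref{thm:PI_upper_bound}, now with $\hat f$ an arbitrary stable predictor in place of $\RF$. First I would introduce the imaginary extended dataset $\overline D = D\cup\{(X,Y)\}$, write $(X,Y)=(X_{n+1},Y_{n+1})$, and for each $i\in[n+1]$ let $\hat g^{\backslash i}$ be the predictor obtained by running the algorithm (sharing the same internal randomness, if any) on the $n$ points $\overline D\setminus\{(X_i,Y_i)\}$, with extended leave-one-out error $\widetilde r_i = |Y_i-\hat g^{\backslash i}(X_i)|$. Because the $n+1$ points are iid and $i\mapsto \widetilde r_i$ is a symmetric construction, the vector $(\widetilde r_1,\dots,\widetilde r_{n+1})$ is exchangeable; combined with the assumed distinctness of the extended-set leave-one-out errors, the rank of $\widetilde r_{n+1}$ among $\widetilde r_1,\dots,\widetilde r_{n+1}$ is uniform on $\{1,\dots,n+1\}$, so with $m=\lceil(1-\alpha)(n+1)\rceil$ we get $\p(\mathrm{rank}(\widetilde r_{n+1})\le m+k)\le (m+k)/(n+1)$ for every integer $k\ge 0$.

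Second, I would connect $\widetilde r_{n+1}$ and $\{\widetilde r_i\}_{i\in[n]}$ to the quantities that actually appear in the stated interval. Since $\hat g^{\backslash(n+1)}$ is trained on $D$ itself, $\hat g^{\backslash(n+1)}=\hat f$ and hence $\widetilde r_{n+1}=|Y-\hat f(X)|$ \emph{exactly}. For $i\in[n]$, $\hat g^{\backslash i}$ is trained on the $n$-point iid sample $(D\setminus\{(X_i,Y_i)\})\cup\{(X,Y)\}$, while $\hat f^{-i}$ (the predictor behind $r_i$) is trained on that sample with the single point $(X,Y)$ removed, and $(X_i,Y_i)$ is an iid point independent of it; the stability hypothesis therefore applies after relabeling and yields $\p(|\hat g^{\backslash i}(X_i)-\hat f^{-i}(X_i)|>\varepsilon)\le\nu$, and by the reverse triangle inequality $\p(|\widetilde r_i - r_i|>\varepsilon)\le\nu$ for each $i\in[n]$.

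Third, I would control the target event $\{|Y-\hat f(X)|\le q_{n,\alpha}\{r_i-\varepsilon\}\}=\{\widetilde r_{n+1}\le q_{n,\alpha}\{r_i-\varepsilon\}\}$ by counting violated stability events rather than union-bounding them. Let $N=\#\{i\in[n]:|\widetilde r_i-r_i|>\varepsilon\}$, so $\E[N]\le n\nu$. On the target event, $r_j-\varepsilon\ge\widetilde r_{n+1}$ for at least $n-m+1$ indices $j$; discarding the at most $N$ of these that are ``bad'' leaves at least $n-m+1-N$ indices with $\widetilde r_j\ge r_j-\varepsilon\ge\widetilde r_{n+1}$, which forces $\mathrm{rank}(\widetilde r_{n+1})\le m+N$. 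Hence, for any $t>0$,
\[
\p\!\left(|Y-\hat f(X)|\le q_{n,\alpha}\{r_i-\varepsilon\}\right)\le \p(\mathrm{rank}(\widetilde r_{n+1})\le m+t)+\p(N>t)\le \frac{m+t}{n+1}+\frac{n\nu}{t},
\]
using Markov's inequality in the last step; taking $t=\sqrt{n(n+1)\nu}$ makes each of the two $t$-dependent terms at most $\sqrt\nu$, and $m/(n+1)\le 1-\alpha+\frac{1}{n+1}$, giving the claim.

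The step I expect to need the most care is the second one: one must verify that the stability assumption, which is phrased for the full-sample predictor versus a leave-one-out predictor evaluated at a \emph{fresh} test point, is precisely the comparison needed between $\hat g^{\backslash i}$ and $\hat f^{-i}$ at $X_i$ --- this is legitimate only because all $n+1$ points are iid, so one may designate the deleted point to be $(X,Y)$ and the evaluation point to be $(X_i,Y_i)$. The other subtlety worth flagging is the deliberate avoidance of a union bound over the $n$ indices (which would degrade $2\sqrt\nu$ to $n\nu$): instead one tracks the random count $N$ of violated stability events and balances $(m+t)/(n+1)$ against $n\nu/t$, and it is here --- and in the exchangeability argument of the first step --- that distinctness of the \emph{extended}-set leave-one-out errors is used.
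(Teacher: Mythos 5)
Your proposal is correct and follows essentially the same route as the paper: extend the training set by the test point, use exchangeability plus distinctness to get the baseline $1-\alpha+\frac{1}{n+1}$, relabel by iid-ness so the stability hypothesis controls $\p(|\widetilde r_i - r_i|>\varepsilon)$, and then bound the count of violated stability events by Markov's inequality rather than a union bound, balancing the two terms to get $2\sqrt{\nu}$. Your direct rank-plus-count bookkeeping (rank $\leq m+N$, threshold $t=\sqrt{n(n+1)\nu}$) is just a repackaging of the paper's quantile-comparison lemma with the shifted level $\alpha_2=\alpha+\sqrt{\nu}$; the two are mathematically identical.
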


\subsection{Asymptotic coverage guarantee}
As shown above, the stability parameters $(\varepsilon_{n,B},\nu_{n,B})$ can vanish when $n\to\infty$. It is reasonable to expect that  $\p(Y\in\RF(X)\pm q_{n,\alpha}\{R_i\})\to 1-\alpha$ in this limit, as is consistent with numerous empirical observations \cite{johansson2014regression,zhang2019random}. However, to achieve this goal, it seems that more assumptions are unavoidable. In \cite{zhang2019random}, the guaranteed coverage of the JaB method is established by assuming that $\RF(X)$ converges to some $f_0(X)$ in probability as $n\to\infty$, where $f_0$ is the true regression function of an additive model that generates the data. We show that this can be done under weaker conditions. 

\begin{theorem}[Asymptotic coverage]\label{thm:PI_asymptotic}  Denote $F_n$ as the CDF of $|Y-\RF(X)|$. Suppose $\{F_n\}_{n\geq n_0}$ is uniformly equicontinuous for some $n_0$. Then $\p(Y\in\RF(X)\pm q_{n,\alpha}\{R_i\})\to1-\alpha$ as $n\to\infty$ when conditions in Theorem \ref{thm:PI_lower_bound}, Theorem \ref{thm:PI_upper_bound}, and Corollary \ref{cor:epsilon_nu_to_0} are satisfied.
\end{theorem}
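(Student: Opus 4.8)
The plan is to sandwich $\p(Y\in\RF(X)\pm q_{n,\alpha}\{R_i\})$ between the two non-asymptotic bounds and then close the resulting gap using the equicontinuity hypothesis. Write $W=|Y-\RF(X)|$, $T=q_{n,\alpha}\{R_i\}$, $\epsilon_n=\varepsilon_{n,B}$, and $\eta_n=\nu_1+2\sqrt{\nu_2}+2\sqrt{\nu_3}$. Since $q_{n,\alpha}\{\cdot\}$ is an order statistic it is shift-equivariant, $q_{n,\alpha}\{R_i\pm\epsilon_n\}=T\pm\epsilon_n$, so Theorem~\ref{thm:PI_lower_bound} reads $\p(W\le T+\epsilon_n)\ge 1-\alpha-\eta_n$ and Theorem~\ref{thm:PI_upper_bound} reads $\p(W\le T-\epsilon_n)\le 1-\alpha+\frac{1}{n+1}+\eta_n$, while monotonicity of the event in its threshold gives $\p(W\le T-\epsilon_n)\le\p(W\le T)\le\p(W\le T+\epsilon_n)$. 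Under the hypotheses of Corollary~\ref{cor:epsilon_nu_to_0} (which are among the assumed conditions) we have $\epsilon_n\to0$ and $\nu_{n,B}\to0$, hence $\eta_n\to0$; so it suffices to prove $\p(W\le T+\epsilon_n)-\p(W\le T-\epsilon_n)=\p(T-\epsilon_n<W\le T+\epsilon_n)\to0$, after which all three sandwiched probabilities share the common limit $1-\alpha$.

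To control $\p(T-\epsilon_n<W\le T+\epsilon_n)$, the key step is to condition on everything generated by the training phase: let $\mathcal G=\sigma(D,\bm{\xi},\bm{r})$. Given $\mathcal G$, the fitted forest $\RF$ is a fixed function, $T$ is a fixed (and, for fixed $\alpha$ and large $n$, a.s.\ finite) number, and the conditional law of $W=|Y-\RF(X)|$ over the fresh test point is governed by the error CDF $F_n$ attached to the realized $\RF$. Therefore
\[
\p\!\left(T-\epsilon_n<W\le T+\epsilon_n\,\middle|\,\mathcal G\right)\le F_n(T+\epsilon_n)-F_n(T-\epsilon_n)\le\omega(2\epsilon_n),
\]
where $\omega$ is a modulus of continuity furnished by uniform equicontinuity of $\{F_n\}_{n\ge n_0}$, so that $\omega(\delta)\to0$ as $\delta\to0^+$ uniformly in $n\ge n_0$ (and in the realization). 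Taking expectations over $\mathcal G$ and using $\epsilon_n\to0$ gives $\p(T-\epsilon_n<W\le T+\epsilon_n)\le\omega(2\epsilon_n)\to0$, which finishes the proof.

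The \textbf{main obstacle} is the random-threshold issue: $T$ and $W$ are dependent through the training sample, so one cannot write $\p(T-\epsilon_n<W\le T+\epsilon_n)$ as a difference $F_n(\cdot)-F_n(\cdot)$ without first conditioning on $\mathcal G$ to make $T$ measurable and $W$'s conditional law an honest error CDF; correspondingly, the equicontinuity hypothesis must be applied uniformly not only over $n$ but also over that conditioning, which is the role of ``uniform'' in uniform equicontinuity and of reading $F_n$ as the CDF of $|Y-\RF(X)|$ for the (random) trained predictor $\RF$. The remaining ingredients are routine bookkeeping: $\epsilon_n,\eta_n\to0$ is exactly Corollary~\ref{cor:epsilon_nu_to_0} together with $\frac{1}{n+1}\to0$, the no-ties assumption needed to invoke Theorem~\ref{thm:PI_upper_bound} is among the stated conditions, and continuity of the error distribution lets us ignore the open/closed distinction at the endpoint $T-\epsilon_n$.
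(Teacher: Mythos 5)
Your proof is correct and follows essentially the same route as the paper: sandwich $\p(W\le T)$ between the two non-asymptotic bounds and use uniform equicontinuity of $\{F_n\}$ to show the $\pm\varepsilon_{n,B}$ gap $F_n(T+\varepsilon_{n,B})-F_n(T-\varepsilon_{n,B})$ vanishes (the paper does this via a.s.\ convergence plus bounded convergence and $\liminf$/$\limsup$ bookkeeping, you via a modulus of continuity; the content is identical). Your explicit conditioning on $\sigma(D,\bm{\xi},\bm{r})$ to justify writing the coverage probability as $\E[F_n(\cdot)]$ is in fact a more careful rendering of a step the paper states without comment, and correctly identifies that the equicontinuity hypothesis must be read as holding for the (conditional) error CDF of the realized forest.
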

\begin{remark}
   Intuitively, using errors from $\RFi$ that are trained on $n-1$ points to approximate those from $\RF$, trained on $n$ points, we only need this approximation to be exact asymptotically. There is no need for $\RF$ itself to converge to anything. This is one major conceptual difference between our work and \cite{zhang2019random}, and it is in this sense that our assumption is weaker. Practically, this kind of PI is recommended as it does not involve $(\varepsilon_{n,B},\nu_{n,B})$, and has great performance on numerous datasets.   
\end{remark}

\section{Conclusion}
In this work, for the first time, we theoretically establish the stability property of the greedy version of random forests, which is implemented in popular packages. The theoretical guarantee is based on a light-tail assumption of the marginal distribution of the squared response $Y^2$. However, numerical evidence suggests that this stability could persist in much broader situations. Based on the stability property and some mild conditions, we also establish finite-sample lower and upper bounds of coverage, as well as the exact coverage asymptotically, for prediction intervals constructed from the out-of-bag error of random forests, justifying random forests as an appealing method to provide both point and interval prediction simultaneously.  

\begin{ack}
Much of the work was completed while Yan Wang was a PhD student in the Department of Statistics at Iowa State University. This research was supported in part by the US National Science Foundation under grant HDR:TRIPODS 19-34884.
\end{ack}

\bibliographystyle{plainnat}
\bibliography{ref}

\newpage
\appendix
\section{Proof of Corollary \ref{cor:rf_consistency}}
Let  
\begin{align}\label{eq:eta}
    \eta\equiv\frac{p}{1-p}+\frac{q}{(1-p)^2}.
\end{align}
Then $\delta(D,X)=Z_{(n)}^2\eta/(\varepsilon^2n)$. Also, by (\ref{eq:p}) and (\ref{eq:q}), $\eta$ is upper bounded as 
\begin{align*}
\eta &{} = \frac{1}{1-p} -1 + \frac{\left(1-\frac{1}{n}\right)^{2n}-\left(1-\frac{2}{n}\right)^n}{\left(1-\frac{1}{n}\right)^{2n}}\\
&{} = \frac{1}{\left(1-\frac{1}{n}\right)^{n}} - \frac{\left(1-\frac{2}{n}\right)^n}{\left(1-\frac{1}{n}\right)^{2n}} = \frac{\left(1-\frac{1}{n}\right)^{n}-\left(1-\frac{2}{n}\right)^{n}}{\left(1-\frac{1}{n}\right)^{2n}} \\
&{} \leq \frac{\frac{1}{e}-0}{1/16} \text{   (since $n\geq2$ and $(1-1/n)^n$ is monotonically increasing in $n$)}\\
&{} = \frac{16}{e} < 6.
\end{align*}
As a result, for all $n\geq 2$, we have $\eta/n<3$.

Then, note that for a non-negative random variable $W$, it holds that $\E[W]=\int_0^\infty \p(W>t)dt$. By (\ref{eq:RF_range}) and (\ref{eq:rf_conditional_stability}), we have
\begin{align*}       \E\left[\left|\rfall(X)-\rfi(X)\right|\right] 
& {} = \E_{D,X}\left[\int_0^\infty \p_{\xi,r|D,X}\left(\left|\rfall(X)-\rfi(X)\right|>\varepsilon
\middle |D,X\right)d\varepsilon\right]\\
& {} = \E_{D,X}\left[\int_0^{2Z_{(n)}} \p_{\xi,r|D,X}\left(\left|\rfall(X)-\rfi(X)\right|>\varepsilon\middle |D,X\right)d\varepsilon\right]\\
& {} \leq \E_{D,X}\left[ \int_0^{2Z_{(n)}} \min\{\delta(D,X),1\}d\varepsilon\right]\\
& {} = \E_{D,X}\left[
\int_0^{\sqrt{Z_{(n)}^2\eta/n}} 1d\varepsilon + \int_{\sqrt{Z_{(n)}^2\eta/n}}^{2Z_{(n)}} \frac{Z_{(n)}^2\eta}{\varepsilon^2 n}d\varepsilon
\right]\\
&{} \text{(note $2Z_{(n)}> \sqrt{Z_{(n)}^2\eta/n}$ for $n\geq 2$)}\\
&{} < \E\left[2\sqrt{Z_{(n)}^2\eta/n}\right]\\
&{} \leq 2\sqrt{\E[Z_{(n)}^2]\eta/n}\text{ (by Jensen's inequality)},
\end{align*}
which completes the first part of the corollary. For the second part, we need the following lemma, which is established in \cite{downey1990distribution,correa2021asymptotic}.
\begin{lemma}\label{lem:E_max}
    For iid random variables $W_i,i\in[n]$, if $\E[|W_i|]<\infty$, then $\lim_{n\to \infty}\E[W_{(n)}]/n = 0$.
\end{lemma}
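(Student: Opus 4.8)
The plan is to reduce the claim about the signed maximum $W_{(n)}=\max_{i\in[n]}W_i$ to a bound on the absolute maximum $M_n\equiv\max_{i\in[n]}|W_i|$, and then to estimate $\E[M_n]$ using the tail-integral formula for the expectation of a nonnegative variable (the same identity $\E[W]=\int_0^\infty\p(W>t)\,dt$ already used above), a union bound, and a truncation argument. First I would observe that $W_1\le W_{(n)}\le M_n$, hence $\E[W_1]\le\E[W_{(n)}]\le\E[M_n]$; dividing by $n$ and recalling that $\E[W_1]$ is a fixed finite number, it is enough to show that $\E[M_n]/n\to0$.

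For that, I would write $\E[M_n]=\int_0^\infty\p(M_n>t)\,dt$ and use independence to get $\p(M_n>t)=1-(1-\p(|W_1|>t))^n\le\min\{1,\,n\,\p(|W_1|>t)\}$. Splitting the integral at an arbitrary threshold $A>0$ then yields $\E[M_n]\le A+n\int_A^\infty\p(|W_1|>t)\,dt$, so that $\E[M_n]/n\le A/n+\int_A^\infty\p(|W_1|>t)\,dt$. Given $\varepsilon>0$, the integrability hypothesis $\E[|W_1|]=\int_0^\infty\p(|W_1|>t)\,dt<\infty$ allows one to choose $A=A(\varepsilon)$ with $\int_A^\infty\p(|W_1|>t)\,dt<\varepsilon/2$; then letting $n\to\infty$ kills the $A/n$ term, so $\limsup_{n\to\infty}\E[M_n]/n\le\varepsilon/2$. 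Since $\varepsilon$ is arbitrary, $\E[M_n]/n\to0$, and the sandwich from the previous step finishes the proof.

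The step requiring care — essentially the only obstacle — is that the crude estimate $\E[M_n]\le\sum_{i\in[n]}\E[|W_i|]=n\,\E[|W_1|]$ only gives $\E[M_n]/n\le\E[|W_1|]$, which does not vanish. The truncation at level $A$ is precisely the device that converts the integrability of $W_1$ (equivalently, the fact that the tail mass $\int_A^\infty\p(|W_1|>t)\,dt$ can be made arbitrarily small) into the required $o(n)$ growth of $\E[M_n]$; the rest is routine. (Alternatively, one could deduce the $L^1$ statement from the a.s. fact $M_n/n\to0$ — itself a Borel–Cantelli consequence of $\sum_n\p(|W_1|>\varepsilon n)\le\varepsilon^{-1}\E[|W_1|]<\infty$ — together with a uniform-integrability check, but the direct tail computation above is shorter and self-contained.)
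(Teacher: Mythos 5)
Your proof is correct. The paper does not actually prove this lemma; it simply cites it from the literature (\cite{downey1990distribution,correa2021asymptotic}), so there is no in-paper argument to compare against. Your self-contained derivation is the standard one and every step checks out: the sandwich $\E[W_1]\le\E[W_{(n)}]\le\E[M_n]$ correctly reduces the signed maximum to the absolute maximum (and also guarantees $W_{(n)}$ is integrable), the tail-integral identity plus union bound gives $\E[M_n]\le A+n\int_A^\infty\p(|W_1|>t)\,dt$, and the truncation at a level $A(\varepsilon)$ chosen from the integrability of $|W_1|$ is exactly the device needed to beat the trivial bound $\E[M_n]\le n\,\E[|W_1|]$. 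Your closing remark correctly identifies that naive step as the one that fails, and the $\varepsilon$--$A$--$n$ order of quantifiers is handled properly. Nothing further is needed.
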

By this lemma, $\E[Y^2]<\infty$ implies $\E[Z_{(n)}^2]/n\to 0$ as $n\to\infty$. Since $\eta$ is bounded, it is clear that $|\rfall(X)-\rfi(X)|$ converges to 0 in mean, which further implies convergence in probability, completing the proof.

\begin{remark}
    We can actually have a slightly tighter bound in (\ref{eq:rf_conditional_stability}), and thus in Theorem \ref{thm:rf_stability} and Corollary \ref{cor:rf_consistency}. A closer look at the method in \cite{soloff2023bagging} indicates that the key point is to bound the conditional variance of $\tree$, $\E_{\xi,r|D,X}[(\rfall-\tree)^2]$. Note that $\rfall=p\times\E_{\xi,r|D,X}[\tree|i\in r]+(1-p)\times\E_{\xi,r|D,X}[\tree|i\notin r]\equiv p\times \rfall^i + (1-p)\times \rfi$, but $|\rfall^i - \rfi|$ is also bounded by $2Z_{(n)}$, and we have $|\rfall(X)-\rfi(X)|\leq 2p\times Z_{(n)}$. By (\ref{eq:p}), $p$ converges to a constant as $n\to\infty$. Hence this tighter bound does not provide any qualitative difference as $n$ increases, and we ignore this minor improvement in this work. 
\end{remark}

\section{Proof of Theorem \ref{thm:RF_conditional_stability} and discussion}
By the triangle inequality and the union bound, we have for some $t>0$ that
\begin{align}\label{eq:T123}
     \frac{1}{n}\sum_{i=1}^n\p_{\bm{\xi,r}|D,X}
    & 
     \left(
     \left.
    \left|\RF(X)-\RFi(X)\right|>\varepsilon + 2t
    \middle |D,X
    \right.
    \right) \notag\\
    \leq {} &  
    \frac{1}{n}\sum_{i=1}^n\p_{\bm{\xi,r}|D,X}\left(
    \left|\RF(X)-\rfall(X)\right|> t |D,X
    \right) \notag\\
    & + \frac{1}{n}\sum_{i=1}^n\p_{\bm{\xi,r}|D,X}\left(
    \left.
    \left|\rfall(X)-\rfi(X)\right|> \varepsilon
    \middle |D,X
    \right.
    \right) \notag\\
    & + \frac{1}{n}\sum_{i=1}^n\p_{\bm{\xi,r}|D,X}\left(
    \left.
    \left|\rfi(X)-\RFi(X)\right|> t
    \middle |D,X
    \right.
    \right) \notag\\
    \equiv {} & T_1 + T_2 + T_3.
\end{align}
In $T_2$, $\rfall$ and $\rfi$ are derandomized predictors, and by (\ref{eq:rf_conditional_stability}), we have 
\begin{align*}
T_2 \leq \delta.
\end{align*}
Of course, this result is nontrivial when $\delta\in(0,1)$. Note that $\rfall$ is the RF with an infinite number of trees, where each tree is characterized by a pair of independent random variables $(\xi,r)$, while $\RF$ consists of a finite number of $B$ trees. Since the value of $\RF(X)$ is bounded in $\left[-Z_{(n)},Z_{(n)}\right]$, $T_1$ can be bounded by Hoeffding's inequality as
\[
T_1\leq 2\exp\left(-\frac{Bt^2}{2Z_{(n)}^2}\right).
\]
The analysis of the first two terms $T_1$ and $T_2$ is similar to that in \cite{soloff2023bagging}, but the bound for $T_3$ to be developed is specific to the RF predictor, and is nontrivially different from the case in \cite{soloff2023bagging} where the number of bags for each LOO predictor is fixed to $B$. In our setting, $\RFi(X)\in[-Z_{(n)},Z_{(n)}]$ is the aggregation of $B_i$ trees where the $i$th training point is not included. Hence by Hoeffding's inequality and (\ref{eq:BiMGF}), we have
\begin{align*}
T_3\leq \frac{1}{n}\sum_{i=1}^n 
\E_{B_i|D,X}\left[
\left.
2\exp\left(-\frac{B_it^2}{2Z_{(n)}^2}\right)
\middle |D,X
\right.
\right]
= 2\left(p+(1-p)e^{-t^2/(2Z_{(n)}^2)}\right)^B.
\end{align*}

Now, we choose $t$ such that $(0,1)\ni \delta = \exp\left(-Bt^2/(2Z_{(n)}^2)\right)$, which then yields $t = \sqrt{\frac{2Z_{(n)}^2}{B}\ln\left(\frac{1}{\delta}\right)}$, and thus
\begin{align}\label{eq:g_def}
T_3 \leq 2\left(p+(1-p)\delta^{\frac{1}{B}}\right)^B\equiv g(p,\delta,B).
\end{align}
Combining these results together completes the proof.

\paragraph{Discussion.} Note that $g(p,\delta,B)$ is monotonically decreasing in $B$, because
\begin{align*}
    \frac{\partial \ln [g(p,\delta,B)/2]}{\partial B} 
    &{} = \ln\left(p+(1-p)\delta^{1/B}\right) - \frac{B(1-p)\delta^{1/B}}{p+(1-p)\delta^{1/B}}\frac{\ln\delta}{B^2}\\
    &{} \leq p\ln 1+ (1-p)\ln\delta^{1/B} - \frac{(1-p)\delta^{1/B}}{p+(1-p)\delta^{1/B}}\frac{\ln\delta}{B} \text{ (by Jensen's inequality)}\\
    &{} = (1-p)\left[1-\frac{\delta^{1/B}}{p+(1-p)\delta^{1/B}}\right]\frac{\ln\delta}{B}\\
    &{} =\frac{1-p}{B}\frac{p(1-\delta^{1/B})}{p+(1-p)\delta^{1/B}}\ln\delta\\
    &{} < 0 \text{ (as $0<\delta<1$).} 
\end{align*}
Moreover, since $g\geq 0$ and monotonically decreasing in $B$, it has a limit when $p$ and $\delta$ are fixed and $B\to\infty$:
\begin{align}\label{eq:g_limit}    \lim_{B\to\infty}\ln[g(p,\delta,B)/2] 
    &{} = \lim_{B\to\infty} B\ln\left[p+(1-p)\delta^{1/B}\right]\notag\\
    &{} = \lim_{B\to\infty} B\ln\left[p+(1-p)e^{\ln\delta/B}\right]\notag\\
    &{} = \lim_{B\to\infty} B\ln\left[p+(1-p)\left(1+\frac{\ln\delta}{B}\right)+o\left(\frac{\ln\delta}{B}\right)\right]\notag\\
    &{} = \lim_{B\to\infty} B\ln\left[1+\frac{\ln \delta^{1-p}}{B}+o\left(\frac{\ln\delta}{B}\right)\right]\notag\\
    &{} = \lim_{B\to\infty} \ln\left[\left(1+\frac{\ln \delta^{1-p}}{B}+o\left(\frac{\ln\delta}{B}\right)\right)^B\right]\notag\\
    &{} = \ln\delta^{1-p}.
\end{align}
Hence we have that $\lim_{B\to\infty}g(p,\delta,B) = 2\delta^{1-p}$. This result implies that if $B$ is allowed to increase in a way that is independent of $n$, then under the conditions of Theorem \ref{thm:RF_conditional_stability}, we have
\begin{align*}
\lim_{B\to\infty} \frac{1}{n}\sum_{i=1}^n\p_{\bm{\xi,r}|D,X}
     \left(
     \left.
    \left|\RF(X)-\RFi(X)\right|>\varepsilon + 2\sqrt{\frac{2Z_{(n)}^2}{B}\ln\left(\frac{1}{\delta}\right)}
    \middle | D,X
    \right.
    \right)  
    \leq 5\delta^{1-p}.
\end{align*}
We will come back to this analysis in the proof of Corollary \ref{cor:epsilon_nu_to_0} below where $B$ grows with $n$, but it still holds that $\ln\delta(n) = o(B(n))$.
\section{Proofs of Theorem \ref{thm:RF_stability}, Lemma \ref{lem:sub-gamma_tail}, and Corollary \ref{cor:epsilon_nu_to_0}; and discussion}
\subsection{Proof of Theorem \ref{thm:RF_stability}}
Similar to the proof for conditional stability, we have for some $\varepsilon_2\equiv\varepsilon, \varepsilon_1=\varepsilon_3\equiv t>0$ that 
    \begin{align*}
    \p_{D,X,\bm{\xi,r}}
    & 
     \left(
    \left|\RF(X)-\RFi(X)\right|>\varepsilon + 2t
    \right) \notag\\
    \leq {} &  
    \p_{D,X,\bm{\xi,r}}\left(
    \left|\RF(X)-\rfall(X)\right|> t
    \right) \notag\\
    & + \p_{D,X }\left(
    \left|\rfall(X)-\rfi(X)\right|> \varepsilon
    \right) \notag\\
    & + \p_{D,X,\bm{\xi,r}}\left(
    \left|\rfi(X)-\RFi(X)\right|> t
    \right) \notag\\
    \equiv {} & T'_1 + T'_2 + T'_3.
\end{align*}
We first consider the $T_2'$ term, which can be bounded when the stability condition (\ref{eq:rf_stability}) for $\rfall$  is applied. Thus  
\begin{align}\label{eq:RF_nu}
T_2' \leq \nu \equiv \nu_2/\lambda.
\end{align}
Then, taking expectation with respect to data distribution for $T_1$ and $T_3$ in (\ref{eq:T123}) yields 
\[
T'_1 \leq \E\left[2\exp\left(-\frac{Bt^2}{2Z_{(n)}^2}\right)\right],
\]
and
\[
T_3'\leq \E\left[2\left(p+(1-p)e^{-t^2/(2Z_{(n)}^2)}\right)^B\right].
\]
In order to obtain more informative bounds than the above ones, we introduce the following lemma.
\begin{lemma}
    Let $W$ be a random variable with finite mean, and suppose that $h$ is a generic monotonically increasing function of $W$ and is bounded in $[0,1]$. Then we have for some $\lambda > 1$ that
    \[
    \E[h(W)]\leq h(\lambda\E[W]) + \p(W>\lambda\E[W]).
    \]
\end{lemma}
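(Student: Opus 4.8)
The plan is a one-line conditioning argument: partition the probability space according to whether $W$ exceeds the threshold $\lambda\E[W]$ or not, and bound $h(W)$ differently on the two pieces. On the event $\{W\le \lambda\E[W]\}$ we use that $h$ is monotonically increasing, so $h(W)\le h(\lambda\E[W])$ pointwise. On the complementary event $\{W>\lambda\E[W]\}$ we simply use the uniform bound $h(W)\le 1$ coming from $h$ being valued in $[0,1]$.

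Concretely, I would write $\E[h(W)] = \E\!\left[h(W)\,\mathbb{I}\{W\le \lambda\E[W]\}\right] + \E\!\left[h(W)\,\mathbb{I}\{W> \lambda\E[W]\}\right]$, bound the first term by $h(\lambda\E[W])\,\p(W\le \lambda\E[W])\le h(\lambda\E[W])$ and the second by $\p(W>\lambda\E[W])$, and add. Note that boundedness of $h$ makes $h(W)$ automatically integrable, and finiteness of $\E[W]$ makes the threshold $\lambda\E[W]$ a well-defined point in the domain of $h$, so no integrability or measurability subtleties arise. In the intended application one takes $W=Z_{(n)}^2$ (so $\E[W]\ge 0$) and $h$ equal to one of the functions $z\mapsto 2\exp(-Bt^2/(2z))$ or $z\mapsto 2\big(p+(1-p)e^{-t^2/(2z)}\big)^B$ truncated to $[0,1]$, each of which is increasing in $z$; this is what converts the data-dependent bounds on $T_1'$ and $T_3'$ into the deterministic quantities $\varepsilon_1,\varepsilon_3,\nu_1,\nu_3$ of Theorem~\ref{thm:RF_stability}.

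There is essentially no hard step here; the ``obstacle'' is only to state the hypotheses cleanly enough that the pointwise inequalities are justified. The single point to be careful about is that $h(W)\le h(\lambda\E[W])$ on the good event genuinely requires both monotonicity of $h$ and that $\lambda\E[W]$ lie in its domain. It is also worth remarking (though not needed) that the stated inequality in fact holds for every $\lambda$, not merely $\lambda>1$; the restriction $\lambda>1$ matters only downstream, where it guarantees that $\lambda\E[W]$ sits strictly in the right tail of $W$ so that $\p(W>\lambda\E[W])$—hence $\nu_1,\nu_3$—is small, which under the sub-gamma assumption of Lemma~\ref{lem:sub-gamma_tail} vanishes as $n\to\infty$.
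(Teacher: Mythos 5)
Your proof is correct and is essentially identical to the paper's own argument: the paper also splits $\E[h(W)]$ over the events $\{W\le\lambda\E[W]\}$ and $\{W>\lambda\E[W]\}$, bounding the first piece by $h(\lambda\E[W])$ via monotonicity and the second by $\p(W>\lambda\E[W])$ via $h\le 1$. Your side remarks about integrability and about the inequality holding for any $\lambda$ are accurate but not needed.
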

\begin{proof}
    This can be established by noting that
    \begin{align*}
    \E[h(W)] 
    &{}= \E[h(W)\mathbb{I}\{W \leq \lambda \E[W]\}] + \E[h(W)\mathbb{I}\{W > \lambda \E[W]\}]\\
    &{} \leq h(\lambda\E[W]) + \p(W > \lambda \E[W]).
    \end{align*}
    The first term on the right-hand side results from $h$ being monotonically increasing, and the second term from $h$ being upper bounded by $1$.
\end{proof}

Applying this lemma to the upper bounds of $T'_1$ and $T'_3$ above yields 
\[
T'_1 \leq 2\exp\left(-\frac{Bt^2}{2\lambda\E[Z_{(n)}^2]}\right) + 2\p\left( Z_{(n)}^2 > \lambda\E[Z_{(n)}^2]\right)\equiv \nu_1,
\]
and
\[
T_3'\leq 2\left(p+(1-p)e^{-t^2/(2\lambda\E[Z_{(n)}^2])}\right)^B + 2\p\left( Z_{(n)}^2 > \lambda\E[Z_{(n)}^2]\right)\equiv\nu_3.
\]
Now we choose $t$ such that $\nu_2 = \exp\left(-Bt^2/(2\lambda\E[Z_{(n)}^2])\right)$ to complete the proof of Theorem \ref{thm:RF_stability}.
\begin{remark}
    Unlike the conditional stability case, an additional tail probability term is introduced in the upper bounds for both $T'_1$ and $T'_3$. This is because when $Z_{(n)}^2$ is unbounded and no detailed information of the data-generating distribution is provided, there seems no universal way to control terms like $\E[\exp(-1/Z_{(n)}^2)]$. We tackle this problem by introducing the tail probability. As shown below, it is not difficult to control this term for typical data distributions. 
\end{remark}

\subsection{Proof of Lemma \ref{lem:sub-gamma_tail}}
Suppose $Z_1^2$ is sub-gamma on the right tail with parameters $(\sigma^2, c)$, then it is known that $\E[Z_{(n)}^2]$ is at most on the order of $O(\ln n)$ by the maximal inequality for sub-gamma random variables \cite{boucheron2013concentration}. More precisely, we have
\[
\E[Z_{(n)}^2] \leq \sqrt{2\sigma^2\ln n} + c\ln n.
\]
When $c=0$, this reduces to the maximal inequality for sub-Gaussian random variables. When $c>0$, the leading term of the bound is $O(\ln n)$, meaning that the tail probability decays in an exponential, rather than Gaussian, way. The parameter $c$ also sets an upper bound quantitatively, and for large $n$, we must have $\E[Z_{(n)}^2]\sim a \ln n$ with $a\leq c$.

Since data are iid, the tail probability for the order statistic $Z_{(n)}^2$ can be rewritten as
\begin{align*}
\p\left( Z_{(n)}^2 > \lambda\E[Z_{(n)}^2]\right) 
& = {}  1 - \left[1- \p\left( Z_1^2  > \lambda \E[Z_{(n)}^2]\right)\right]^n \leq n \p\left( Z_1^2  > \lambda \E[Z_{(n)}^2]\right). 
\end{align*}
Moreover, we have the concentration inequality that for every $t>0$ \cite[][]{boucheron2013concentration},
\[
\p\left(Z_1^2>\sqrt{2\sigma^2 t } + ct + \E[Z_1^2]\right) \leq e^{-t}.
\]
Setting $\lambda \E[Z_{(n)}^2]=\sqrt{2\sigma^2 t } + ct + \E[Z_1^2]$, and noticing $t = \frac{\lambda a}{c}\ln n + o(\ln n)$ for large $n$, we obtain for $\lambda>\frac{c}{a}$ that
\begin{align*}
    \lim_{n\to\infty} \p\left(Z_{(n)}^2>\lambda\E\left[Z_{n}^2\right]\right) 
    \leq \lim_{n\to\infty} ne^{-\frac{\lambda a}{c}\ln n}
    = \lim_{n\to\infty}\left(\frac{1}{n}\right)^{\frac{\lambda a}{c}-1} = 0,
\end{align*}
completing the proof.

\subsection{Proof of Corollary \ref{cor:epsilon_nu_to_0}}
Consider each pair of $(\varepsilon_i,\nu_i$) defined above. First, the $(\varepsilon_2, \nu_2/\lambda)$ pair satisfies the stability condition (\ref{eq:rf_stability}) of the derandomized RF. Hence for $(\varepsilon_2,\nu_2)$ to converge to $0$, we require
\[\varepsilon_2(n)\to 0 \text{ as } n\to\infty,\]
and 
\[\nu_2(n)=\frac{\lambda\E[Z_{(n)}^2]\eta(n)}{n\varepsilon_2^2(n)}\to 0 \text{ as } n\to\infty,\]
where $\eta(n)$ is defined in (\ref{eq:eta}). It is clear that $\varepsilon_2(n)=o(1)$ is required. Since $\eta(n)$ converges to a positive constant, and $\E[Z_{(n)}^2]\sim a\ln n$ with $a\leq c$ by the sub-gamma assumption \footnote{Strictly speaking, it is possible for a sub-gamma random variable to have the scaling of $\E[Z_{(n)}^2]$ in between $\sqrt{\ln n}$ and $\ln n$, but the point is that the heavier the tail (i.e., the faster growth of $\E[Z_{(n)}^2]$), the more difficult to achieve stability in theory. So we focus on the most heavy-tail case that $\E[Z_{(n)}^2]$ scales as $\ln n$.}, we thus have
\[\nu_2(n)=\Theta\left(\frac{\ln n}{n\varepsilon_2^2(n)}\right).\]
So if $\nu_2(n)\to 0$ as $n\to\infty$, we must also have 
\[
\varepsilon_2(n) = \omega\left(\sqrt{\frac{\ln n}{ n}}\right).
\]
Second, consider the $(\varepsilon_1,\nu_1)$ pair. In $\nu_1$, the additional tail probability term converges to $0$ by the sub-gamma assumption, as proved in Lemma \ref{lem:sub-gamma_tail}, so $\nu_1$ converges to $0$. For $\varepsilon_1$, by definition, 
\[
\varepsilon_1 
= \sqrt{\frac{2\lambda\E[Z_{(n)}^2]}{B}\ln\left(\frac{1}{\nu_2}\right)}
= O\left(\sqrt{\frac{\ln n}{B}\ln\left(\frac{n\varepsilon_2^2}{\ln n}\right)}\right).
\]
Since $\varepsilon_2$ satisfies both $\varepsilon_2=o(1)$ and $\varepsilon_2= \omega(\sqrt{\ln n/n})$, letting $B=\Omega(\ln^2 n)$ suffices for $\varepsilon_1$ to converge to $0$.

Last, consider the $(\varepsilon_3,\nu_3)$ pair. Since $\varepsilon_3=\varepsilon_1$, hence $\varepsilon_3$ converges to $0$. In $\nu_3$, the tail probability term is proved to converge to $0$ above, and it remains to show $(p+(1-p)\nu_2^{\frac{1}{B}})^B$ converges to $0$. In (\ref{eq:g_limit}), we have seen in the conditional stability case that $g(p,\delta,B)\to 2\delta^{1-p}$ as $B\to\infty$, as long as $\ln\delta = o(B)$. Similarly, when $B(n)=\Omega(\ln^2 n)$, then $\ln\nu_2 = o(B)$, and we have 
\begin{align*}
    &\lim_{n\to\infty} B(n)\ln\left(p(n)+(1-p(n))\nu_2(n)^{\frac{1}{B(n)}}\right)\\
    = {} & \lim_{n\to\infty} B(n) \ln\left(p(n)+(1-p(n))\exp\left(\frac{\ln\nu_2(n)}{B(n)}\right)\right)\\
    = {} & \lim_{n\to\infty} B(n) \ln\left(p(n)+(1-p(n))\left(1+\frac{\ln\nu_2(n)}{B(n)}\right)\right)\\
    = {} & \lim_{n\to\infty} B(n) \ln\left(1+\frac{(1-p(n))\ln\nu_2(n)}{B(n)}\right)\\
    = {} & \lim_{n\to\infty} B(n) \ln\left(1+\frac{(1/e+O(1/n))\ln\nu_2(n)}{B(n)}\right)\\
    = {} & \lim_{n\to\infty} B(n) \ln\left(1+\frac{\ln\nu_2^{1/e}(n)}{B(n)}\right)\\
    = {} & \lim_{n\to\infty}  \ln\left[ \left(1+\frac{\ln\nu_2^{1/e}(n)}{B(n)}\right)^{B(n)} \right],
\end{align*}
which indicates
\begin{align*}
    \lim_{n\to\infty} (p(n)+(1-p(n))\nu_2(n)^{\frac{1}{B(n)}})^{B(n)} 
    = \lim_{n\to\infty}  \left(1+\frac{\ln\nu_2^{1/e}(n)}{B(n)}\right)^{B(n)} = \lim_{n\to\infty}  \nu_2^{1/e}(n) = 0,
\end{align*}
and the proof is completed.

\subsection{Discussion: beyond the sub-gamma assumption}
There is some subtlety in choosing the dependence of $\varepsilon_2$, $B$, and $\lambda$ on $n$. We consider the case that $\lambda$ is a fixed number above. However, if we make $\lambda$ also depend on $n$, then the assumption that $Y^2$ is sub-gamma can be removed. To see this, consider $\lim_{n\to\infty}\lambda(n)=\infty$. Then by Markov's inequality,
\[
\lim_{n\to\infty}\p(Z_{(n)}^2>\lambda(n)\E[Z_{(n)}^2])\leq\lim_{n\to\infty}\frac{1}{\lambda(n)}=0.
\]
So in this case, $\E[Z_{(n)}^2]$ can have a faster growth rate in $n$ and the sub-gamma assumption of $Y^2$ is not needed to control the tail probability. Consider 
$\E[Z_{(n)}^2]=\omega(\ln n)$, so that $Y^2$ has a heavier tail than sub-gamma. Requiring $\nu_2(n)$ to converge to $0$ then results in 
\[
\varepsilon_2(n)=\omega\left(\sqrt{\lambda(n)\E[Z_{(n)}^2]/n}\right),
\]
but $\varepsilon_2(n)$ also has to converge to 0, i.e., $\varepsilon_2(n)=o(1)$. This introduces another constraint on $\E[Z_{(n)}^2]$ that $\E[Z_{(n)}^2]=o(n/\lambda(n))$. A straightforward analysis shows that 
\[
B(n)=\Omega(\lambda(n)\E[Z_{(n)}^2]\ln (1/\nu_2(n)))
\]
suffices to guarantee the convergence of $\varepsilon_1$, $\varepsilon_3$, and $\nu_3$. 

Summing up, we find that by allowing $\lim_{n\to\infty}\lambda(n)=\infty$, one can generalize the results in Corollary \ref{cor:epsilon_nu_to_0} to the case that $Y^2$ follows a distribution beyond sub-gamma. The scaling of $\E[Z_{(n)}^2]$ is an indicator of the tail behavior of the underlying distribution of $Y^2$, and $\E[Z_{(n)}^2]$ that satisfies both
\[
\E[Z_{(n)}^2] = \omega(\ln n) \text{ and } \E[Z_{(n)}^2] = o(n/\lambda(n))
\]
represents a wide class of distributions with tails heavier than sub-gamma for proper dependence of $\lambda(n)$ on $n$. 

\begin{example}
   Consider some random variable $Y^2$ with $\E[Z_{(n)}^2]=\Theta(n^{1/4})$, and pick $\lambda(n)=\Theta(n^{1/4})$, $\varepsilon_2=\Theta(n^{-1/6})$, and $B=\Omega(n\ln n)$. Then consequently 
\begin{align*}
\nu_2 &{} = \frac{\lambda\E[Z_{(n)}^2]\eta}{n\varepsilon_2^2} = \Theta(n^{-1/6}),\\
\varepsilon_1=\varepsilon_3 &{} = \sqrt{\frac{2\lambda\E[Z_{(n)}^2]}{B}\ln\left(\frac{1}{\nu_2}\right)} = O(n^{-1/4}),\\
\nu_1 &{} = 2\nu_2+2\p( Z_{(n)}^2 > \lambda\E[Z_{(n)}^2]) = \Theta(n^{-1/6}) + O(n^{-1/4})=\Theta(n^{-1/6}),\\
\nu_3 &{} = 2(p+(1-p)\nu_2^{\frac{1}{B}})^B + 2\p( Z_{(n)}^2 > \lambda\E[Z_{(n)}^2]) = \Theta(n^{-1/(6e)})+O(n^{-1/4})=\Theta(n^{-1/(6e)}).
\end{align*} 
Hence $\varepsilon_{n,B}$ and $\nu_{n,B}$ converge to 0 for a heavy-tailed $Y^2$ beyond sub-gamma.
\end{example}

In Table \ref{table:stability_conditions}, we summarize the limiting stability parameters that can be achieved based on our theory, with properly chosen $\varepsilon_2$, $B$, and $\lambda$. This result shows the wide applicability of the RF stability. Sub-Gaussian is a subset of sub-gamma random variables with $c=0$, and the analysis is similar. For bounded $Y^2$, there is even no need to introduce the tail probability term because $|Y|\leq M$ implies $Z_{(n)}^2\leq M^2$, and $T'_1$ and $T'_3$ are naturally bounded as
\[
T'_1 \leq 2\exp\left(-\frac{Bt^2}{2M^2}\right),
\]
and
\[
T_3'\leq 2\left(p+(1-p)e^{-t^2/(2M^2)}\right)^B,
\]
respectively. Both of the upper bounds are deterministic, and the analysis is greatly simplified. All we need to do is replace $\lambda\E[Z_{(n)}^2]$ with $M^2$ and drop the tail probability term in Theorem \ref{thm:RF_stability}. 
\begin{corollary}
Assume training points in set $D$ and the test point $(X,Y)$ are iid, and $|Y|\leq M$. For the RF predictor $\RF$ consisting of $B$ trees and trained on $D$, we have 
    \begin{align}
    \p_{D,X,\bm{\xi,r}}
      \left(
      \left|\RF(X)-\RFi(X)\right|>\varepsilon_{n,B}
      \right) 
    \leq \nu_{n,B},
    \end{align}
where $\varepsilon_{n,B}=\sum_{i=1}^3\varepsilon_i$, and $\nu_{n,B}=\sum_{i=1}^3\nu_i$. The pair of $(\varepsilon_2,\nu_2)$ satisfies the derandomized RF stability condition 
\[
\p_{D,X}\left(
       \left| \rfall(X) - \rfi(X) \right| > \varepsilon_2
       \right) 
       \leq  
       \frac{M^2}{\varepsilon^2n}
       \left(\frac{p}{1-p}+\frac{q}{(1-p)^2}\right) = \nu_2,
\]
where $\varepsilon_1=\varepsilon_3=\sqrt{2M^2\ln(\frac{1}{\nu_2})/B}$, $\nu_1=2\nu_2$, and $\nu_3=g(p,\nu_2,B)$.
\end{corollary}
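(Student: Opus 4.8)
The plan is to specialize the proof of Theorem~\ref{thm:RF_stability} to the bounded case $|Y|\le M$, which entails two simplifications: (i) the conditional quantity $Z_{(n)}^2$ is deterministically dominated by $M^2$, so taking expectations over the data in the bounds for $T_1'$ and $T_3'$ no longer requires the auxiliary Lemma on monotone functions of $W=Z_{(n)}^2$, and (ii) the tail-probability terms $\p(Z_{(n)}^2>\lambda\E[Z_{(n)}^2])$ that appeared in $\nu_1$ and $\nu_3$ can be dropped entirely, which also removes the need for the inflation factor $\lambda$. Concretely, I would start from exactly the same triangle-inequality/union-bound decomposition
\begin{align*}
\p_{D,X,\bm{\xi,r}}\left(\left|\RF(X)-\RFi(X)\right|>\varepsilon_2 + 2t\right)
\le T_1' + T_2' + T_3',
\end{align*}
where $T_1'=\p(|\RF(X)-\rfall(X)|>t)$, $T_2'=\p(|\rfall(X)-\rfi(X)|>\varepsilon_2)$, and $T_3'=\p(|\rfi(X)-\RFi(X)|>t)$, identifying $\varepsilon_1=\varepsilon_3=t$ and $\varepsilon_2=\varepsilon$.

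For the middle term, I would invoke Theorem~\ref{thm:rf_stability} directly, but now with $\E[Z_{(n)}^2]$ replaced by $M^2$ since $\rfall,\rfi$ are averages of tree predictors bounded in $[-M,M]$; this gives $T_2'\le M^2\eta/(n\varepsilon_2^2)=\nu_2$, with $\eta$ as in~\eqref{eq:eta}. (No $\lambda$ is needed because the bound is already deterministic in the data.) For $T_1'$, since $\RF(X)$ is an average of $B$ i.i.d.\ tree predictors (over $(\bm\xi,\bm r)$) each bounded in $[-M,M]$ with common mean $\rfall(X)$, Hoeffding's inequality gives $T_1'\le 2\exp(-Bt^2/(2M^2))$ pointwise in $D,X$, hence after averaging over the data. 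For $T_3'$, following the same argument as in the proof of Theorem~\ref{thm:RF_conditional_stability}, $\RFi(X)$ is an average of $B_i$ tree predictors (the OOB ones) bounded in $[-M,M]$ with mean $\rfi(X)$, so Hoeffding conditionally on $B_i$ and then averaging over $B_i\sim\mathrm{Binomial}(B,1-p)$ via the MGF~\eqref{eq:BiMGF} yields $T_3'\le 2(p+(1-p)e^{-t^2/(2M^2)})^B$, again uniformly in the data so the same bound survives taking $\E_{D,X}$.

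Finally I would close the loop by the same choice of $t$ as before: set $\nu_2=\exp(-Bt^2/(2M^2))$, i.e.\ $t=\sqrt{2M^2\ln(1/\nu_2)/B}$, which makes $T_1'\le 2\nu_2\equiv\nu_1$ and $T_3'\le 2(p+(1-p)\nu_2^{1/B})^B=g(p,\nu_2,B)\equiv\nu_3$, with $\varepsilon_1=\varepsilon_3=t=\sqrt{2M^2\ln(1/\nu_2)/B}$; summing, $\p(|\RF(X)-\RFi(X)|>\varepsilon_{n,B})\le\nu_{n,B}$ with $\varepsilon_{n,B}=\sum\varepsilon_i$, $\nu_{n,B}=\sum\nu_i$. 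There is essentially no real obstacle here — the bounded case is strictly easier than Theorem~\ref{thm:RF_stability} — but the one point requiring care is the $T_3'$ step: one must keep track that the number of OOB trees $B_i$ is random (Binomial, not fixed $B$), so the Hoeffding bound must be applied conditionally on $B_i$ and then the expectation evaluated using the exact MGF $\E[e^{sB_i}]=(p+(1-p)e^s)^B$, rather than naively substituting $B$ for $B_i$; this is the only place the argument differs structurally from a textbook bagging stability bound.
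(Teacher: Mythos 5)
Your proposal is correct and matches the paper's own argument: the paper proves this corollary by exactly the specialization you describe, namely replacing $\lambda\E[Z_{(n)}^2]$ with the deterministic bound $M^2$ in the $T_1'$, $T_2'$, $T_3'$ decomposition of Theorem \ref{thm:RF_stability}, dropping the tail-probability terms, and choosing $t=\sqrt{2M^2\ln(1/\nu_2)/B}$. Your care with the Binomial $B_i$ and the MGF in the $T_3'$ step is precisely what the paper does as well.
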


Based on our theory, for a random variable $Y^2$ with $\E[Z_{(n)}^2]=O(n)$, there is no hope to get both $\nu_{n,B}$ and $\varepsilon_{n,B}$ to converge to $0$. But as we show in Fig. \ref{fig:compare}, there are hints that the RF stability persists beyond our theory. This is because (\ref{eq:RF_range}) is used to provide the worst-case deviation bound between $\rfall$ and $\rfi$. If a more informative bound can be found to replace (\ref{eq:RF_range}), and the dependence of such a bound on $n$ is $o(\E[Z_{(n)}^2])$, then it is possible to find vanishing stability parameters even for strongly heavy-tailed random variables theoretically. Such an improved bound may also help boost the convergence rates of $\varepsilon_{n,B}$ and $\nu_{n,B}$. It might be  a future research direction to look for a better bound. 

\begin{table}
  \caption{Summary of random forest stability conditions}
  \label{table:stability_conditions}
  \centering
  \begin{tabular}{lll}
    \toprule
    \cmidrule(r){1-3}
    Scaling of $\E[Z_{(n)}^2]$     &  Value of $\lambda$     & Stability parameters \\
    \midrule
    $O(1)$ (bounded $Y^2$) & Unnecessary & $\varepsilon_{n,B}\to0$, $\nu_{n,B}\to0$    \\
    $O(\sqrt{\ln n})$ (sub-Gaussian $Y^2$) & $\text{Constant}$ & $\varepsilon_{n,B}\to0$, $\nu_{n,B}\to0$    \\
    $O(\ln n)$ (sub-gamma $Y^2$) & $\text{Constant}$  & $\varepsilon_{n,B}\to0$, $\nu_{n,B}\to0$    \\
    Between $\omega(\ln n)$ and $o(n/\lambda(n))$ & $\lambda\to\infty$ & $\varepsilon_{n,B}\to0$, $\nu_{n,B}\to0$ \\
    \bottomrule
  \end{tabular}
\end{table}

\subsection{More examples of RF stability}
We consider the RF stability on four real datasets, publicly available at UCI Machine Learning Repository \cite{asuncion2007uci}. We name them as \texttt{Concrete} \cite{yeh1998modeling}, \texttt{Airfoil} \cite{brooks1989airfoil}, \texttt{Bioconcentration} \cite{grisoni2016investigating}, and \texttt{ Naval }\cite{coraddu2016machine}. For each dataset, we investigate three aspects:

\paragraph{Marginal distribution of $Y$.} As shown in Fig. \ref{fig:more_compare} (left column), all the density plots of $Y$ seem to have a light tail. In fact, the response in many real datasets is bounded or narrowly distributed within a given interval, which has to do with the physical constraints. For example, the strength of some material is determined by the underlying chemical bond strength, and cannot be arbitrarily large.

\paragraph{Difference between the RF predictor and OOB predictors.} To this end, we randomly split every dataset into two parts with equal size $n$. One is for training and the other for testing. Based on $n$ training points, we have an RF predictor $\RF$, as well as $n$ OOB predictors $\RFi$. We fix $B=1000$ in all cases. For each of these predictors, we compute its error on $n$ test points, and in Fig. \ref{fig:more_compare} (middle column) we plot the density of the absolute difference $|\RF(X)-\RFi(X)|$ for all $n$ OOB predictors. Also, we calculate the 0.95 quantile of each empirical distribution of the difference, say, $\hat\varepsilon_{0.95,i}, i\in[n]$. We let $\hat\varepsilon_{n,B} = \max_i \hat\varepsilon_{0.95,i}$ as an estimate of $\varepsilon_{n,B}$ given $\nu_{n,B}=0.05$. 

\paragraph{Comparison between prediction error and OOB error.} For the RF predictor, we calculate its prediction error on $n$ test points to come up with a density plot of such prediction errors. For each OOB predictor, we calculate its corresponding OOB error, and we have a density plot based on $n$ such OOB errors. In Fig. \ref{fig:more_compare} (right column), we plot both of prediction and OOB error. The similarity between them, especially on the right tail, provides much credence to the idea of constructing PIs using the OOB error. 

We train the RFs using default parameters except that the number of trees is fixed to be $B=1000$ in all cases. The training can be done within a few minutes on a laptop. 


\begin{figure}
  \centering
  \begin{subfigure}{0.3\textwidth}
  \includegraphics[width=\linewidth]{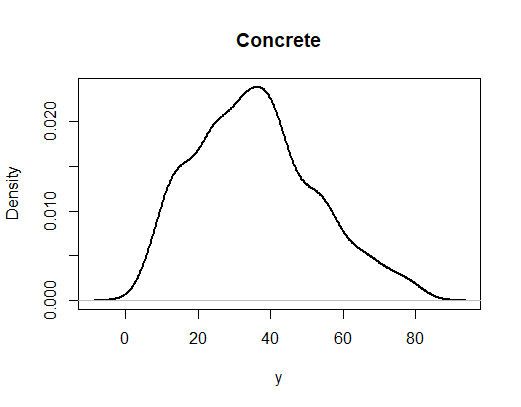}
  \end{subfigure}%
  \begin{subfigure}{0.3\textwidth}
  \includegraphics[width=\linewidth]{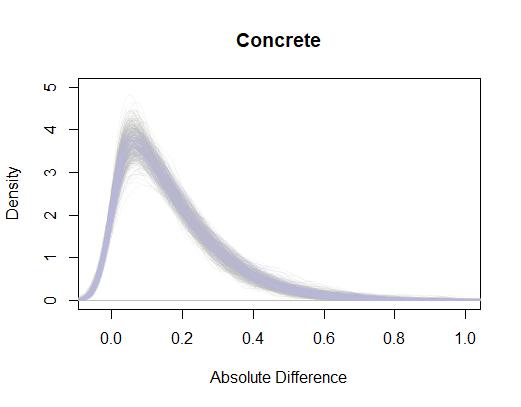}
  \end{subfigure}%
  \begin{subfigure}{0.3\textwidth}
  \includegraphics[width=\linewidth]{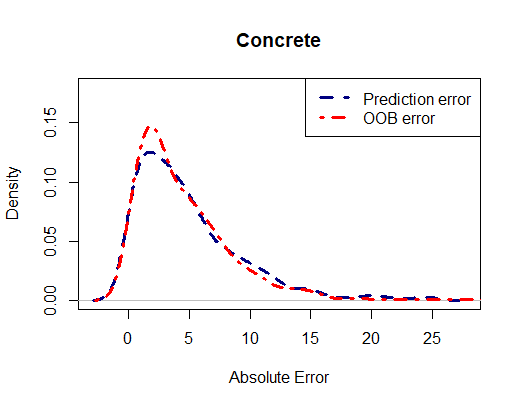}    
  \end{subfigure}\par\medskip
  \begin{subfigure}{0.3\textwidth}
  \includegraphics[width=\linewidth]{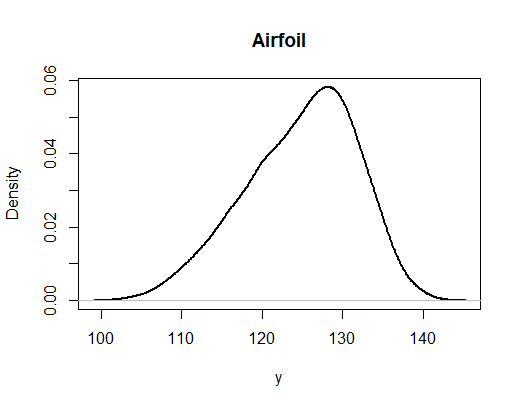}
  \end{subfigure}%
  \begin{subfigure}{0.3\textwidth}
  \includegraphics[width=\linewidth]{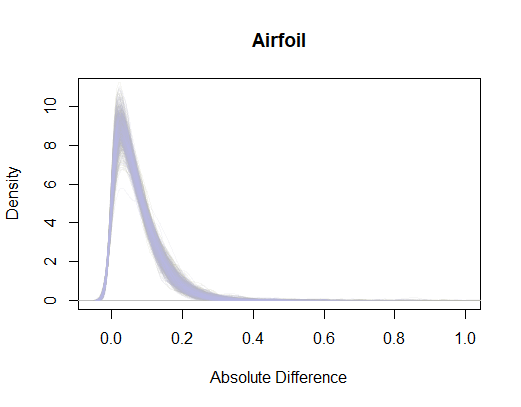}
  \end{subfigure}%
  \begin{subfigure}{0.3\textwidth}
  \includegraphics[width=\linewidth]{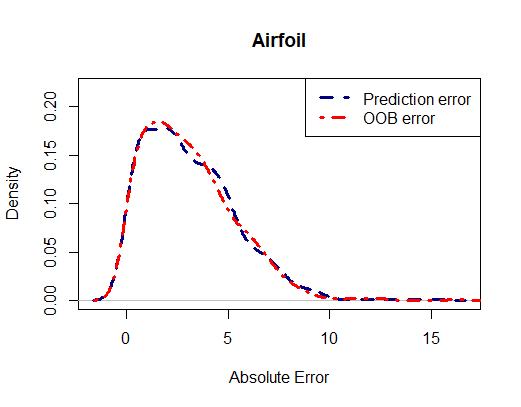}    
  \end{subfigure}\par\medskip
  \begin{subfigure}{0.3\textwidth}
  \includegraphics[width=\linewidth]{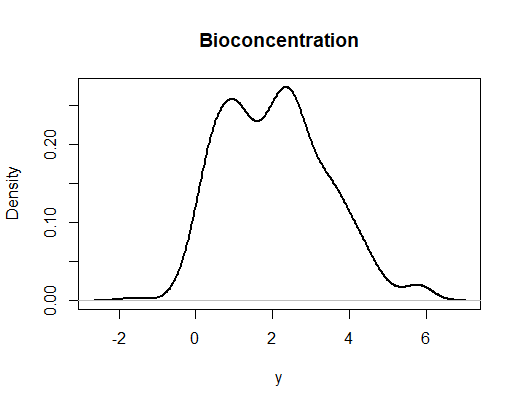}
  \end{subfigure}%
  \begin{subfigure}{0.3\textwidth}
  \includegraphics[width=\linewidth]{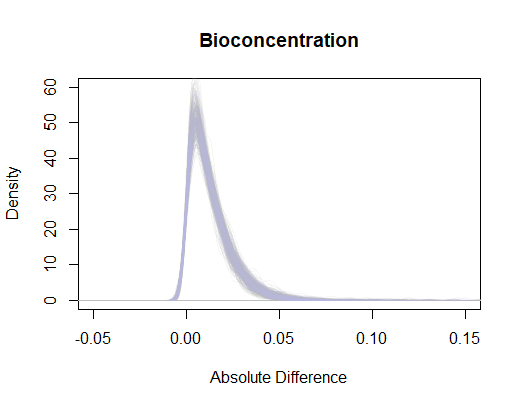}
  \end{subfigure}%
  \begin{subfigure}{0.3\textwidth}
  \includegraphics[width=\linewidth]{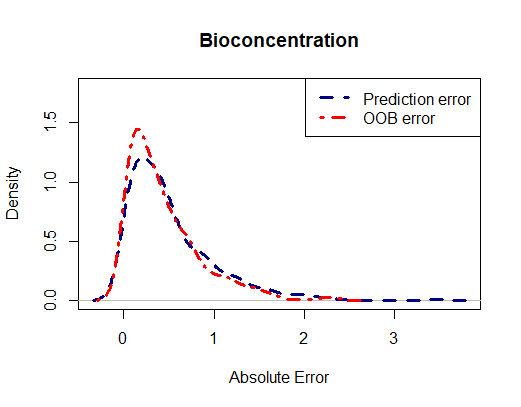}   
  \end{subfigure}\par\medskip
  \begin{subfigure}{0.3\textwidth}
  \includegraphics[width=\linewidth]{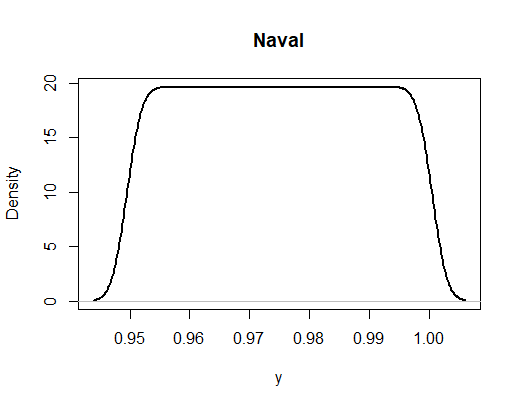}
  \end{subfigure}%
  \begin{subfigure}{0.3\textwidth}
  \includegraphics[width=\linewidth]{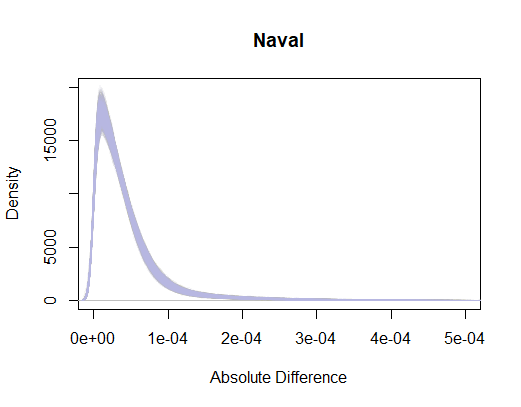}
  \end{subfigure}%
  \begin{subfigure}{0.3\textwidth}
  \includegraphics[width=\linewidth]{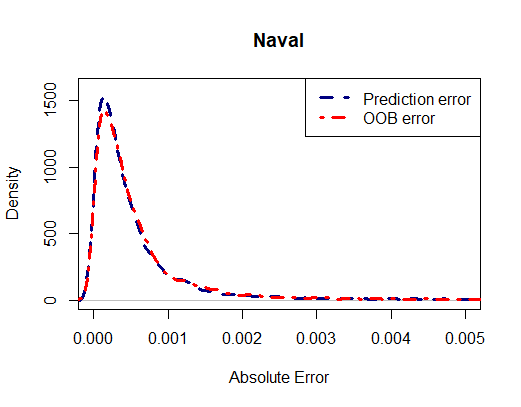}    
  \end{subfigure}
  \caption{Left column: Density plots of $Y$. Middle column: Density plots of $|\RF(X)-\RFi(X)|$. Right column: Density plots of $|Y-\RF(X)|$ and $|Y_i-\RFi(X_i)|$. Row 1: \texttt{Concrete} dataset. Numerically, we find $\hat \varepsilon_{n,B}\approx0.62, \hat \nu_{n,B}=0.05$ for $n=515$. Row 2: \texttt{Airfoil} dataset with $\hat \varepsilon_{n,B}\approx0.25, \hat \nu_{n,B}=0.05$ for $n=751$. Row 3: \texttt{Bioconcentration} dataset with $\hat \varepsilon_{n,B}=0.05, \hat \nu_{n,B}\approx0.05$ for $n=389$. Row 4: \texttt{Naval} dataset with $\hat \varepsilon_{n,B}\approx0.00016, \hat \nu_{n,B}=0.05$ for $n=5967$. }
  \label{fig:more_compare}
\end{figure}


\section{Proof of Theorem \ref{thm:PI_lower_bound}}
Recall that each of $\varepsilon_{n,B}$ and $\nu_{n,B}$ can be written as the sum of three terms: \[\varepsilon_{n,B}=\varepsilon_1+\varepsilon_2 + \varepsilon_3, \text{ }\nu_{n,B}=\nu_1+\nu_2 + \nu_3,\]
and we have established in the proof of Theorem \ref{thm:RF_stability} that
\begin{align*}
    &{} \p\left(|\rfall(X)-\RF(X)|>\varepsilon_1\right) \leq \nu_1,\\
    &{} \p\left(|\rfall(X)-\rfi(X)|>\varepsilon_2\right) \leq \nu_2,\\
    &{} \p\left(|\RFi(X)-\rfi(X)|>\varepsilon_3\right) \leq \nu_3.
\end{align*}
Now \emph{assume} the test point $(X,Y)\equiv(X_{n+1},Y_{n+1})$ is accessible to us, and define $n+1$ derandomized RF predictors as follows:
\[
\widetilde{\rfall}^{\backslash i}\equiv\widetilde{\rfall}^{\backslash i}((X_1,Y_1),\ldots,(X_{i-1},Y_{i-1}),(X_{i+1},Y_{i+1}),\ldots,(X_{n+1},Y_{n+1})),
\]
meaning that $\widetilde{\rfall}^{\backslash i}$ is trained on $n$ pairs of data points without $(X_i,Y_i)$. Furthermore, define for $i\in[n+1]$ that
\[
\widetilde{r}_i = |Y_i-\widetilde{\rfall}^{\backslash i}(X_i)|.
\]

\begin{lemma}\label{lem:tilde_r_n+1}
    For $\alpha\in(1/(n+1),1)$, let $\widetilde r_i, i\in[n+1]$ be defined above. Then
    \[
    \p\left(\widetilde r_{n+1}\leq q_{n,\alpha}\{\widetilde{r}_i\}\right)\geq 1-\alpha,
    \]
where $q_{n,\alpha}\{\widetilde{r}_i\}$ denotes the $\lceil (1-\alpha)(n+1) \rceil$-th smallest value of $\{\widetilde{r}_1,\ldots,\widetilde{r}_n\}.$ If we further assume there are no ties among $\widetilde r_i$, then we also have
    \[
    \p\left(\widetilde r_{n+1}\leq q_{n,\alpha}\{\widetilde{r}_i\}\right)\leq 1-\alpha+ \frac{1}{n+1}.
    \]
\end{lemma}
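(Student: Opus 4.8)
## Proof proposal for Lemma \ref{lem:tilde_r_n+1}

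\textbf{The plan is to} recognize that the quantities $\widetilde r_1,\dots,\widetilde r_{n+1}$ form an exchangeable collection, and then invoke the standard quantile lemma that underlies split/jackknife-type conformal arguments. The crucial structural observation is that each $\widetilde r_i = |Y_i - \widetilde\rfall^{\backslash i}(X_i)|$ is obtained by a \emph{symmetric} function applied to the full extended dataset $\overline D = \{(X_1,Y_1),\dots,(X_{n+1},Y_{n+1})\}$: namely, $\widetilde r_i$ depends on the point $(X_i,Y_i)$ together with the \emph{set} (not the ordered tuple) of the remaining $n$ points, since the derandomized RF $\widetilde\rfall^{\backslash i}$ averages out all innate and resampling randomness and therefore does not depend on the order in which the $n$ training points are presented. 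Consequently, applying any permutation $\pi$ of $[n+1]$ to the data indices simply permutes $(\widetilde r_1,\dots,\widetilde r_{n+1})$ by $\pi$. Because the data points are iid, hence exchangeable, the vector $(\widetilde r_1,\dots,\widetilde r_{n+1})$ is exchangeable.

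\textbf{Next I would} turn exchangeability into the stated probability bounds. For the lower bound, consider the rank of $\widetilde r_{n+1}$ among $\{\widetilde r_1,\dots,\widetilde r_{n+1}\}$. The event $\{\widetilde r_{n+1} \le q_{n,\alpha}\{\widetilde r_i\}\}$, where $q_{n,\alpha}\{\widetilde r_i\}$ is the $\lceil(1-\alpha)(n+1)\rceil$-th smallest of the $n$ values $\widetilde r_1,\dots,\widetilde r_n$, is implied by the event that $\widetilde r_{n+1}$ is among the $\lceil(1-\alpha)(n+1)\rceil$ smallest of \emph{all} $n+1$ values. A short combinatorial check: if $\widetilde r_{n+1}$ has rank $\le k := \lceil(1-\alpha)(n+1)\rceil$ among all $n+1$ values, then at most $k-1$ of the original $\widetilde r_1,\dots,\widetilde r_n$ are strictly below it, so $\widetilde r_{n+1}$ is at most the $k$-th smallest of those $n$ values, i.e. $\widetilde r_{n+1}\le q_{n,\alpha}\{\widetilde r_i\}$. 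By exchangeability, the rank of $\widetilde r_{n+1}$ is (sub-)uniform on $\{1,\dots,n+1\}$ in the sense that $\p(\mathrm{rank} \le k)\ge k/(n+1)$, and $k/(n+1) = \lceil(1-\alpha)(n+1)\rceil/(n+1)\ge 1-\alpha$, giving the lower bound. The footnote convention (taking the $(n+1)$-th smallest as $\infty$ when $k > n$, i.e. $\alpha < 1/(n+1)$) makes the statement hold trivially in that regime, though the lemma as stated restricts to $\alpha \in (1/(n+1),1)$.

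\textbf{For the upper bound}, under the no-ties assumption the rank of $\widetilde r_{n+1}$ is \emph{exactly} uniform on $\{1,\dots,n+1\}$, so $\p(\mathrm{rank}\le k) = k/(n+1)$. Moreover, with distinct values the converse inclusion also goes through up to the off-by-one term: $\{\widetilde r_{n+1}\le q_{n,\alpha}\{\widetilde r_i\}\}$ forces the rank of $\widetilde r_{n+1}$ among all $n+1$ values to be at most $k+1$ (it can be beaten by at most the $k$ values of the original sample that are $\le q_{n,\alpha}$, plus itself), whence $\p(\widetilde r_{n+1}\le q_{n,\alpha}\{\widetilde r_i\}) \le (k+1)/(n+1) \le (1-\alpha) + 1/(n+1) + 1/(n+1)$; one then sharpens this to the claimed $1-\alpha + 1/(n+1)$ by being slightly more careful with the ceiling, namely $k = \lceil(1-\alpha)(n+1)\rceil < (1-\alpha)(n+1)+1$, so $k/(n+1) < 1-\alpha + 1/(n+1)$, and the rank of $\widetilde r_{n+1}$ is at most $k$ exactly on the event in question (since $\widetilde r_{n+1} \le$ the $k$-th smallest of the other $n$ means at most $k-1$ of them are strictly less, so $\widetilde r_{n+1}$ has rank $\le k$ overall). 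Thus $\p(\widetilde r_{n+1} \le q_{n,\alpha}\{\widetilde r_i\}) \le k/(n+1) < 1-\alpha + 1/(n+1)$, which is even slightly stronger than asserted.

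\textbf{The main obstacle} I anticipate is not any deep inequality but the careful bookkeeping of the two inclusions relating ``$\widetilde r_{n+1} \le$ the $k$-th order statistic of the $n$ in-sample values'' and ``$\widetilde r_{n+1}$ has rank $\le k$ among all $n+1$ values,'' getting the ceiling/floor and the $\pm 1$ exactly right in both directions; and, on the conceptual side, making fully explicit \emph{why} each $\widetilde r_i$ is a symmetric function of $\overline D\setminus\{(X_i,Y_i)\}$ together with $(X_i,Y_i)$ — this rests squarely on $\widetilde\rfall^{\backslash i}$ being the \emph{derandomized} forest, so that the bootstrap resampling and split randomness, which would otherwise break the permutation symmetry across training points, have been integrated out. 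Once that symmetry is in hand, the rest is the classical conformal quantile argument.
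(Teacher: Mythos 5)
Your proof is correct and follows essentially the same route as the paper's: exchangeability of $(\widetilde r_1,\ldots,\widetilde r_{n+1})$ plus (sub-)uniformity of the rank of $\widetilde r_{n+1}$, with the lower bound from $\p(\mathrm{rank}\le k)\ge k/(n+1)\ge 1-\alpha$ and the upper bound from exact uniformity under no ties giving $k/(n+1)\le 1-\alpha+1/(n+1)$. Your explicit justification of exchangeability via the permutation-invariance of the derandomized forest is a detail the paper takes for granted, but it does not change the argument.
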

\begin{proof}
    This proof is provided for completeness. For the first part, note that all $\widetilde r_i$ are exchangeable. Hence the rank of each $\widetilde r_i$ is uniformly distributed on $[n+1]$, and 
    \begin{align*}
        \p\left(\widetilde r_{n+1} > q_{n,\alpha}\{\widetilde r_i\}\right) 
        &{} \leq \frac{n+1 - \lceil (1-\alpha)(n+1) \rceil}{n+1}\\
        &{} = \frac{\lfloor \alpha(n+1) \rfloor}{n+1} && (\text{note } n+1 = \lceil(n+1)(1-\alpha)\rceil + \lfloor(n+1)\alpha\rfloor)\\
        &{} \leq \frac{\alpha(n+1)}{n+1}\\
        &{} =\alpha.
    \end{align*}
    For the second part, if there are no ties among all $\{\widetilde r_i\}$, then we have 
    \[
      \p(\widetilde r_{n+1}<\widetilde r_{(1)}) = \p\left(\widetilde r_{n+1} \text{ is the smallest among }\{\widetilde r_i\}_{i=1}^{n+1}\right) = \frac{1}{n+1},    
    \]
    and for each $i\in[n-1]$, we have
    \begin{align*}
        \p\left(\widetilde r_{(i)}<\widetilde r_{n+1}<\widetilde r_{(i+1)}\right) = \p\left(\widetilde r_{n+1} \text{ is the $(i+1)$-th smallest among } \{\widetilde r_i\}_{i=1}^{n+1}\right) = \frac{1}{n+1}.
    \end{align*}
    As a result,
    \begin{align*}
        \p\left(\widetilde r_{n+1}\leq q_{n,\alpha}\{\widetilde r_i\}\right) 
        & {} = \p\left(\widetilde r_{n+1}\leq \widetilde r_{(\lceil(1-\alpha)(n+1)\rceil)}\right)\\
        &{} = \p\left(\widetilde r_{n+1}< \widetilde r_{(\lceil(1-\alpha)(n+1)\rceil)}\right)\\
        &{} = \p\left(\widetilde r_{n+1}< \widetilde r_{(1)}\right) + \sum_{i=1}^{\lceil(1-\alpha)(n+1)\rceil-1}\p\left(\widetilde r_{(i)}<\widetilde r_{n+1}<\widetilde r_{(i+1)}\right)\\
        &{} = \frac{\lceil(1-\alpha)(n+1)\rceil}{n+1}.
    \end{align*}
    By the definition of the ceiling function, we have 
    \[
    (1-\alpha)(n+1)\leq \lceil(1-\alpha)(n+1)\rceil\leq (1-\alpha)(n+1) + 1.
    \]
    Therefore, we conclude that if there are no ties between $\{\widetilde r_i\}$, then
    \[
    1-\alpha \leq \p\left(\widetilde r_{n+1}\leq q_{n,\alpha}\{\widetilde r_i\}\right) \leq 1-\alpha + \frac{1}{n+1},
    \]
    which completes the proof.
\end{proof}

By the first part of the above lemma and the definition of $\widetilde r_{n+1}$, we know that
\[
\p\left(
|Y_{n+1}-\widetilde{\rfall}^{\backslash (n+1)}(X_{n+1})|
\leq q_{n,\alpha}\{\widetilde{r}_i\}
\right)\geq 1-\alpha.
\]
But $\widetilde{\rfall}^{\backslash (n+1)}$ is just $\rfall$, so we have
\[
\p\left(
|Y_{n+1}-\rfall(X_{n+1})|
\leq q_{n,\alpha}\{\widetilde{r}_i\}
\right)\geq 1-\alpha.
\]
Of course, $\widetilde{r}_i$ are unknown, and we eventually will need to replace them with $R_i$, but before that, we use $r_i$ instead. Before proceeding, a useful lemma that connects $\{\widetilde{r}_i\}$, $\{r_i\}$, and $\{R_i\}$ is given below.

\begin{lemma}\label{lem:compare}
    Suppose there are $n$ pairs of real numbers $(a_i,b_i), i\in\{1,\ldots,n\}$. Let $a_{(1)}\leq\ldots\leq a_{(n)}$ and $b_{(1)}\leq\ldots\leq b_{(n)}$. For any $\varepsilon\in\R$, if  $b_{(k)} >a_{(j)}+\varepsilon$ for some $j$ and $k$, then there are at least $j-k+1$ pairs of $(a_i,b_i)$ such that $b_i > a_i+\varepsilon$. In particular, if $k = \lceil (n+1)(1-\alpha) \rceil$ and $j=\lceil (n+1)(1-\alpha') \rceil$ with $\alpha' = \alpha - \delta_\alpha$ and $\delta_\alpha\geq0$, then $j-k+1\geq (n+1)\delta_\alpha$. 
\end{lemma}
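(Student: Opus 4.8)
The plan is to prove the general statement by a counting (pigeonhole / inclusion--exclusion) argument over the $n$ pairs, and then read off the ``in particular'' clause by substituting the explicit $j$ and $k$ and estimating the ceilings.

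First I would fix $\varepsilon$ and assume the hypothesis $b_{(k)} > a_{(j)} + \varepsilon$; if $j - k + 1 \le 0$ the conclusion is vacuous, so we may assume $j \ge k$. The point to watch is that the two orderings $a_{(1)} \le \cdots \le a_{(n)}$ and $b_{(1)} \le \cdots \le b_{(n)}$ are taken \emph{independently}, so in general $a_{(i)}$ and $b_{(i)}$ belong to different pairs and no termwise comparison is available. Instead I would introduce the index sets
\[
S_b = \{\, i : b_i \ge b_{(k)} \,\}, \qquad S_a = \{\, i : a_i \le a_{(j)} \,\}.
\]
Since the sorted $b$-sequence is nondecreasing, $|S_b| \ge n - k + 1$, and similarly $|S_a| \ge j$.

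Next I would combine these by inclusion--exclusion inside $\{1,\dots,n\}$: $|S_a \cap S_b| \ge |S_a| + |S_b| - n \ge j + (n - k + 1) - n = j - k + 1$. For every $i \in S_a \cap S_b$ we have $b_i \ge b_{(k)} > a_{(j)} + \varepsilon \ge a_i + \varepsilon$, so each such pair satisfies $b_i > a_i + \varepsilon$, which gives at least $j - k + 1$ such pairs.

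Finally, for the specialization I would put $k = \lceil (n+1)(1-\alpha) \rceil$ and $j = \lceil (n+1)(1-\alpha') \rceil$ with $\alpha' = \alpha - \delta_\alpha$, $\delta_\alpha \ge 0$, and use $\lceil x \rceil \ge x$ on the $j$-term and $\lceil x \rceil \le x + 1$ on the $k$-term to obtain
\[
j - k + 1 \ge (n+1)(1-\alpha') - \big( (n+1)(1-\alpha) + 1 \big) + 1 = (n+1)(\alpha - \alpha') = (n+1)\,\delta_\alpha .
\]
The argument is elementary throughout; there is no real obstacle beyond the one cautionary point above --- routing the comparison through $S_a \cap S_b$ rather than comparing $a_{(i)}$ and $b_{(i)}$ directly --- and keeping the two ceiling estimates oriented in the correct direction so that the lower bound on $j - k + 1$ comes out right.
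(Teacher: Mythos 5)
Your proposal is correct and takes essentially the same route as the paper: the key step in both is to bound the size of the intersection $\{i: a_i \le a_{(j)}\} \cap \{i: b_i \ge b_{(k)}\}$ from below by $j-k+1$ and observe that every index in it yields a pair with $b_i > a_i + \varepsilon$. Your ceiling estimate ($\lceil x\rceil \ge x$ on the $j$-term, $\lceil x\rceil \le x+1$ on the $k$-term) is a slightly more direct way to get $(n+1)\delta_\alpha$ than the paper's floor/ceiling identity, but it is an equivalent calculation.
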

\begin{remark}
    It might be more convenient to state the lemma as that if $q_{n,\alpha}\{b_i\}>q_{n,\alpha'}\{a_i\}+\varepsilon$ with $\alpha'=\alpha-\delta_\alpha$, then there are at least $(n+1)\delta_\alpha$ pairs of $(a_i,b_i)$ such that $b_i>a_i+\varepsilon$.
\end{remark}
\begin{proof}
    This result was directly obtained in \cite{barber2021predictive} ``by definition of quantiles,''  while a more complete proof might be helpful. Without loss of generality, we take $\varepsilon=0$. (Otherwise, we consider $\{a_i',b_i\}$ instead, where $a_i'=a_i+\varepsilon$.) We first define three sets as follows.
    \begin{align*}
    & S_A\equiv \{i:a_i\leq a_{(j)}\},\\ 
    & S_B\equiv\{i:b_i\geq b_{(k)}\},\\
    & S_C\equiv\{i:a_i<b_i\}.
    \end{align*}
It is obvious that for any $i\in S_A\cap S_B$, we have $a_i\leq a_{(j)} < b_{(k)} \leq b_i$. Hence $i\in S_C$. That is, 
\[
S_A \cap S_B \subseteq S_C.
\]
As a consequence, $|S_A\cap S_B|\leq |S_C|$, where $|\cdot|$ denotes the cardinality of a set. Then, note that 
\[
S_A=(S_A\cap S_B)\cup(S_A\cap S_B^c),
\]
where $S_B^c$ is the complement of $S_B$, and 
\[
|S_A| = |S_A \cap S_B| + |S_A \cap S_B^c|.
\]
Therefore, we have
\[
|S_C| \geq  |S_A \cap S_B| = |S_A| - |S_A \cap S_B^c| \geq |S_A| - |S_B^c| = j - (k-1) = j - k + 1.
\]
Now consider $j=\lceil (n+1)(1-\alpha') \rceil$ and $k = \lceil (n+1)(1-\alpha) \rceil$ with $\alpha' = \alpha - \delta_\alpha$. Note for any $\beta\in(0,1)$ that $n+1 = \lceil(n+1)(1-\beta)\rceil + \lfloor(n+1)\beta\rfloor$. Then we have
\begin{align*}
    j-k+1
    & {} = \lceil (n+1)(1-\alpha') \rceil - \lceil (n+1)(1-\alpha) \rceil + 1\\
    & {} = \lfloor(n+1)\alpha\rfloor - \lfloor(n+1)\alpha'\rfloor + 1\\
    & {} = \lfloor(n+1)(\alpha'+\delta_\alpha)\rfloor - \lfloor(n+1)\alpha'\rfloor + 1\\
    & {} \geq \lfloor(n+1)\alpha'\rfloor+ \lfloor(n+1)\delta_\alpha\rfloor - \lfloor(n+1)\alpha'\rfloor + 1\\
    & {} = \lfloor(n+1)\delta_\alpha\rfloor + 1\\
    & {} \geq (n+1)\delta_\alpha,
\end{align*}
completing the proof.
\end{proof}
We are now ready to prove Theorem \ref{thm:PI_lower_bound}. Basically, we use the stability property once and the concentration of measure twice to establish probabilistic deviation bounds for $|\rfall(X)-\rfi(X)|$, $|\rfall(X)-\RF(X)|$, and $|\rfi(X)-\RFi(X)|$, respectively.
\subsection{Using the stability property to control $|\rfall(X)-\rfi(X)|$}
Following the same idea as in \cite{barber2021predictive}, we consider the event
\[
A_l = ``\text{$q_{n,\alpha}\{\widetilde r_i\} > q_{n,\alpha_2}\{r_i\}+\varepsilon_2$}", 
\]
where $\alpha_2=\alpha-\sqrt{\nu_2}$. We then have that
\begin{align*}
    1-\alpha 
    &{} \leq \p\left(
|Y_{n+1}-\rfall(X_{n+1})|
\leq q_{n,\alpha}\{\widetilde{r}_i\}
\text{ and } A_l \right) + \p\left(
|Y_{n+1}-\rfall(X_{n+1})|
\leq q_{n,\alpha}\{\widetilde{r}_i\}
\text{ and } A_l^c \right)\notag\\
&{} \leq \p(A_l) + \p\left(
\widetilde r_{n+1}
\leq q_{n,\alpha_2}\{r_i\} + \varepsilon_2
\right),
\end{align*}
where $A_l^c$ is the complement of $A_l$.
Now, we calculate $\p(A_l)$. By Lemma \ref{lem:compare}, event $A_l$ implies that there are at least $(n+1)\sqrt{\nu_2}$ pairs of $(\widetilde r_i,r_i)$ such that $\widetilde r_i > r_i+\varepsilon_2$, and thus
\begin{align*}
    \p(A_l) 
    & {} \leq \p\left(\sum_{i=1}^n\mathbb{I}\{\widetilde r_i>r_i+\varepsilon_2\}\geq (n+1)\sqrt{\nu_2}\right)\\
    &{} \leq \frac{\E\left[\sum_{i=1}^n\mathbb{I}\{\widetilde r_i>r_i+\varepsilon_2\}\right]}{(n+1)\sqrt{\nu_2}} \text{ (by Markov's inequality)}\\
    & {} =\frac{n\E\left[\mathbb{I}\{\widetilde r_i>r_i+\varepsilon_2\}\right]}{(n+1)\sqrt{\nu_2}} \text{ (by iid data)}\\
    &{} = \frac{n\p\left(\widetilde r_i>r_i+\varepsilon_2\right)}{(n+1)\sqrt{\nu_2}}\\
    &{} = \frac{n\p\left(|Y_i-\widetilde{\rfall}^{\backslash i}(X_i)|>|Y_i-\rfi(X_i)|+\varepsilon_2\right)}{(n+1)\sqrt{\nu_2}}\\
    &{} = \frac{n\p\left(|Y_i-\widetilde{\rfall}^{\backslash i}(X_i)|>|Y_i-\widetilde{\rfall}^{\backslash(n+1,i)}(X_i)|+\varepsilon_2\right)}{(n+1)\sqrt{\nu_2}} \text{ (by definitions of $\widetilde{\rfall}^{\backslash i},\rfall,\rfi$)}\\
    &{} = \frac{n\p\left(|Y_{n+1}-\widetilde{\rfall}^{\backslash (n+1)}(X_{n+1})|>|Y_{n+1}-\widetilde{\rfall}^{\backslash(i,n+1)}(X_{n+1})|+\varepsilon_2\right)}{(n+1)\sqrt{\nu_2}} \text{ (by iid data)}\\
    &{} = \frac{n\p\left(|Y_{n+1}-\rfall(X_{n+1})|>|Y_{n+1}-\rfi(X_{n+1})|+\varepsilon_2\right)}{(n+1)\sqrt{\nu_2}} \text{ (by definitions of $\widetilde{\rfall}^{\backslash i},\rfall,\rfi$)}\\
    &{} \leq \frac{n\p\left(|Y_{n+1}-\rfall(X_{n+1})-Y_{n+1}+\rfi(X_{n+1})|>\varepsilon_2\right)} {(n+1)\sqrt{\nu_2}} \text{ (because $|a|-|b|\leq |a-b|$)}\\
    &{} =\frac{n\p\left(|\rfall(X_{n+1})-\rfi(X_{n+1})|>\varepsilon_2\right)} {(n+1)\sqrt{\nu_2}} \\
    &{} \leq \frac{n\nu_2}{(n+1)\sqrt{\nu_2}} \text{ (by stability of $\rfall$)}\\
    &{} \leq \sqrt{\nu_2}.
\end{align*}
As a consequence, we have
\[
\p\left(
|Y_{n+1}-\rfall(X_{n+1})|
\leq q_{n,\alpha_2}\{r_i\} + \varepsilon_2
\right) \geq 1-\alpha-\p(A_l) \geq 1-\alpha-\sqrt{\nu_2} = 1-\alpha_2-2\sqrt{\nu_2}.
\]
That is to say, we have a reduced lower bound of coverage using an $\varepsilon_2$-inflated interval constructed from $\{r_i\}$. However, $\{r_i\}$ is unknown, and we want to further approximate $r_i$ by $R_i$.

\subsection{Using the concentration of measure to control $|\rfi(X)-\RFi(X)|$}

To this end, we similarly define another event:
\[
A_l' = ``\text{$q_{n,\alpha_2}\{r_i\} > q_{n,\alpha_3}\{R_i\}+\varepsilon_3$}",
\]
where $\alpha_3=\alpha_2-\sqrt{\nu_3}$, and we have
\begin{align*}
 1-&{}\alpha_2-2\sqrt{\nu_2}\\ 
 &{}\leq \p\left(
|Y_{n+1}-\rfall(X_{n+1})|
\leq q_{n,\alpha_2}\{r_i\} + \varepsilon_2 \text{ and } A_l'
\right) + \p\left(
|Y_{n+1}-\rfall(X_{n+1})|
\leq q_{n,\alpha_2}\{r_i\} + \varepsilon_2
\text{ and } A^{'c}_l\right)\\
&{} \leq \p(A_l') + \p\left(
|Y_{n+1}-\rfall(X_{n+1})|
\leq q_{n,\alpha_3}\{R_i\} + \varepsilon_2+\varepsilon_3\right),
\end{align*}
where $A^{'c}_l$ is the complement of $A'_l$. Applying Lemma \ref{lem:compare} again yields
\begin{align*}
    \p(A'_l)
    &{} \leq \p\left(\sum_{i=1}^n\mathbb{I}\{r_i>R_i+\varepsilon_3\}\geq (n+1)\sqrt{\nu_3}\right)\\
    &{} \leq \frac{\E\left[\sum_{i=1}^n\mathbb{I}\{r_i>R_i+\varepsilon_3\}\right]}{(n+1)\sqrt{\nu_3}} \text{ (by Markov's inequality)}\\
    &{} \leq \frac{n\p\left(r_i>R_i+\varepsilon_3\right)}{(n+1)\sqrt{\nu_3}}\text{ (by iid data)}\\
    &{} = \frac{n\p\left(|Y_i-\rfi(X_i)|>|Y_i-\RFi(X_i)|+\varepsilon_3\right)}{(n+1)\sqrt{\nu_3}}\\
    &{} \leq \frac{n\p\left(|Y_i-\rfi(X_i)-Y_i+\RFi(X_i)|>\varepsilon_3\right)}{(n+1)\sqrt{\nu_3}} \text{ (because $|a|-|b|\leq |a-b|$)}\\
    &{} = \frac{n\p\left(|\rfi(X_i)-\RFi(X_i)|>\varepsilon_3\right)}{(n+1)\sqrt{\nu_3}}\\
    &{} = \frac{n\p\left(|\rfi(X)-\RFi(X)|>\varepsilon_3\right)}{(n+1)\sqrt{\nu_3}} \text{ (because $X_i$ and $X$ are iid, and $\rfi$ and $\RFi$ do not depend on $X_i$)}\\
    &{} \leq \frac{n\nu_3}{(n+1)\sqrt{\nu_3}} \text{ (by concentration of measure)}\\
    &{} \leq \sqrt{\nu_3}.
\end{align*}
So we have
\begin{align*}
\p\left(
|Y_{n+1}-\rfall(X_{n+1})|
\leq q_{n,\alpha_3}\{R_i\} + \varepsilon_2+\varepsilon_3\right) 
&{} \geq 1-\alpha_2-2\sqrt{\nu_2}-\p(A'_l)\geq 1-\alpha_3 -2\sqrt{\nu_2} - 2\sqrt{\nu_3}.
\end{align*}

\subsection{Using the concentration of measure to control $|\rfall(X)-\RF(X)|$}
Finally, we need to replace $\rfall(X_{n+1})$ by $\RF(X_{n+1})$. Note that for $t>0$,
\begin{align*}
\p\left(
|Y-\RF(X)|> t+\varepsilon_1
\right)
&{} = \p\left(
|Y-\rfall(X)+\rfall(X)-\RF(X)|> t+\varepsilon_1
\right)\\
&{} \leq \p\left(
|Y-\rfall(X)|> t
\right) + \p\left(
|\rfall(X)-\RF(X)|> \varepsilon_1
\right)\\
&{} \leq \p\left(
|Y-\rfall(X)|> t
\right) + \nu_1,
\end{align*}
which implies
\[
\p\left(
|Y-\RF(X)|\leq t+\varepsilon_1
\right) \geq \p\left(
|Y-\rfall(X)|\leq t
\right) - \nu_1.
\]
Let $t=q_{n,\alpha_3}\{R_i\} + \varepsilon_2+\varepsilon_3$, and we arrive at
\begin{align*}
\p\left(
|Y-\RF(X)|\leq q_{n,\alpha_3}\{R_i\} + \varepsilon_2+\varepsilon_3 +\varepsilon_1
\right) 
&{} \geq \p\left(
|Y-\rfall(X)|\leq q_{n,\alpha_3}\{R_i\} + \varepsilon_2+\varepsilon_3
\right) - \nu_1\\
&{} \geq 1-\alpha_3 -2\sqrt{\nu_2} - 2\sqrt{\nu_3} -\nu_1,
\end{align*}
which completes the proof of Theorem \ref{thm:PI_lower_bound}. Moreover, for $\nu_1\in(0,1)$, we have $\sqrt{\nu_1}>\nu_1$. So
\[
2\sqrt{\nu_2} + 2\sqrt{\nu_3} + \nu_1 \leq 2(\sqrt{\nu_2}+\sqrt{\nu_3}+\sqrt{\nu_1}) = 6 \times \frac{\sqrt{\nu_2}+\sqrt{\nu_3}+\sqrt{\nu_1}}{3}\leq 6\times\sqrt{(\nu_1+\nu_2+\nu_3)/3}=2\sqrt{3}\sqrt{\nu_{n,B}}.
\]
Hence we eventually have
\[
\p\left(
|Y-\RF(X)|\leq q_{n,\alpha}\{R_i\} + \varepsilon_{n,B}
\right) \leq 1-\alpha - 2\sqrt{3}\sqrt{\nu_{n,B}},
\]
which leads to the informal version of the theorem.

\section{Proof of Theorem \ref{thm:PI_upper_bound}}

\subsection{Using the stability property to control $|\rfall(X)-\rfi(X)|$}
Since we have assumed $\{\widetilde r_i\},i\in[n+1]$ have no ties, by the second part of Lemma \ref{lem:tilde_r_n+1}, we have for the test point $(X,Y)=(X_{n+1},Y_{n+1})$ that
\[
\p\left(|Y_{n+1}-\widetilde \rfall^{\backslash (n+1)}(X_{n+1})|\leq q_{n,\alpha}\{\widetilde r_i\}\right) 
= \p\left(\widetilde r_{n+1}\leq q_{n,\alpha}\{\widetilde r_i\}\right) 
\leq 1-\alpha + \frac{1}{n+1}.
\]
Now consider the event that
\[
A_u = ``q_{n,\alpha_2}\{r_i\}-\varepsilon_2 \leq q_{n,\alpha}\{\widetilde r_i\}",
\]
where $\alpha_2=\alpha+\sqrt{\nu_2}$. We denote $A_u^c$ as the complement of $A_u$. Note that in this case the ancillary quantity $\alpha_2$ is greater than $\alpha$, while in the proof of Theorem \ref{thm:PI_lower_bound}, $\alpha_2$ is less than $\alpha$. We then have
\begin{align*}
    \p\left(\widetilde r_{n+1}\leq q_{n,\alpha_2}\{r_i\} - \varepsilon_2\right)
    &{} = \p\left(\widetilde r_{n+1}\leq q_{n,\alpha_2}\{r_i\} - \varepsilon_2 \text{ and } A_u\right) + \p\left(\widetilde r_{n+1}\leq q_{n,\alpha_2}\{r_i\} - \varepsilon_2 \text{ and } A^c_u\right)\\
    &{} \leq \p\left(\widetilde r_{n+1}\leq q_{n,\alpha_2}\{r_i\} - \varepsilon_2 \text{ and } A_u\right) + \p(A^c_u)\\
    &{} \leq \p\left(\widetilde r_{n+1}\leq q_{n,\alpha}\{\widetilde r_i\}\right) + \p(A^c_u)\\
    &{} \leq 1-\alpha+\frac{1}{n+1}+\p(A^c_u).
\end{align*}

Next, we bound $\p(A^c_u)$. By Lemma \ref{lem:compare}, event $A_u^c$ implies there are at least $(n+1)\sqrt{\nu_2}$ pairs of $(r_i,\widetilde r_i)$ such that $ r_i > \widetilde r_i+\varepsilon_2$, and thus
\begin{align*}
    \p(A_u^c) 
    & {} \leq \p\left(\sum_{i=1}^n\mathbb{I}\{ r_i>\widetilde r_i+\varepsilon_2\}\geq (n+1)\sqrt{\nu_2}\right)\\
    &{} \leq \frac{\E\left[\sum_{i=1}^n\mathbb{I}\{ r_i>\widetilde r_i+\varepsilon_2\}\right]}{(n+1)\sqrt{\nu_2}} \text{ (by Markov's inequality)}\\
    & {} =\frac{n\E\left[\mathbb{I}\{ r_i>\widetilde r_i+\varepsilon_2\}\right]}{(n+1)\sqrt{\nu_2}} \text{ (by iid data)}\\
    &{} = \frac{n\p\left(r_i>\widetilde r_i+\varepsilon_2\right)}{(n+1)\sqrt{\nu_2}}\\
    &{} = \frac{n\p\left(|Y_i-\rfi(X_i)|>|Y_i-\widetilde \rfall^{\backslash i}(X_i)|+\varepsilon_2\right)}{(n+1)\sqrt{\nu_2}}\\
    &{} = \frac{n\p\left(|Y_i-\widetilde{\rfall}^{\backslash (n+1,i)}(X_i)|>|Y_i-\widetilde{\rfall}^{\backslash i}(X_i)|+\varepsilon_2\right)}{(n+1)\sqrt{\nu_2}} \text{ (by definitions of $\widetilde{\rfall}^{\backslash i},\rfall,\rfi$)}\\
    &{} = \frac{n\p\left(|Y_{n+1}-\widetilde{\rfall}^{\backslash (i,n+1)}(X_{n+1})|>|Y_{n+1}-\widetilde{\rfall}^{\backslash(n+1)}(X_{n+1})|+\varepsilon_2\right)}{(n+1)\sqrt{\nu_2}} \text{ (by iid data)}\\
    &{} = \frac{n\p\left(|Y_{n+1}-\rfi(X_{n+1})|>|Y_{n+1}-\rfall(X_{n+1})|+\varepsilon_2\right)}{(n+1)\sqrt{\nu_2}} \text{ (by definitions of $\widetilde{\rfall}^{\backslash i},\rfall,\rfi$)}\\
    &{} \leq \frac{n\p\left(|Y_{n+1}-\rfi(X_{n+1})-Y_{n+1}+\rfall(X_{n+1})|>\varepsilon_2\right)} {(n+1)\sqrt{\nu_2}} \text{ (because $|a|-|b|\leq |a-b|$)}\\
    &{} =\frac{n\p\left(|\rfall(X_{n+1})-\rfi(X_{n+1})|>\varepsilon_2\right)} {(n+1)\sqrt{\nu_2}} \\
    &{} \leq \frac{n\nu_2}{(n+1)\sqrt{\nu_2}} \text{ (by stability of $\rfall$)}\\
    &{} \leq \sqrt{\nu_2}.
\end{align*}
Hence we have
\begin{align*}
    \p\left(\widetilde r_{n+1}\leq q_{n,\alpha_2}\{r_i\} - \varepsilon_2\right) \leq 1-\alpha+\frac{1}{n+1}+\sqrt{\nu_2}=1-\alpha_2+\frac{1}{n+1}+2\sqrt{\nu_2}.
\end{align*}
Note that this proof also works for a general stable algorithm, as stated in Corollary \ref{cor:general_stable_upper_bound}.

\subsection{Using the concentration of measure to control $|\rfi(X)-\RFi(X)|$}
We further define an event
\[
A'_u = ``q_{n,\alpha_3}\{R_i\}-\varepsilon_3\leq q_{n,\alpha_2}\{r_i\}",
\]
where $\alpha_3=\alpha_2+\sqrt{\nu_3}$. We denote $A^{'c}_u$ as the complement of $A'_u$. Again, we increase rather than decrease $\alpha_3$ in this case, as opposed to in the proof of the lower bound. 
We then have
\begin{align*}
    \p(\widetilde r_{n+1} &{} \leq q_{n,\alpha_3}\{R_i\} - \varepsilon_2-\varepsilon_3)\\
    &{} = \p\left(\widetilde r_{n+1}\leq q_{n,\alpha_3}\{R_i\} - \varepsilon_2 -\varepsilon_3 \text{ and } A'_u\right) + \p\left(\widetilde r_{n+1}\leq q_{n,\alpha_3}\{R_i\} - \varepsilon_2 -\varepsilon_3 \text{ and } A^{'c}_u\right)\\
    &{} \leq \p\left(\widetilde r_{n+1}\leq q_{n,\alpha_3}\{R_i\} - \varepsilon_2 -\varepsilon_3 \text{ and } A'_u\right) + \p(A^{'c}_u)\\
    &{} \leq \p\left(\widetilde r_{n+1}\leq q_{n,\alpha_2}\{r_i\}-\varepsilon_2\right) + \p(A^{'c}_u)\\
    &{} \leq 1-\alpha_2+\frac{1}{n+1}+2\sqrt{\nu_2}+\p(A^{'c}_u). 
\end{align*}
By Lemma \ref{lem:compare}, $A^{'c}_u$ implies there exist at least $(n+1)\sqrt{\nu_3}$ pairs of $(R_i,r_i)$ such that $R_i>r_i+\varepsilon_3$, and thus 
\begin{align*}
    \p(A^{'c}_u)
    &{} \leq \p\left(\sum_{i=1}^n\mathbb{I}\{R_i>r_i+\varepsilon_3\}\geq (n+1)\sqrt{\nu_3}\right)\\
    &{} \leq \frac{\E\left[\sum_{i=1}^n\mathbb{I}\{R_i>r_i+\varepsilon_3\}\right]}{(n+1)\sqrt{\nu_3}}\text{ (by Markov's inequality)}\\
    &{} = \frac{n\p\left(R_i>r_i+\varepsilon_3\right)}{(n+1)\sqrt{\nu_3}}\text{ (by iid data)}\\
    &{} = \frac{n\p\left(|Y_i-\RFi(X_i)|>|Y_i-\rfi(X_i)|+\varepsilon_3\right)}{(n+1)\sqrt{\nu_3}}\\
    &{} \leq \frac{n\p\left(|Y_i-\RFi(X_i)-Y_i+\rfi(X_i)|>\varepsilon_3\right)}{(n+1)\sqrt{\nu_3}} \text{ (because $|a|-|b|\leq |a-b|$)}\\
    &{} = \frac{n\p\left(|\rfi(X_i)-\RFi(X_i)|>\varepsilon_3\right)}{(n+1)\sqrt{\nu_3}}\\
    &{} = \frac{n\p\left(|\rfi(X)-\RFi(X)|>\varepsilon_3\right)}{(n+1)\sqrt{\nu_3}} \text{ (because $X_i$ and $X$ are iid, and $\rfi$ and $\RFi$ do not depend on $X_i$)}\\
    &{} \leq \frac{n\nu_3}{(n+1)\sqrt{\nu_3}} \text{ (by concentration of measure)}\\
    &{} \leq \sqrt{\nu_3}.
\end{align*}
We then have
\[
\p(\widetilde r_{n+1} \leq q_{n,\alpha_3}\{R_i\} - \varepsilon_2-\varepsilon_3)\leq 1-\alpha_3+\frac{1}{n+1}+2\sqrt{\nu_2}+2\sqrt{\nu_3}.
\]

\subsection{Using the concentration of measure to control $|\rfall(X)-\RF(X)|$}
We write $R_{n+1}=|Y_{n+1}-\RF(X_{n+1})|$. Then 
\begin{align*}
    \p(R_{n+1} 
    &{} \leq q_{n,\alpha_3}\{R_i\}-\varepsilon_1-\varepsilon_2-\varepsilon_3)\\
    &{} = \p\left(R_{n+1}-\widetilde r_{n+1} + \widetilde r_{n+1}\leq q_{n,\alpha_3}\{R_i\}-\varepsilon_1-\varepsilon_2-\varepsilon_3\right)\\
    &{} \leq \p\left(R_{n+1}-\widetilde r_{n+1} < -\varepsilon_1\right) + \p\left(\widetilde r_{n+1}\leq q_{n,\alpha_3}\{R_i\}-\varepsilon_2-\varepsilon_3\right)\\
    &{} \leq \p\left(|R_{n+1}-\widetilde r_{n+1}| > \varepsilon_1\right) + 1-\alpha_3+\frac{1}{n+1} + 2\sqrt{\nu_2} +2\sqrt{\nu_3}\\
    &{} = \p\left(\left||Y_{n+1}-\RF(X_{n+1})|-|Y_{n+1}-\rfall(X_{n+1})|\right| > \varepsilon_1\right) + 1-\alpha_3+\frac{1}{n+1} + 2\sqrt{\nu_2} +2\sqrt{\nu_3}\\
    &{} \leq \p\left(|\rfall(X)-\RF(X)| > \varepsilon_1\right) + 1-\alpha_3+\frac{1}{n+1} + 2\sqrt{\nu_2} +2\sqrt{\nu_3}\text{ (because $||a|-|b||\leq |a-b|$)}\\
    &{} \leq 1-\alpha_3+\frac{1}{n+1} + \nu_1 + 2\sqrt{\nu_2} +2\sqrt{\nu_3}, \text{   (by concentration of measure)}
\end{align*}
which completes the proof. Again, for $\nu_1\in(0,1)$, the upper bound is $1-\alpha_3+\frac{1}{n+1}+O(\sqrt{\nu_{n,B}})$.

The proof of Theorem \ref{thm:PI_lower_bound} bears some resemblance to that of Theorem \ref{thm:PI_upper_bound} because of the symmetry consideration. For example, ``$\rfall(X)$ is close to $\rfi(X)$'' also means ``$\rfi(X)$ is close to $\rfall(X)$.'' Hence the probabilistic deviation bounds established above apply in two directions, and the upper bound can be proved. Still, we use the stability property once and the concentration of measure twice.

\section{Proof of Theorem \ref{thm:PI_asymptotic}}
We start from $\p(|Y-\RF(X)|\leq q_{n,\alpha}\{R_i\}+\varepsilon_{n,B})$. Denote $R=|Y-\RF(X)|$, and let its CDF be $F_n$, where we explicitly use $n$ to denote the size of the training set. Then we can rewrite $\p(|Y-\RF(X)|\leq q_{n,\alpha}\{R_i\}+\varepsilon_{n,B})$ as
\[
\p(|Y-\RF(X)|\leq q_{n,\alpha}\{R_i\}+\varepsilon_{n,B}) = \E[F_n(q_{n,\alpha}\{R_i\}+\varepsilon_{n,B})],
\]
where the expectation is with respect to $q_{n,\alpha}\{R_i\}$, which is random. By the assumption, when $n\geq n_0$, the family $\{F_n(t)\}_{n\geq n_0}$ is uniformly equicontinuous, which means that for any $\nu>0$, there exists some $\delta>0$ that is independent of $t$ and $n$, such that $|F_n(t')-F_n(t)|\leq \nu$ for all $|t'-t|\leq \delta$ and $n\geq n_0$. Then we have for $n\geq n_0$ and $\nu>0$ that
\begin{align*}
    & F_n(q_{n,\alpha}\{R_i\} +\varepsilon_{n,B}) - F_n(q_{n,\alpha}\{R_i\}) \\
    = {} & F_n(q_{n,\alpha}\{R_i\} +\varepsilon_{n,B})
    - F_n(q_{n,\alpha}\{R_i\} + \delta)
    + F_n(q_{n,\alpha}\{R_i\} + \delta)
    - F_n(q_{n,\alpha}\{R_i\})\\
    \leq {} & \sup_t \left[ F_n(t +\varepsilon_{n,B})
    - F_n(t + \delta) \right]
    + \sup_t \left [ F_n(t + \delta)
    - F_n(t) \right]\\
    \leq {} & \sup_t \left[ F_n(t +\varepsilon_{n,B})
    - F_n(t + \delta) \right] + \nu.
\end{align*}
However, $F_n(t)$ as a CDF is monotonically increasing, and for any $t$, $F_n(t +\varepsilon_{n,B}) - F_n(t + \delta) \leq 0$ as long as $\varepsilon_{n,B} \leq \delta$. By Corollary \ref{cor:epsilon_nu_to_0}, $\lim_{n\to\infty}\varepsilon_{n,B}=0$. So there exists some $n_1$ such that when $n\geq n_1$, we have $\varepsilon_{n,B} \leq \delta$.  Hence we conclude that for any $\nu>0$, for all $n\geq \max\{n_0,n_1\}$
\[
0\leq F_n(q_{n,\alpha}\{R_i\} +\varepsilon_{n,B}) - F_n(q_{n,\alpha}\{R_i\}) \leq \nu \text{ a.s.,}
\]
implying that
\[
\lim_{n\to\infty} \{F_n(q_{n,\alpha}\{R_i\} +\varepsilon_{n,B}) - F_n(q_{n,\alpha}\{R_i\})\} = 0 \text{ a.s.}
\]
As $F_n$ is bounded, then by the bounded dominance theorem, we have
\[
\lim_{n\to\infty} \E[F_n(q_{n,\alpha}\{R_i\} +\varepsilon_{n,B}) - F_n(q_{n,\alpha}\{R_i\})] =\E[\lim_{n\to\infty} \{F_n(q_{n,\alpha}\{R_i\} +\varepsilon_{n,B}) - F_n(q_{n,\alpha}\{R_i\})\}] =  0.
\]
Under the conditions of Corollary \ref{cor:epsilon_nu_to_0}, both $\varepsilon_{n,B}$ and $\nu_{n,B}$ go to 0 as $n\to\infty$, and the lower bound (\ref{eq:PI_lower_bound}) implies that
\begin{align*}
    1-\alpha &{} \leq \liminf_{n\to\infty} \E[F_n(q_{n,\alpha}\{R_i\}+\varepsilon_{n,B})]\\
    &{} =
    \liminf_{n\to\infty} \E[F_n(q_{n,\alpha}\{R_i\}+\varepsilon_{n,B}) - F_n(q_{n,\alpha}\{R_i\}) + F_n(q_{n,\alpha}\{R_i\})] \\
    &{} = \lim_{n\to\infty} \E[F_n(q_{n,\alpha}\{R_i\} +\varepsilon_{n,B}) - F_n(q_{n,\alpha}\{R_i\})] + \liminf_{n\to\infty} \E[F_n(q_{n,\alpha}\{R_i\})]\\
    &{} = \liminf_{n\to\infty} \E[F_n(q_{n,\alpha}\{R_i\})].
\end{align*}
Similarly, the upper bound (\ref{eq:PI_upper_bound}) implies that
\begin{align*}
    1-\alpha &{} \geq \limsup_{n\to\infty} \E[F_n(q_{n,\alpha}\{R_i\}+\varepsilon_{n,B})]\\
    &{} =
    \limsup_{n\to\infty} \E[F_n(q_{n,\alpha}\{R_i\}+\varepsilon_{n,B}) - F_n(q_{n,\alpha}\{R_i\}) + F_n(q_{n,\alpha}\{R_i\})] \\
    &{} = \lim_{n\to\infty} \E[F_n(q_{n,\alpha}\{R_i\} +\varepsilon_{n,B}) - F_n(q_{n,\alpha}\{R_i\})] + \limsup_{n\to\infty} \E[F_n(q_{n,\alpha}\{R_i\})]\\
    &{} = \limsup_{n\to\infty} \E[F_n(q_{n,\alpha}\{R_i\})]. 
\end{align*}
Combining these results, we have
\[
1-\alpha\leq \liminf_{n\to\infty} \E[F_n(q_{n,\alpha}\{R_i\})] \leq \limsup_{n\to\infty} \E[F_n(q_{n,\alpha}\{R_i\})]\leq 1-\alpha.
\]
Thus,
\[
\lim_{n\to\infty}\p(R\leq q_{n,\alpha}\{R_i\}) = \lim_{n\to\infty} \E[F_n(q_{n,\alpha}\{R_i\})] =  1-\alpha,
\]
completing the proof.

\end{document}